\begin{document}

\title{Understanding Boltzmann Machine and Deep Learning via A Confident Information First Principle}

\author{\name Xiaozhao Zhao \email 0.25eye@gmail.com \\
       \addr School of Computer Science and Technology, Tianjin University \\Tianjin, 300072, China
       \AND
       \name Yuexian Hou \email krete1941@gmail.com \\
       \addr School of Computer Science and Technology, Tianjin University\\Tianjin, 300072, China
       \AND
       \name Qian Yu \email yqcloud@gmail.com \\
       \addr School of Computer Software, Tianjin University\\Tianjin, 300072, China
       \AND
       \name Dawei Song \email dawei.song2010@gmail.com \\
       \addr School of Computer Science and Technology, Tianjin University\\Tianjin, 300072, China \\
       and Department of Computing, The Open University\\Milton Keynes, MK7 6AA, UK
       \AND
       \name Wenjie Li \email cswjli@comp.polyu.edu.hk \\
       \addr Department of Computing, The Hong Kong Polytechnic University\\
       Hung Hom, Kowloon, Hong Kong, China
       }

\editor{}

\maketitle

\begin{abstract}
Typical dimensionality reduction methods focus on directly reducing the number of random variables while retaining maximal variations in the data. In this paper, we consider the dimensionality reduction in parameter spaces of binary multivariate distributions. We propose a general Confident-Information-First (CIF) principle to maximally preserve parameters with confident estimates and rule out unreliable or noisy parameters. Formally, the confidence of a parameter can be assessed by its Fisher information, which establishes a connection with the inverse variance of any unbiased estimate for the parameter via the Cram\'{e}r-Rao bound. We then revisit Boltzmann machines (BM) and theoretically show that both single-layer BM without hidden units (SBM) and restricted BM (RBM) can be solidly derived using the CIF principle. This can not only help us uncover and formalize the essential parts of the target density that SBM and RBM capture, but also suggest that the deep neural network consisting of several layers of RBM can be seen as the layer-wise application of CIF. Guided by the theoretical analysis, we develop a sample-specific CIF-based contrastive divergence (CD-CIF) algorithm for SBM and a CIF-based iterative projection procedure (IP) for RBM. Both CD-CIF and IP are studied in a series of density estimation experiments.
\end{abstract}

\begin{keywords}
  Boltzmann Machine, Deep Learning, Information Geometry, Fisher Information, Parametric Reduction
\end{keywords}

\section{Introduction}\label{sec:intro}
Recently, deep learning models (e.g., Deep Belief Networks (DBN)\citep{hinton06}, Stacked Denoising Auto-encoder \citep{ranzato07autoencoder}, Deep Boltzmann Machine (DBM) \citep{Salakhutdinov2012} and etc.) have drawn increasing attention due to their impressive results in various application areas, such as computer vision \citep{Bengio06greedy,ranzato07autoencoder,osindero07image}, natural language processing \citep{Collobert08nlp} and information retrieval \citep{Salakhutdinov07kernel, Salakhutdinov07sigir}.
Despite of these practical successes, there have been debates on the fundamental principle of the design and training of those deep architectures.
One important problem is the \emph{unsupervised pre-training}, which would fit the parameters to better capture the structure in the input distribution \citep{bengio2010why}. From the probabilistic modeling perspective, this process can be interpreted as an attempt to recover a set of model parameters for a generative neural network that would well describe the distribution underlying the observed high-dimensional data.
In general, to sufficiently depict the original high-dimensional data requires a high-dimensional parameter space.
However, overfitting usually occur when the model is excessively complex.
On the other hand, \citet{Bengio13big} empirically shows the failure of some big neural networks in leveraging the added capacity to reduce underfitting.

Hence it is important to uncover and understand \emph{what the first principle would be on reducing the dimensionality of the parameter space concerning the sample size}.
This question is also recognized by \citet{bengio2010why}. They empirically show that the unsupervised pre-training acts as a regularization on parameters in a way that the parameters are set in a region, from which better basins of attraction can be reached. The regularization on parameters could be seen as a kind of dimensionality reduction procedure on parameter spaces by restricting those parameters in a desired region. However, the intrinsic mechanisms behind the regularization process are still unclear. Thus, further theoretical justifications are needed in order to formally analyze what the essential parts of the target density that the neural networks can capture. An indepth investigation into this question will lead to two significant results: 1) a formal explanation on \emph{what} exactly the neural networks would perform in the pre-training process; 2) some theoretical insights on \emph{how} to do it better.

\citet{samplecomplexity} empirically show that the sampling density of a given dataset and the resulting complexity of a learning problem are closely interrelated. If the initial sampling density is insufficient, this may result in a preferred model of a
lower complexity, so that we have a satisfactory sampling to estimate model parameters. On the other hand, if the number of samples is originally abundant, the preferred model may become more complex so that we could have represented the dataset in more details.
Moreover, this connection becomes more complicated if the observed samples contain noises. Now the obstacle is how to incorporate this relationship between the dataset and the preferred model into the learning process.
In this paper, we mainly focus on analyzing the Boltzmann machines, the main building blocks of many neural networks, from a novel information geometry (IG)\citep{Amari93} perspective.

Assuming there exists an ideal parametric model $S$ that is general enough to represent all system phenomena, our goal of the parametric reduction is to derive a lower-dimensional sub-model $M$ for a given dataset (usually insufficient or perturbed by noises) by reducing the number of free parameters in $S$. In this paper, we propose a \emph{Confident-Information-First} (CIF) principle to maximally preserve the parameters with highly confident estimates and rule out the unreliable or noisy parameters with respect to the density estimation of binary multivariate distributions. From the IG point of view, the \emph{confidence}\footnote{Note that, in this paper, the meaning of \emph{confidence} is different from the common concept \emph{degree of confidence} in statistics.} of a parameter can be assessed by its \emph{Fisher information} \citep{Amari93}, which establishes a connection with the inverse variance of any unbiased estimate for the considered parameter via the Cram\'{e}r-Rao bound \citep[see][]{rao45attainable}.
It is worth emphasizing that the proposed CIF as a principle of parametric reduction is fundamentally different from the traditional feature reduction (or feature extraction) methods \citep{Fodor02asurvey,lee07nonlinear}. The latter focus on directly reducing the dimensionality on feature space by retaining maximal variations in the data, e.g., Principle Components Analysis (PCA) \citep{abdi10pcareview}, while CIF offers a principled method to
deal with high-dimensional data in the parameter spaces by a strategy that is universally derived from the first principle, independent of the geometric metric in the feature spaces.

The CIF takes an information-oriented viewpoint of statistical machine learning. The \emph{information}\footnote{There are many kinds of ``information'' defined for a probability distribution $p(x)$, e.g., entropy, Fisher information. The entropy is a global measure of smoothness in $p(x)$. Fisher information is a local measure that is sensitive to local rearrangement of points $(x,p(x))$.} is rooted on the variations (or fluctuations) in the \emph{imperfect} observations (due to insufficient sampling, noise and intrinsic limitations of the ``observer'') and transmits throughout the whole learning process. This idea is also well recognized in modern physics, as stated in \citet{wheeler94}:
``All things physical are information-theoretic in origin and this is a participatory universe...Observer participancy gives rise to information; and information gives rise to physics.''.
Following this viewpoint, \citet{scienceoffisherinformation} unifies the derivation of physical laws in major fields of physics, from the Dirac equation to the Maxwell-Boltzmann velocity dispersion law, using the
extreme physical information principle (EPI). The information used in this unification is exactly the Fisher information \citep{rao45attainable}, which measures the quality of any measurement(s).

In terms of statistical machine learning, the aim of this paper is of two folds: a) to incorporate the Fisher information into the modelling of the intrinsic variations in the data that give rise to the desired model, by using the IG framework \citep{Amari93}; b) to show by examples that some existing probabilistic models, e.g., SBM and RBM, comply with the CIF principle and can be derived from it.
The main contributions are:
\vspace{-1mm}
\begin{enumerate}
  \item We propose a \emph{general CIF} principle for parameter reduction to maximally preserve the parameters of high confidence and eliminate the unreliable parameters of binary multivariate distributions in the framework of IG. We also give a geometric interpretation of CIF by showing that it can maximally preserve the expected information distance.
  \item The implementation of CIF, that is, the derivation of probabilistic models, is illustrated by revisiting two widely used Boltzmann machines, i.e., Single layer BM without hidden units (SBM) and restricted BM (RBM). And the deep neural network consisting of several layers of RBM can be seen as the layer-by-layer application of CIF.
  \item Based on the above theoretical analysis, a CIF-based iterative projection procedure (IP) inherent in the learning of RBM is uncovered. And traditional gradient-based methods, e.g., maximum-likelihood (ML) and contrastive divergence (CD) \citep{hinton05CD}, can be seen as approximations of IP. Experimental results indicate that IP is more robust against sampling biases, due to its separation of the positive sampling process and the gradient estimation.
  \item Beyond the general CIF, we propose a \emph{sample-specific} CIF and integrate it into the CD algorithm for SBM to confine the parameter space to a confident parameter region indicated by samples. It leads to a significant improvement in a series of density estimation experiments, when the sampling is insufficient.
\end{enumerate}

The rest of the paper is organized as follows: Section \ref{sec:theoryIG} introduces some preliminaries of IG. Then the general CIF principle is proposed in Section \ref{sec:cif}. In Section \ref{sec:implementationCIF}, we analyze two implementations of CIF using the BM with and without hidden units, i.e., SBM and RBM.
After that, a sample-specific CIF-based CD learning method (CD-CIF) for SBM and a CIF-based iterative projection procedure for RBM are proposed and experimentally studied in Section \ref{sec:cd-cif}.
Finally, we draw conclusions and highlight some future research directions in Section \ref{sec:conclusions}.

\section{Theoretical Foundations of Information Geometry}\label{sec:theoryIG}
In this section, we introduce and develop the theoretical foundations of Information Geometry (IG) \citep{Amari93} for the manifold $S$ of binary multivariate distributions with a given number of variables $n$, i.e., the open simplex of all probability distributions over binary vector $x \in \{0,1\}^{n}$. This will lay the foundation for our theoretical deviation of the \emph{general CIF}.
\subsection{Notations for Manifold S}\label{sec:def}
In IG, a family of probability distributions is considered as a differentiable manifold with certain parametric coordinate systems. In the case of binary multivariate distributions, four basic coordinate systems are often used: $p$-coordinates, $\eta$-coordinates, $\theta$-coordinates and mixed-coordinates \citep{Amari93,hou2013}. Mixed-coordinates is of vital importance for our analysis.

For the $p$-coordinates $[p]$ with $n$ binary variables, the probability distribution over $2^n$ states of $x$ can be completely specified by any $2^n-1$ positive numbers indicating the probability of the corresponding exclusive states on $n$ binary variables. For example, the $p$-coordinates of $n=2$ variables could be $[p]=(p_{01}, p_{10}, p_{11})$. Note that IG requires all probability terms to be positive.

For simplicity, we use the capital letters $I,J,\dots$ to index the coordinate parameters of probabilistic distribution. An index $I$ can be regarded as a subset of $\{1,2,\dots,n\}$. And $p_I$ stands for the probability that all variables indicated by $I$ equal to one and the complemented variables are zero. For example, if $I=\{1,2,4\}$ and $n=4$, then $p_I=p_{1101}=Prob(x_1=1,x_2=1,x_3=0,x_4=1)$. Note that the null set can also be a legal index of the $p$-coordinates, which indicates the probability that all variables are zero, denoted as $p_{0 \dots 0}$.

Another coordinate system often used in IG is $\eta$-coordinates, which is defined by:
\begin{equation}\label{eq:etacoordinate}
    \eta_I=E[X_I]=Prob\{\prod_{i\in I}x_i = 1\}
\end{equation}
where the value of $X_I$ is given by $\prod_{i\in I}x_i$ and the expectation is taken with respect to the probability distribution over $x$. Grouping the coordinates by their orders, $\eta$-coordinate system is denoted as $[\eta]=(\eta^1_i, \eta^2_{ij},\dots, \eta^n_{1,2...n})$, where the superscript indicates the order number of the corresponding parameter. For example, $\eta^2_{ij}$ denotes the set of all $\eta$ parameters with the order number $2$.

The $\theta$-coordinates (natural coordinates) is defined by:
\begin{equation}\label{eq:thetacoordinate}
    \log{p(x)}=\sum_{I\subseteq\{1,2,\dots,n\}, I\neq NullSet}{\theta^I X_I} - \psi
\end{equation}
where $\psi=-\log Prob\{x_i=0, \forall i\in \{1,2,...,n\}\}$. The $\theta$-coordinate is denoted as $[\theta]=(\theta^{i}_1, \theta^{ij}_2,\dots, \theta^{1,...,n}_n)$, where the subscript indicates the order number of the corresponding parameter.
Note that the order indices locate at different positions in $[\eta]$ and $[\theta]$ following the convention in \citet{amari92igbm}.

The relation between coordinate systems $[\eta]$ and $[\theta]$ is bijective \citep{amari92igbm}.
More formally, they are connected by the Legendre transformation:
\begin{equation}\label{eq:trans_eta_theta}
    \theta^I=\frac{\partial \phi(\eta)}{\partial \eta_I}, \eta_I=\frac{\partial \psi(\theta)}{\partial \theta^I}
\end{equation}
where $\psi(\theta)$ and $\phi(\eta)$ meet the following identity
\begin{equation}\label{eq:legedre}
    \psi(\theta)+\phi(\eta)-\sum{\theta^I\eta_I}=0
\end{equation}
The function $\psi(\theta)$ is introduced in Eq. (\ref{eq:thetacoordinate}):
\begin{equation}\label{eq:psi_theta}
    \psi(\theta)=\log(\sum_x{exp\{\sum_I{\theta^I X_I(x)}\}})
\end{equation}
and hence $\phi(\eta)$ is the negative of entropy:
\begin{equation}\label{eq:phi_eta}
    \phi(\eta)=\sum_x{p(x;\theta(\eta))\log{p(x;\theta(\eta))}}
\end{equation}

Next we introduce mixed-coordinates, which is important for our derivation of CIF. In general, the manifold $S$ of probability distributions could be represented by the $l$-mixed-coordinates \citep{amari92igbm}:
\begin{equation}\label{eq:lmixedcoordinates}
  [\zeta]_l=(\eta^1_i, \eta^2_{ij},\dots, \eta^l_{i,j,...,k}, \theta^{i,j,...,k}_{l+1},\dots, \theta^{1,...,n}_n)
\end{equation}
where the first part consists of $\eta$-coordinates with order less or equal to $l$ (denoted by $[\eta^{l-}]$) and the second part consists of $\theta$-coordinates with order greater than $l$ (denoted by $[\theta_{l+}]$), $l \in \{1,...,n\}$.

\subsection{Fisher Information Matrix for Parametric Coordinates} \label{sec:fisherinformationmatrix}
For a general coordinate system $[\xi]$, the $i$th-row and $j$th-col element of the Fisher information matrix for $[\xi]$ (denoted by $G_{\xi}$) is defined as the covariance of the scores of $[\xi_i]$ and $[\xi_j]$ \citep{rao45attainable}, i.e., $$g_{ij}=E[\frac{\partial \log p(x;\xi)}{\partial \xi_i}\cdot \frac{\partial \log p(x;\xi)}{\partial \xi_j}]$$ The Fisher information measures the amount of information in the data that a statistic carries about the unknown parameter \citep{kass89asymptotic}.
The Fisher information matrix is of vital importance to our analysis because the inverse of Fisher information matrix gives an asymptotically tight lower bound of the covariance matrix of any unbiased estimate for the considered parameters \citep{rao45attainable}. 
Moreover, the Fisher information matrix, as a distance metric, is invariant to re-parameterization \citep{rao45attainable}, and can be proved to be the unique metric that is invariant to the map of random variables corresponding to a sufficient statistic \citep{amari92igbm,chentsov80}.

Another important concept related to our analysis is the \emph{orthogonality} defined by Fisher information. Two coordinate parameters $\xi_i$ and $\xi_j$ are called \emph{orthogonal} if and only if their Fisher information vanishes, i.e., $g_{ij}=0$, meaning that their influences to the log likelihood function are uncorrelated. A more technical meaning of orthogonality is that the \emph{maximum likelihood estimates} (MLE) of orthogonal parameters can be independently performed.

The Fisher information for $[\theta]$ can be rewritten as $g_{IJ}=\frac{\partial^2 \psi(\theta)}{\partial \theta ^I \partial \theta ^J}$, and for $[\eta]$ it is $g^{IJ}=\frac{\partial^2 \phi(\eta)}{\partial \eta_I \partial \eta_J}$
\citep{Amari93}. Let $G_\theta=(g_{IJ})$ and $G_\eta =(g^{IJ})$ be the Fisher information matrices for $[\theta]$ and $[\eta]$ respectively. It can be shown that $G_\theta$ and $G_\eta$ are mutually inverse matrices, i.e., $\sum_J{g^{IJ}g_{JK}}=\delta^I_K$, where $\delta^I_K=1$ if $I=K$ and 0 otherwise \citep{Amari93}.

In order to generally compute $G_\theta$ and $G_\eta$, we develop the following Proposition \ref{prop:fishermatrix} and \ref{prop:fishermatrix_eta}. Note that Proposition \ref{prop:fishermatrix} is a generalization of Theorem 2 in \citet{amari92igbm}.
\begin{proposition}\label{prop:fishermatrix}
The Fisher information between two parameters $\theta^I$ and $\theta^J$ in $[\theta]$, is given by
\begin{equation}\label{eq:proposiationFisherMetrictheta}
    g_{IJ}(\theta)=\eta_{I\bigcup J}-\eta_I \eta_J
\end{equation}
\end{proposition}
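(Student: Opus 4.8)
The plan is to compute the Fisher information $g_{IJ}(\theta) = E\!\left[\frac{\partial \log p(x;\theta)}{\partial \theta^I} \cdot \frac{\partial \log p(x;\theta)}{\partial \theta^J}\right]$ directly from the definition, using the exponential-family structure of $[\theta]$. First I would differentiate the log-likelihood in Eq.~(\ref{eq:thetacoordinate}): since $\log p(x;\theta) = \sum_K \theta^K X_K(x) - \psi(\theta)$, we get $\frac{\partial \log p(x;\theta)}{\partial \theta^I} = X_I(x) - \frac{\partial \psi}{\partial \theta^I} = X_I(x) - \eta_I$, where the last equality uses the Legendre relation Eq.~(\ref{eq:trans_eta_theta}). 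So the score for $\theta^I$ is just the centered sufficient statistic $X_I - \eta_I$, and hence $g_{IJ}(\theta) = E[(X_I - \eta_I)(X_J - \eta_J)] = E[X_I X_J] - \eta_I \eta_J$.

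The remaining step is to identify $E[X_I X_J]$. Here the key observation is that $X_I(x) = \prod_{i \in I} x_i$ and each $x_i \in \{0,1\}$, so $x_i^2 = x_i$ and therefore $X_I(x) X_J(x) = \prod_{i \in I} x_i \cdot \prod_{j \in J} x_j = \prod_{k \in I \cup J} x_k = X_{I \cup J}(x)$, regardless of whether $I$ and $J$ overlap. Taking expectations and invoking the definition of $\eta$-coordinates in Eq.~(\ref{eq:etacoordinate}), we obtain $E[X_I X_J] = E[X_{I \cup J}] = \eta_{I \cup J}$, which yields $g_{IJ}(\theta) = \eta_{I \cup J} - \eta_I \eta_J$ as claimed.

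Alternatively, one can reach the same conclusion via the second-derivative formula $g_{IJ}(\theta) = \frac{\partial^2 \psi(\theta)}{\partial \theta^I \partial \theta^J}$ quoted in Section~\ref{sec:fisherinformationmatrix}: differentiating $\psi$ once gives $\eta_I$, and differentiating $\eta_I = E[X_I]$ again with respect to $\theta^J$ produces the covariance $E[X_I X_J] - \eta_I \eta_J$ by the standard cumulant-generating-function identity for exponential families; then the idempotence argument above collapses $E[X_I X_J]$ to $\eta_{I \cup J}$. I would present the first route as the main argument since it is the most self-contained.

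I do not anticipate a serious obstacle here; the proof is essentially a two-line exponential-family computation once the idempotence trick $X_I X_J = X_{I \cup J}$ is noticed. The only point requiring a modicum of care is the overlapping-index case $I \cap J \neq \emptyset$ — one must be explicit that squaring binary coordinates does not change them, so that the product of the two monomials is the monomial on the union (not on the multiset), which is precisely what makes the clean formula $\eta_{I \cup J}$ appear rather than some messier expression. This is also the single place where the derivation genuinely uses that the variables are binary, which is worth flagging since it explains why the result, as noted in the excerpt, generalizes Theorem~2 of \citet{amari92igbm}.
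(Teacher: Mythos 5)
Your proof is correct and is essentially the same computation as the paper's: the paper differentiates $\eta_I=\partial\psi/\partial\theta^I$ once more with respect to $\theta^J$ and arrives at $\sum_x X_I(x)[X_J(x)-\eta_J]\,p(x;\theta)=\eta_{I\cup J}-\eta_I\eta_J$, which is exactly the covariance $E[X_IX_J]-\eta_I\eta_J$ you obtain from the centered scores. Your explicit justification of $E[X_IX_J]=\eta_{I\cup J}$ via idempotence of the binary coordinates is a step the paper leaves implicit, and is worth stating.
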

\begin{proof}
in Appendix \ref{appendix:thetafisher}
\end{proof}

\vspace{-3mm}
\begin{proposition}\label{prop:fishermatrix_eta}

The Fisher information between two parameters $\eta_I$ and $\eta_J$ in $[\eta]$, is given by
\begin{equation}\label{eq:proposiationFisherMetriceta}
    g^{IJ}(\eta)=\sum_{K\subseteq I\cap J}{(-1)^{|I-K|+|J-K|} \cdot \frac{1}{p_{K}}}
\end{equation}
where $|\cdot|$ denotes the cardinality operator.
\end{proposition}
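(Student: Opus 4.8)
The plan is to ride the Legendre duality between $[\theta]$ and $[\eta]$ recalled in Eq.~(\ref{eq:trans_eta_theta}): since $g^{IJ}(\eta)=\partial^2\phi(\eta)/\partial\eta_I\partial\eta_J$ and $\theta^I=\partial\phi(\eta)/\partial\eta_I$, we have $g^{IJ}(\eta)=\partial\theta^I/\partial\eta_J$. Hence it suffices to write $\theta^I$ explicitly in terms of the $p$-coordinates, write the $p$-coordinates in terms of the $\eta$-coordinates, and differentiate through the composition. Two Möbius inversions over the subset lattice of $\{1,\dots,n\}$ carry the whole argument.

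First I would evaluate Eq.~(\ref{eq:thetacoordinate}) at the ``pure'' configuration $x^{(S)}$, i.e.\ the state in which exactly the variables indexed by $S$ are one: since $X_I(x^{(S)})=1$ iff $I\subseteq S$, this gives $\log p_S-\log p_{0\cdots0}=\sum_{\emptyset\neq I\subseteq S}\theta^I$ for every $S$ (and $\psi=-\log p_{0\cdots0}$ from $S=\emptyset$). Setting $\theta^\emptyset:=0$ and applying Möbius inversion over subsets, together with the identity $\sum_{K\subseteq I}(-1)^{|I-K|}=0$ for $I\neq\emptyset$ which cancels the $\log p_{0\cdots0}$ contributions, yields the closed form
\begin{equation*}
\theta^I=\sum_{K\subseteq I}(-1)^{|I-K|}\log p_K .
\end{equation*}
Next, the defining relation $\eta_I=\mathrm{Prob}\{x_i=1,\ \forall i\in I\}=\sum_{L\supseteq I}p_L$ (Eq.~(\ref{eq:etacoordinate})) inverts, again by Möbius, to $p_K=\sum_{L\supseteq K}(-1)^{|L-K|}\eta_L$. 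Treating $\eta_\emptyset\equiv 1$ as a fixed constant, this gives $\partial p_K/\partial\eta_J=(-1)^{|J-K|}$ when $K\subseteq J$ and $0$ otherwise; note the case $K=\emptyset$ still contributes $(-1)^{|J|}$.

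Finally I would assemble the pieces by the chain rule: $g^{IJ}(\eta)=\partial\theta^I/\partial\eta_J=\sum_{K\subseteq I}(-1)^{|I-K|}\,p_K^{-1}\,\partial p_K/\partial\eta_J$, and substituting the Jacobian above restricts the surviving terms to $K\subseteq I\cap J$, producing exactly $\sum_{K\subseteq I\cap J}(-1)^{|I-K|+|J-K|}/p_K$, which is the claim. The main obstacle is purely bookkeeping: keeping track of the null index (the $\log p_{0\cdots0}$ term inside $\theta^I$ and the constraint $\eta_\emptyset=1$), applying each Möbius inversion in the correct direction of the lattice, and legitimizing the differentiation of $\theta$ with respect to $\eta$ through the $p$-parametrization — the last point follows from the smoothness and bijectivity of the coordinate changes asserted around Eq.~(\ref{eq:trans_eta_theta}). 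As a fallback, should the chain-rule route require more care, one can instead verify directly that the proposed matrix is the inverse of $G_\theta$, i.e.\ $\sum_J g^{IJ}(\eta)\,g_{JK}(\theta)=\delta^I_K$, using Proposition~\ref{prop:fishermatrix}; this is self-contained but combinatorially heavier.
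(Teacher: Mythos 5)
Your proposal is correct and follows essentially the same route as the paper's own proof: express $g^{IJ}=\partial\theta^I/\partial\eta_J$ via the Legendre relations, use the Möbius-inversion formulas $\theta^I=\sum_{K\subseteq I}(-1)^{|I-K|}\log p_K$ and $p_K=\sum_{L\supseteq K}(-1)^{|L-K|}\eta_L$, and apply the chain rule through the $p$-coordinates. Your write-up is in fact somewhat more careful than the paper's about the null index and the direction of each inversion.
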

\begin{proof}
in Appendix \ref{appendix:etafisher}
\end{proof}
For example, let $[p]=(p_{001}, p_{010}, p_{011}, p_{100}, p_{101}, p_{110}, p_{111})$ be the $p$-coordinates of a probability distribution with three variables. Then, the Fisher information of $\eta_I$ and $\eta_J$  can be calculated based on Equation (\ref{eq:proposiationFisherMetriceta}): if $I=\{1,2\}$ and $J=\{2,3\}$, $g^{IJ}=\frac{1}{p_{000}}+\frac{1}{p_{010}}$, and if $I=\{1,2\}$ and $J=\{1,2,3\}$, $g^{IJ}=-(\frac{1}{p_{000}}+\frac{1}{p_{010}}+\frac{1}{p_{100}}+\frac{1}{p_{110}})$, etc.

\section{The General CIF Principle For Parametric Reduction}\label{sec:cif}
As described in Section \ref{sec:def}, the general manifold $S$ of all probability distributions over binary vector $x \in \{0,1\}^{n}$, could be exactly represented using the parametric coordinate systems of dimensionality $2^{n} -1$. However, given the limited samples generated from a target distribution, it is almost infeasible to determine its coordinates in such high-dimensional parameter space with acceptable accuracy and in reasonable time. Given a target distribution $q(x)$ on the general manifold $S$, we consider the problem of realizing it by a lower-dimensionality submanifold. This is defined as the problem of parametric reduction for multivariate binary distributions.

\subsection{The General CIF Principle}
In this section, we will formally illuminate the general CIF for parametric reduction. Intuitively, if we can construct a coordinate system so that the confidences (measured by Fisher information) of its parameters entail a natural hierarchy, in which high confident parameters can be significantly distinguished from low confident ones, then the general CIF can be conveniently implemented by keeping the high confident parameters unchanging and setting the lowly confident parameters to neutral values. However, the choice of coordinates (or equivalently, parameters) in CIF is crucial to its usage. This strategy is infeasible in terms of $p$-coordinates, $\eta$-coordinates or $\theta$-coordinates, since it is easy to see that the hierarchies of confidences in these coordinate systems are far from significant, as shown by an example later in this section.

The following propositions show that mixed-coordinates meet the requirement realizing the general CIF. Let $[\zeta]_l$ be the mixed-coordinates defined in Section \ref{sec:def}. Proposition \ref{prop:fishermatrix_mix} gives a closed form for calculating the Fisher information matrix $G_\zeta$.

\begin{proposition}\label{prop:fishermatrix_mix}
The Fisher information matrix of the $l$-mixed-coordinates $[\zeta]_l$ is given by:
    \begin{equation}\label{eq:estimationerror}
        G_{\zeta}= \left(
                     \begin{array}{cc}
                       A & 0 \\
                        0 & B \\
                     \end{array}
                   \right)
    \end{equation}
where $A=((G_\eta^{-1})_{I_\eta})^{-1}$, $B=((G_\theta^{-1})_{J_\theta})^{-1}$, $G_\eta$ and $G_\theta$ are the Fisher information matrices of $[\theta]$ and $[\zeta]_l$, respectively, and $I_\eta$ is the index of the parameters shared by $[\eta]$ and $[\zeta]_l$, i.e., $\{\eta^1_i,..., \eta^l_{ij}\}$, and $J_\theta$ is the index set of the parameters shared by $[\theta]$ and $[\zeta]_l$, i.e., $\{\theta^{i,j,...,k}_{l+1},\dots, \theta^{1,...,n}_n\}$.
\end{proposition}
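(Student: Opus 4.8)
The plan is to compute $G_\zeta$ by transporting the known metric $G_\theta$ through the coordinate change $[\zeta]_l \to [\theta]$, exploiting the special structure of $[\zeta]_l$: only its low-order block $[\eta^{l-}]$ differs from $[\theta]$, whereas its high-order block $[\theta_{l+}]$ is literally a sub-block of $[\theta]$. Write $P$ for the index set of orders $1,\dots,l$ (i.e.\ $I_\eta$) and $Q$ for the orders $l{+}1,\dots,n$ (i.e.\ $J_\theta$), and block
\[
G_\theta=\begin{pmatrix}U & V\\ V^{T} & W\end{pmatrix},\qquad G_\eta = G_\theta^{-1},
\]
the last identity being the mutual-inverse relation between $G_\theta$ and $G_\eta$ recalled in Section~\ref{sec:fisherinformationmatrix}. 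Since the Fisher metric is a $(0,2)$-tensor, $G_\zeta = J^{T}G_\theta J$ with $J = \partial[\theta]/\partial[\zeta]_l$, so the whole computation reduces to identifying $J$.

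First I would determine $J$. Because $\theta_{l+}$ is itself part of the coordinate vector $[\zeta]_l$, $J$ is block upper-triangular,
\[
J=\begin{pmatrix}\dfrac{\partial\theta_P}{\partial\eta_P} & \dfrac{\partial\theta_P}{\partial\theta_Q}\\[2mm] 0 & I\end{pmatrix},
\]
and its two nonzero off-diagonal-block entries are obtained by implicitly differentiating the identity $\eta_P=\eta_P(\theta_P,\theta_Q)$, using $\partial\eta_I/\partial\theta^J=g_{IJ}(\theta)$ (which follows from $\eta_I=\partial\psi/\partial\theta^I$ and $g_{IJ}=\partial^2\psi/\partial\theta^I\partial\theta^J$). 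Differentiating at fixed $\theta_Q$ gives $\partial\theta_P/\partial\eta_P=U^{-1}$, and differentiating at fixed $\eta_P$ gives $\partial\theta_P/\partial\theta_Q=-U^{-1}V$. Substituting into $G_\zeta=J^{T}G_\theta J$ and multiplying the blocks out, the cross terms cancel and the product collapses to
\[
G_\zeta=\begin{pmatrix}U^{-1} & 0\\ 0 & W-V^{T}U^{-1}V\end{pmatrix},
\]
which already gives the block-diagonal shape of \eqref{eq:estimationerror} (equivalently, the $g$-orthogonality of $[\eta^{l-}]$ and $[\theta_{l+}]$). It then remains only to name the two diagonal blocks: $U=(G_\theta)_{PP}=(G_\eta^{-1})_{I_\eta}$, so the $(1,1)$ block is $A=((G_\eta^{-1})_{I_\eta})^{-1}$; and $W-V^{T}U^{-1}V$ is the Schur complement of $U$ in $G_\theta$, which by the block-matrix inversion formula equals the inverse of the $Q$-block of $G_\theta^{-1}$, i.e.\ $B=((G_\theta^{-1})_{J_\theta})^{-1}$.

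The step I expect to be most delicate is the implicit differentiation: one must keep in mind that $\partial\theta_P/\partial\eta_P$ and $\partial\theta_P/\partial\theta_Q$ are partials \emph{within the mixed parametrization} (holding $\theta_Q$, resp.\ $\eta_P$, fixed), not within $[\theta]$ or $[\eta]$ alone, and getting the held-fixed variable right is exactly what makes the off-diagonal structure of $J$ come out correctly. Once $J$ is pinned down, the rest is routine $2\times 2$ block-matrix algebra together with the standard Schur-complement expression for the blocks of $G_\theta^{-1}=G_\eta$. An alternative route would be to first prove block-diagonality directly from the covariance-of-scores definition — noting that $\{\theta_Q=\text{const}\}$ is an $e$-flat submanifold on which $\eta_P$ restricts to a coordinate system, forcing $\partial_{\eta_a}$ and $\partial_{\theta^b}$ to be Fisher-orthogonal — and then compute the two diagonal blocks separately; but since the Jacobian computation delivers all of it at once, that is the route I would take.
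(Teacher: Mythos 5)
Your proof is correct, but it takes a genuinely different route from the paper's. The paper establishes block-diagonality by citing a known orthogonality result (that a $\theta$-coordinate of order $k$ is Fisher-orthogonal to every $\eta$-coordinate of order $<k$), and then identifies the diagonal blocks indirectly: it invokes the Cram\'{e}r--Rao bound to argue that the tight covariance bound for the shared parameters is the same in either parametrization, so $(G_\zeta^{-1})_{I_\zeta}=(G_\eta^{-1})_{I_\eta}$ and $(G_\zeta^{-1})_{J_\zeta}=(G_\theta^{-1})_{J_\theta}$, and finally inverts the resulting block-diagonal $G_\zeta^{-1}$. You instead compute $G_\zeta=J^{T}G_\theta J$ directly from the Jacobian of the coordinate change, which delivers the vanishing off-diagonal blocks and the explicit forms $A=U^{-1}$ and $B=W-V^{T}U^{-1}V$ in a single pass; the standard Schur-complement identity then converts these into the stated $((G_\eta^{-1})_{I_\eta})^{-1}$ and $((G_\theta^{-1})_{J_\theta})^{-1}$ via $G_\eta=G_\theta^{-1}$. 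Your route buys self-containedness: the orthogonality is derived rather than cited, and the paper's somewhat informal ``invariance of the Cram\'{e}r--Rao bound'' step is replaced by explicit block algebra (it is, in effect, the same Jacobian computation applied to $G^{-1}$ instead of $G$). The paper's route buys brevity and a statistical reading of $A^{-1}$ and $B^{-1}$ as covariance lower bounds. Your identification of the partial derivatives within the mixed parametrization is exactly the delicate point and you handle it correctly (holding $\theta_Q$ fixed gives $\partial\theta_P/\partial\eta_P=U^{-1}$; holding $\eta_P$ fixed gives $\partial\theta_P/\partial\theta_Q=-U^{-1}V$), and the input $\partial\eta_I/\partial\theta^J=g_{IJ}$ you rely on is already established in the paper's proof of Proposition \ref{prop:fishermatrix}; positive definiteness of $G_\theta$ guarantees the invertibility of $U$ that your argument implicitly uses.
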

\vspace{-3mm}
\begin{proof}
in Appendix \ref{appendix:mixfisher}.
\end{proof}
\vspace{-3mm}
\begin{proposition}\label{prop:fishermatrix_mixDiagonal}
The diagonal of $A$ are lower bounded by $1$, and that of $B$ are upper bounded by $1$.
\end{proposition}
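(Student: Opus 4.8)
The plan is to funnel both halves of the claim through one elementary fact about the diagonal of $G_\theta$. Taking $J=I$ in Proposition~\ref{prop:fishermatrix} gives $g_{II}(\theta)=\eta_{I\cup I}-\eta_I\eta_I=\eta_I(1-\eta_I)$, and since $S$ is the open simplex, every $\eta_I=\mathrm{Prob}\{\prod_{i\in I}x_i=1\}$ with $I$ a nonempty index of $[\theta]$ lies strictly in $(0,1)$; hence $0<g_{II}(\theta)\le\tfrac14<1$ for every such $I$. I will also use the two facts already recorded in the excerpt: $G_\theta=G_\eta^{-1}$, and that every Fisher matrix, hence every principal submatrix of it, is positive definite.

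For the block $A$, observe that $A=((G_\eta^{-1})_{I_\eta})^{-1}=\big((G_\theta)_{I_\eta}\big)^{-1}$, i.e.\ $A$ is the inverse of the principal submatrix $N:=(G_\theta)_{I_\eta}$ of $G_\theta$. For positive definite $N$, the diagonal of $N^{-1}$ dominates the reciprocal of the diagonal of $N$: putting a chosen index $i$ first and writing $N=\left(\begin{array}{cc}N_{ii}&c^\top\\ c&N''\end{array}\right)$ with $N''\succ0$, the Schur-complement formula gives $A_{ii}=(N^{-1})_{ii}=\big(N_{ii}-c^\top(N'')^{-1}c\big)^{-1}\ge 1/N_{ii}$ because $c^\top(N'')^{-1}c\ge 0$ (equivalently $1=(e_i^\top e_i)^2\le (e_i^\top Ne_i)(e_i^\top N^{-1}e_i)$ by Cauchy--Schwarz). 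Since $N_{ii}=g_{I_iI_i}(\theta)<1$ for the corresponding index $I_i$, this gives $A_{ii}>1$, the asserted lower bound.

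For the block $B$, I would instead realize it as a Schur complement inside $G_\theta$ itself. Partitioning $G_\theta$ into the blocks indexed by $I_\eta$ and $J_\theta$ and using $G_\theta^{-1}=G_\eta$, block inversion yields $(G_\theta^{-1})_{J_\theta}=\big((G_\theta)_{J_\theta}-(G_\theta)_{J_\theta I_\eta}(G_\theta)_{I_\eta}^{-1}(G_\theta)_{I_\eta J_\theta}\big)^{-1}$, hence $B=\big((G_\theta^{-1})_{J_\theta}\big)^{-1}=(G_\theta)_{J_\theta}-(G_\theta)_{J_\theta I_\eta}(G_\theta)_{I_\eta}^{-1}(G_\theta)_{I_\eta J_\theta}$. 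The subtracted term is positive semidefinite since $(G_\theta)_{I_\eta}\succ0$, so $B\preceq(G_\theta)_{J_\theta}$ in the Loewner order, and in particular $B_{ii}\le\big((G_\theta)_{J_\theta}\big)_{ii}=g_{I_iI_i}(\theta)<1$; combined with the previous paragraph this finishes the proof. The ingredients (block inversion, positivity of Schur complements, monotonicity of diagonal entries under restriction to a principal submatrix) are all standard, so there is no deep obstacle; the only real care needed is bookkeeping which matrix --- $G_\eta$ or its inverse $G_\theta$ --- each of $A$ and $B$ is built from, and noticing that in both cases the correct comparison object is a principal submatrix of $G_\theta$, whose diagonal is pinned below $1$ by Proposition~\ref{prop:fishermatrix}. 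Representing the blocks the other way round (comparing $A$ with a submatrix of $G_\eta$, or $B$ with the inverse of one) only gives vacuous bounds through the diagonal entries $g^{II}(\eta)=\sum_{K\subseteq I}1/p_K>1$ of $G_\eta$.
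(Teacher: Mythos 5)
Your proof is correct and follows essentially the same route as the paper: both reduce everything to the fact that the diagonal of $G_\theta$ is $\eta_I(1-\eta_I)<1$, identify $A$ as the inverse of the principal submatrix $(G_\theta)_{I_\eta}$ (so its diagonal exceeds $1$ by the Cauchy--Schwarz/Schur-complement inequality, which the paper packages as a Gram-matrix lemma), and identify $B$ as the Schur complement $V-X^{\top}U^{-1}X$ whose diagonal is dominated by that of $V$.
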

\vspace{-3mm}
\begin{proof}
in Appendix \ref{appendix:etafisherDiagonal}.
\end{proof}

According to Proposition \ref{prop:fishermatrix_mix} and Proposition \ref{prop:fishermatrix_mixDiagonal}, the confidences of coordinate parameters in $[\zeta]_l$ entail a natural hierarchy: the first part of high confident parameters $[\eta^{l-}]$ are separated from the second part of low confident parameters $[\theta_{l+}]$, which has a neutral value (zero).
Moreover, the parameters in $[\eta^{l-}]$ is orthogonal to the ones in $[\theta_{l+}]$, indicating that we could estimate these two parts independently \citep{hou2013}. Hence we can implement the general CIF principle for parametric reduction in $[\zeta]_l$ by replacing low confident parameters with neutral value zeros and reconstructing the resulting distribution. It turns out that the submanifold tailored by CIF becomes $[\zeta]_{l_t}=(\eta_{i}^1,..., \eta_{ij...k}^l,0,\dots,0)$. We call $[\zeta]_{l_t}$ the $l$-tailored-mixed-coordinates.

To grasp an intuitive picture for the general CIF strategy and its significance w.r.t mixed-coordinates, let us consider an example with $[p]=( p_{001}=0.15, p_{010}=0.1, p_{011}=0.05, p_{100}=0.2, p_{101}=0.1, p_{110}=0.05, p_{111}=0.3)$. Then the confidences for coordinates in $[\eta]$, $[\theta]$ and $[\zeta]_2$ are given by the diagonal elements of the corresponding Fisher information matrices. Applying the $2$-tailored CIF in mixed-coordinates, the loss ratios of Fisher information is $0.001\%$, and the ratio of the Fisher information of the tailored parameter ($\theta_3^{123}$) to the remaining $\eta$ parameter with the smallest Fisher information is $0.06\%$. On the other hand, the above two ratios become $7.58\%$ and $94.45\%$ (in $\eta$-coordinates) or $12.94\%$ and $92.31\%$ (in $\theta$-coordinates), respectively.

\begin{figure}
  \centering
  \includegraphics[width=0.4\textwidth]{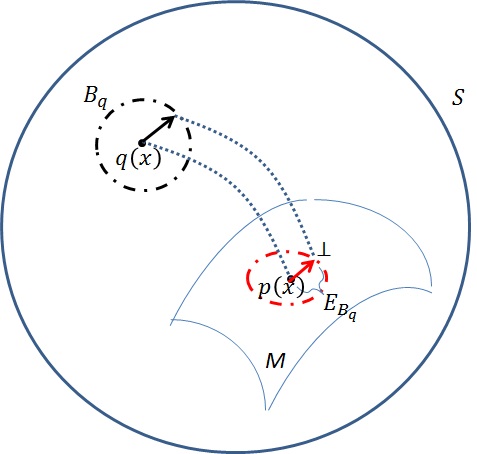}
  \caption{By projecting a point $q(x)$ on $S$ to a submanifold $M$, the $l$-tailored-mixed-coordinates $[\zeta]_{l_t}$ gives a desirable $M$ that maximally preserve the expected Fisher information distance when projecting a $\varepsilon$-neighborhood centered at $q(x)$ onto $M$.}
  \label{fig:expectedFD} 
\end{figure}

Next, we will restate the CIF principle in terms of the geometric perspective of submanifold projection.
Let $M$ be a smooth submanifold in $S$. Given a point $q(x)\in S$, the projection of $q(x)$ to $M$ is the point $p(x)$ that belongs to $M$ and is closest to $q(x)$ in the sense of the Kullback-Leibler divergence (K-L divergence) from the distribution $q(x)$ to $p(x)$ \citep{amari92igbm}:
\begin{equation}\label{eq:fisherinformationdivergence}
  D(q(x),p(x))=\sum_{x} q(x) \log \frac{q(x)}{p(x)}
\end{equation}
Alternatively, since K-L divergence is not symmetric, the projection of $q(x)$ to $M$ can also be defined as the point $p(x)\in M$ that minimizes the K-L divergence from $M$ to $q(x)$. In the rest of this paper, the direction of the K-L divergence used in a particular projection is explicitly specified when there is an ambiguity.

The CIF entails a submanifold of $S$ via the $l$-tailored-mixed-coordinates $[\zeta]_{l_t}$.
However, there exist many different submanifolds of $S$. Now our question is: \emph{does there exist a general criterion to distinguish which projection is best? If such principle does exist, is CIF the right one?} The following proposition shows that the general CIF entails a geometric interpretation illuminated in Figure \ref{fig:expectedFD}, which would lead to an optimal submanifold $M$.\footnote{Note that the CIF is related to but fundamentally different from the $m$-projection in \citet{amari92igbm}. \citet{amari92igbm} focuses on the problem of projecting a point $Q$ on $S$ to the submanifold of BM and shows that $m$-projection is the point on BM that is closest to $Q$. Actually, the $m$-projection is a special case of our $[\zeta]_{l_t}$-projection when $l$ is 2. In the present paper, we focus on the problem of developing a general criterion that could help us find the optimal submanifold to project on.}
\begin{proposition}\label{prop:GeometricView}
Given a statistical manifold $S$ in $l$-mixed-coordinates $[\zeta]_l$, let the corresponding $l$-tailored-mixed-coordinates $[\zeta]_{l_t}$ has $k$ free parameters. Then, among all $k$-dimensional submanifolds of $S$, the submanifold determined by $[\zeta]_{l_t}$ can maximally preserve the expected information distance
induced by Fisher-Rao metric.
\end{proposition}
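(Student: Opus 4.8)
The plan is to localise the statement to the tangent space at $q$, rewrite ``expected information distance preserved by a projection'' as a trace functional of the Fisher matrix restricted to the submanifold, solve the resulting finite-dimensional extremal problem using the block structure of the mixed coordinates, and then lift the conclusion back to submanifolds.

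First I would pass to the infinitesimal picture. Fix $q\in S$ and work in $T_q S$, a real vector space of dimension $2^n-1$, equipped with the Fisher--Rao inner product given by $G_\zeta$ in the $l$-mixed-coordinates $[\zeta]_l$. To leading order an $\varepsilon$-neighbourhood of $q$ is a small ball of displacements $\delta\zeta$, and projecting a displaced point onto a $k$-dimensional submanifold $M$ reduces to applying the $G_\zeta$-orthogonal projector $\Pi_V$ onto $V:=T_qM$, since the $m$-, $e$- and metric projections all coincide with $\Pi_V$ at first order (we may take $\pi_M(q)$ as base point; this perturbs the computation only at higher order). The Fisher--Rao length that survives is $\|\Pi_V\delta\zeta\|_q^2=\delta\zeta^{\top}G_\zeta\Pi_V\,\delta\zeta$, so averaging $\delta\zeta$ isotropically over the neighbourhood --- in the mixed coordinates, in which high- and low-confidence directions start on equal footing --- the expected preserved information distance is $\varepsilon^2$ times a fixed constant times $\mathrm{tr}(G_\zeta\Pi_V)=\mathrm{tr}\bigl((V^{\top}G_\zeta V)^{-1}V^{\top}G_\zeta^2V\bigr)$, the total ``effective'' Fisher information carried by the $k$ coordinates of $M$. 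Proposition \ref{prop:fishermatrix_mix} is what makes this clean: $G_\zeta=\mathrm{diag}(A,B)$ block-decouples the trace, and the accompanying orthogonality means the two blocks can be treated separately.

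Second I would solve $\max_{\dim V=k}\mathrm{tr}(G_\zeta\Pi_V)$. After the substitution $W=G_\zeta^{1/2}V$ this is the classical Ky~Fan / Courant--Fischer trace maximisation, whose value is $\sum_{i=1}^{k}\lambda_i(G_\zeta)$, attained exactly when $V$ is the span of the $k$ leading Fisher eigendirections of $G_\zeta$. It then remains to identify that leading eigenspace with the coordinate subspace $\{[\theta_{l+}]=0\}$, i.e.\ $\mathrm{span}\,[\eta^{l-}]$. Here Proposition \ref{prop:fishermatrix_mixDiagonal} is the lever: I would upgrade its diagonal bounds to the Loewner ordering $A\succeq I\succeq B$, using the closed forms of $G_\eta$ and $G_\theta$ from Propositions \ref{prop:fishermatrix}--\ref{prop:fishermatrix_eta} together with Schur-complement identities for the blocks of $G_\eta^{-1}$ and $G_\theta^{-1}$ and Cauchy interlacing; then every eigenvalue of the $k\times k$ block $A$ dominates every eigenvalue of $B$, so the $k$ leading eigendirections are precisely the $[\eta^{l-}]$ axes and the optimiser is $V=T_q[\zeta]_{l_t}$.

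Finally I would lift the conclusion. The submanifold $[\zeta]_{l_t}$ is the $e$-flat exponential family of interactions of order $\le l$, so its $m$-projection is globally defined and satisfies the Pythagorean identity, which promotes the first-order computation to a statement about projecting the whole $\varepsilon$-neighbourhood, with curvature entering only at higher order in $\varepsilon$ and not moving the optimiser. I expect the genuine difficulty to be precisely this bridge, together with the spectral strengthening of Proposition \ref{prop:fishermatrix_mixDiagonal}: the diagonal bounds alone establish optimality only within the family of mixed-coordinate subspaces, whereas ``among all $k$-dimensional submanifolds'' requires either $A\succeq I\succeq B$ in the Loewner order or a direct argument that curved or non-coordinate submanifolds cannot beat the flat optimum. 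Routine bookkeeping --- the explicit form of $\Pi_V$, the averaging constant, the two trace identities --- I would relegate to an appendix.
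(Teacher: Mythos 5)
Your reduction of the problem to maximising $\mathrm{tr}(G_\zeta\Pi_V)$ over $k$-dimensional subspaces $V$ of the tangent space is a clean formalisation, and the Ky Fan step is then correct: the maximiser is the span of the top $k$ eigendirections of $G_\zeta$. The genuine gap is exactly the one you flag and then propose to close: the Loewner upgrade $A\succeq I\succeq B$ of Proposition \ref{prop:fishermatrix_mixDiagonal} is not available. Indeed $A=U^{-1}$ where $U$ is a principal block of $G_\theta$, i.e.\ the covariance matrix of the $\{0,1\}$-valued monomials $X_I$ with $|I|\le l$ (Proposition \ref{prop:fishermatrix}); having all diagonal entries below $1$ does not force $U\preceq I$. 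For instance, with $n\ge 5$, $l=1$ and five nearly perfectly correlated Bernoulli$(1/2)$ coordinates, $U$ is close to $\frac14 J_5$ (the all-ones matrix scaled by $\frac14$), whose top eigenvalue is $\frac54>1$, so $A=U^{-1}$ has an eigenvalue below $1$ and $A\not\succeq I$. Consequently the top-$k$ eigenspace of $G_\zeta=\mathrm{diag}(A,B)$ need not be the $[\eta^{l-}]$ coordinate block, and your identification of the Ky Fan optimiser with the tangent space of $[\zeta]_{l_t}$ does not follow from the cited results; Schur complements and interlacing control traces and diagonals here, not the full spectrum, so the eigenvalue separation $\lambda_{\min}(A)\ge\lambda_{\max}(B)$ remains unproved.

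For comparison, the paper's own proof retreats to the smaller comparison class for which the diagonal bounds do suffice: it compares only \emph{tailored} submanifolds, i.e.\ those obtained by selecting $k$ of the mixed coordinates and zeroing the rest. For such a selection $I_T$ the preserved distance is governed by $\mathrm{tr}\bigl((G_\zeta)_{I_T}\bigr)$, which is just the sum of the $k$ chosen diagonal entries, and Proposition \ref{prop:fishermatrix_mixDiagonal} (diagonal of $A$ bounded below by $1$, diagonal of $B$ bounded above by $1$) immediately forces the optimal selection to be $I_\eta$. So your proposal is more ambitious than what the paper actually establishes --- optimality over all $k$-dimensional subspaces rather than over coordinate-aligned ones --- but the extra generality is precisely where the argument breaks. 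To repair it, either restrict the claim to tailored submanifolds (matching the paper's proof), or supply a new argument for the spectral separation between $A$ and $B$. Two smaller discrepancies: your functional uses the $G_\zeta$-orthogonal projector while the paper restricts coordinates and uses the induced metric (the two agree on the block-orthogonal optimiser but not for general $T$), and the paper averages the distance rather than its square; fix one convention before optimising.
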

\begin{proof}
in Appendix \ref{appendix:geometriccif}.
\end{proof}
\section{Two Implementations of CIF using Boltzmann Machine}\label{sec:implementationCIF}
In previous section, a general CIF is uncovered in the $[\zeta]_l$ coordinates for multivariate binary distributions. Now we consider the implementations of CIF when $l$ equals to 2 using the Boltzmann machines (BM).
More specifically, we show that two kinds of BMs, i.e., the single layer BM without hidden units (SBM) and the restricted BM (RBM), are indeed instances following the general CIF principle. For each case, the application of CIF can be interpreted in two perspectives: an algebraic and geometric interpretation.

\subsection{Neural Networks as Parametric Reduction Model}\label{subsec:notationBM}
Many neural networks with fixed architecture, such as SBM, RBM, high-order BM \citep{highorderBM}, deep belief networks \citep{hinton06}, have been proposed to approximately realize the underlying distributions in different application scenarios. Those neural networks are designed to fulfill the parametric reduction for certain tasks by specifying the number of adjustable parameters, namely the number of connection weights and the number of biases. We believe that there exists a general criterion to design the structure of neural submanifolds for the particular application in hand, and the problem of parametric reduction is equivalent to the choice of submanifolds.
Next, we will briefly introduce the general BM and the gradient-based learning algorithm.

\subsubsection{Introduction To The Boltzmann Machines}
In general, a BM \citep{ackley85BM} is defined as a stochastic neural network consisting of visible units $x\in\{0,1\}^{n_x}$ and hidden units $h\in\{0,1\}^{n_h}$, where each unit fires stochastically depending on the weighted sum of its inputs. The energy function is defined as follows:
\begin{equation}\label{eq:energyBM}
   E_{BM}(x,h;\xi)=-\frac{1}{2}x^TUx-\frac{1}{2}h^TVh -x^TWh -b^Tx-d^Th
\end{equation}
where $\xi=\{U,V,W,b,d\}$ are the parameters: visible-visible interactions ($U$), hidden-hidden interactions ($V$), visible-hidden interactions ($W$), visible self-connections ($b$) and hidden self-connections ($d$). The diagonals of $U$ and $V$ are set to zero.
We can express the Boltzmann distribution over the joint space of $x$ and $h$ as below:
\begin{equation}\label{eq:probBM}
    p(x,h;\xi)=\frac{1}{Z}exp\{-E_{BM}(x,h;\xi)\}
\end{equation}
where $Z$ is a normalization factor.

Let $B$ be the set of Boltzmann distributions realized by BM. Actually, $B$ is a submanifold of the general manifold $S_{xh}$ over $\{x,h\}$. From Equation (\ref{eq:probBM}) and (\ref{eq:energyBM}), we can see that $\xi=\{U,V,W,b,d\}$ plays the role of $B$'s coordinates in $\theta$-coordinates (Equation \ref{eq:thetacoordinate}) as follows:
\begin{eqnarray}\label{eq:bmtotheta}
  \theta_1 &:& \theta^{x_i}_1 = b_{x_i}, \theta^{h_j}_1 = d_{h_j} (\forall x_i\in x, h_j \in h) \nonumber \\
  \theta_2 &:& \theta^{x_ix_j}_2 = U_{x_i,x_j}, \theta^{x_ih_j}_2 = W_{x_i,h_j},\theta^{h_ih_j}_2 = V_{h_i,h_j} (\forall x_i, x_j\in x; h_i,h_j \in h) \nonumber \\
  \theta_{2+} &:& \theta^{x_i\dots x_j h_u \dots h_v}_m = 0, m>2, (\forall x_i,\dots, x_j \in x; h_u,\dots,h_v \in h)
\end{eqnarray}
So the $\theta$-coordinates for BM is given by:
\begin{equation}\label{eq:thetaforBM}
  [\theta]_{BM}=(\underbrace{\theta^{x_i}_1, \theta^{h_j}_1}_{1-order}, \underbrace{\theta^{x_ix_j}_2, \theta^{x_ih_j}_2, \theta^{h_ih_j}_2}_{2-order}, \underbrace{0, \dots, 0 }_{orders>2}).
\end{equation}

The SBM and RBM are special cases of the general BM. Since SBM has $n_h=0$ and all the visible units are connected to each other, the parameters of SBM are $\xi_{sbm}=\{U,b\}$ and $\{V,W,d\}$ are all set to zero. For RBM, it has connections only between hidden and visible units. Thus, the parameters of RBM are $\xi_{rbm}=\{W,b,d\}$ and $\{U,V\}$ are set to zero.

\subsubsection{Formulation on the Gradient-based Learning of BM} \label{sec:learningMLforBM}
Given the sample $\underline{x}$ that generated from the underlying distribution, the \emph{maximum-likelihood} (ML) is commonly used gradient ascent method for training BM in order to maximize the log-likelihood $\log p(\underline{x};\xi)$ of the parameters $\xi$ \citep{hinton05CD}. Based on Equation (\ref{eq:probBM}), the log-likelihood is given as follows:
$$\log p(\underline{x};\xi)=log \sum_h e^{-E(\underline{x},h;\xi)} - log \sum_{x',h'} e^{-E(x',h';\xi)}$$
Differentiating the log-likelihood, the gradient vector with respect to $\xi$ is as follows:
\begin{equation}\label{eq:gradientloglikelihood}
  \frac{\partial \log p(\underline{x};\xi)}{\partial \xi}=\sum_h p(h|\underline{x};\xi)\frac{\partial [-E(\underline{x},h;\xi)]}{\partial \xi} - \sum_{x',h'} p(h'|x';\xi)\frac{\partial [-E(x',h';\xi)]}{\partial \xi}
\end{equation}
The $\frac{\partial E(x,h;\xi)}{\partial \xi}$ can be easily calculated from Equation (\ref{eq:energyBM}). Then we can obtain the stochastic gradient using Gibbs sampling \citep{gilk96mcmc} in two phases: sample $\underline{h}$ given $\underline{x}$ for the first term, called the positive phase, and sample $(\underline{x}',\underline{h}')$ from the stationary distribution $p(x',h';\xi)$ for the second term, called the negative phase. Now with the resulting stochastic gradient estimation, the learning rule is to adjust $\xi$ by:
\begin{equation}\label{eq:learnrulestochasticgradient}
  \Delta \xi \! =\! \varepsilon \cdot \frac{\partial \log p(\underline{x};\xi)}{\partial \xi}=\varepsilon \cdot (- \langle \frac{\partial E(\underline{x},\underline{h};\xi)}{\partial \xi} \rangle_0 + \langle \frac{\partial E(\underline{x'},\underline{h'};\xi)}{\partial \xi} \rangle_\infty)
\end{equation}
where $\varepsilon$ is the learning rate, $\langle\cdot \rangle_0$ denotes the average using the sample data and $\langle\cdot \rangle_\infty$ denotes the average with respect to the stationary distribution $p(x,h;\xi)$ after the corresponding Gibbs sampling phases.\footnote{Both the two special BMs, i.e., SBM and RBM, can be trained using the ML method. Note that SBM has no hidden units and hence no positive sampling is needed in training SBM.}

In following sections, we will revisit two special BM, namely SBM and RBM, and theoretically show that both SBM and RBM can be derived using the CIF principle. This helps us formalize what essential parts of the target density the SBM and RBM capture.
\subsection{The CIF-based Derivation of Boltzmann Machine without Hidden Units}\label{sec:geometryBM}

Given any underlying probability distribution $q(x)$ on the general manifold $S$ over $\{x\}$, the logarithm of $q(x)$ can be represented by a linear decomposition of $\theta$-coordinates as shown in Equation (\ref{eq:thetacoordinate}).
Since it is impractical to recognize all coordinates for the target distribution, we would like to only approximate part of them and end up with a $k$-dimensional submanifold $M$ of $S$, where $k$ ($\ll2^{n_x}-1$) is the number of free parameters.
Here, we set $k$ to be the same dimensionality as SBM, i.e., $k=\frac{n_x(n_x+1)}{2}$, so that all candidate submanifolds are comparable to the submanifold endowed by SBM (denoted as $M_{sbm}$).
Next, the rationale underlying the design of $M_{sbm}$ can be illuminated using the general CIF from two perspectives, algebraically and geometrically.
\subsubsection{SBM as 2-tailored-mixed-coordinates}
Let the $2$-mixed-coordinates of $q(x)$ on $S$ be $[\zeta]_2=(\eta^1_i, \eta^2_{ij},\theta_3^{i,j,k},\dots, \theta^{1,...,n_x}_{n_x})$.
Applying the general CIF on $[\zeta]_2$, our parametric reduction rule is to preserve the high confident part parameters $[\eta^{2-}]$ and replace low confident parameters $[\theta_{2+}]$ with a fixed neutral value zero. Thus we derive the $2$-tailored-mixed-coordinates:
$[\zeta]_{2_t}=(\eta_{i}^1, \eta_{ij}^2,0,\dots,0)$, as the optimal approximation of $q(x)$ by the $k$-dimensional submanifolds.
On the other hand, given the 2-mixed-coordinates of $q(x)$, the projection $p(x)\in M_{sbm}$ of $q(x)$ is proved to be $[\zeta]_{p}=(\eta_{i}^1, \eta_{ij}^2,0,\dots,0)$ \citep{amari92igbm}. Thus, SBM defines a probabilistic parameter space that is exactly derived from CIF.
\subsubsection{SBM as Maximal Information Distance Projection}
Next corollary, following Proposition \ref{prop:GeometricView}, shows a geometric derivation of SBM.
We make it explicit that the projection on $M_{sbm}$ could maximally preserve the expected information distance comparing to other tailored submanifolds of $S$ with the same dimensionality $k$.

\begin{corollary}\label{prop:SBMCIF}
Given the general manifold $S$ in $2$-mixed-coordinates $[\zeta]_2$, SBM defines an $k$-dimensional submanifold of $S$ that can maximally preserve the expected information distance induced by Fisher-Rao metric.
\end{corollary}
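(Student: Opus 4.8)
The plan is to obtain Corollary \ref{prop:SBMCIF} as a direct specialization of Proposition \ref{prop:GeometricView} to the case $l=2$, the only substantive work being the identification of the SBM submanifold $M_{sbm}$ with the $2$-tailored-mixed-coordinates submanifold $[\zeta]_{2_t}$ and a matching dimension count.

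First I would recall, from the discussion in Section \ref{sec:geometryBM}, that when $q(x)$ is written in $2$-mixed-coordinates $[\zeta]_2=(\eta^1_i,\eta^2_{ij},\theta^{i,j,k}_3,\dots,\theta^{1,\dots,n_x}_{n_x})$, the SBM energy function (Equation (\ref{eq:energyBM}) with $\{V,W,d\}$ set to zero) forces every natural parameter of order greater than $2$ to vanish; this is exactly the content of Equations (\ref{eq:bmtotheta})--(\ref{eq:thetaforBM}). Hence the $\theta_{2+}$ block of any distribution representable by SBM is identically zero, and since $[\zeta]_2$ is a legitimate global coordinate system on $S$ (Proposition \ref{prop:fishermatrix_mix} exhibits $G_\zeta$ as a nonsingular block-diagonal matrix), $M_{sbm}$ is precisely the set $\{(\eta^1_i,\eta^2_{ij},0,\dots,0)\}=[\zeta]_{2_t}$. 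Equivalently, this is the statement recalled from \citet{amari92igbm} that the projection of $q(x)$ onto $M_{sbm}$ keeps $(\eta^1_i,\eta^2_{ij})$ and zeroes the higher-order natural parameters.

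Next I would verify the dimension count so that $M_{sbm}$ is the $k$-free-parameter tailored submanifold appearing in Proposition \ref{prop:GeometricView}. SBM has parameters $\xi_{sbm}=\{U,b\}$ where $U$ is symmetric with zero diagonal, contributing $\binom{n_x}{2}$ free entries, and $b$ contributing $n_x$ entries, so $k=\binom{n_x}{2}+n_x=\frac{n_x(n_x+1)}{2}$. On the other hand, $[\zeta]_{2_t}$ retains the $\eta$-parameters of order at most $2$, namely the $n_x$ first-order parameters $\eta^1_i$ and the $\binom{n_x}{2}$ second-order parameters $\eta^2_{ij}$, again totalling $\frac{n_x(n_x+1)}{2}$. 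The two numbers agree, confirming that $[\zeta]_{2_t}$ has exactly $k$ free parameters.

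Finally I would invoke Proposition \ref{prop:GeometricView} with $l=2$: among all $k$-dimensional submanifolds of $S$, the submanifold determined by $[\zeta]_{2_t}$ maximally preserves the expected information distance induced by the Fisher-Rao metric when an $\varepsilon$-neighborhood of $q(x)$ is projected onto it. Combining this with the identification $M_{sbm}=[\zeta]_{2_t}$ from the first step yields the corollary. The one point that genuinely needs care --- and which I regard as the \emph{main (though minor) obstacle} --- is making the identification $M_{sbm}=[\zeta]_{2_t}$ a true equality of submanifolds rather than an inclusion: it follows from the dimension match above together with the bijectivity of the mixed-coordinate representation, since both sides are connected submanifolds cut out by the same set of vanishing coordinates. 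Once that is in place, the corollary is an immediate consequence of Proposition \ref{prop:GeometricView}.
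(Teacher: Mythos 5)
Your proposal is correct and follows essentially the same route as the paper: identify $M_{sbm}$ with the $2$-tailored submanifold $[\zeta]_{2_t}$ (the paper does this by citing the projection result of \citet{amari92igbm}, you additionally derive it directly from the vanishing of $\theta_{2+}$ in Equation (\ref{eq:thetaforBM})), and then invoke Proposition \ref{prop:GeometricView} with $l=2$. The explicit dimension count $k=\frac{n_x(n_x+1)}{2}$ is a small verification the paper leaves implicit, and is a welcome addition.
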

\begin{proof}
in Appendix \ref{appendix:SBMCIF}.
\end{proof}
From the CIF-based derivation, we can see that SBM confines the statistical manifold in the parameter subspace spanned by those directions with high confidences, which is proved to maximally preserve the expected information distance.

\subsubsection{The Relation Between $[\zeta]_{2_t}$ and the ML Learning of SBM}
To learn such $[\zeta]_{2_t}$, we need to learn the parameters $\xi$ of SBM such that its stationary distribution preserves the same coordinates $[\eta^{2-}]$ as target distribution $q(x)$. Actually, this is exactly what traditional gradient-based learning algorithms intend to do to train SBM. Next proposition shows that the ML method for training SBM is equivalent to learn the tailored 2-mixed coordinates $[\zeta]_{2_t}$.

\begin{proposition}\label{prop:sbmmlcloseform}
    Given the target distribution $q(x)$ with 2-mixed coordinates: $$[\zeta]_2=(\eta^1_i, \eta^2_{ij},\theta_{2+}),$$
    the coordinates of the SBM with stationary distribution $q(x;\xi)$, learnt by ML, are uniquely given by $[\zeta]_{2_t}=(\eta_{i}^1, \eta_{ij}^2,\theta_{2+}=0)$
\end{proposition}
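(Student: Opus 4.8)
The plan is to show that the maximum-likelihood learning of SBM, viewed as a gradient ascent that converges to a stationary point of the log-likelihood, forces exactly the first two orders of $\eta$-coordinates of the model to match those of $q(x)$, while the higher-order $\theta$-coordinates of the model remain pinned at zero by the architectural constraint of SBM. Combined with the bijectivity of the $[\eta]\leftrightarrow[\theta]$ correspondence (and hence of the mixed-coordinate system $[\zeta]_2$), this pins down the learned distribution uniquely. First I would recall from Equations (\ref{eq:bmtotheta})--(\ref{eq:thetaforBM}) that SBM, having $\xi_{sbm}=\{U,b\}$ with all higher structure zero, lives on the submanifold $M_{sbm}$ whose $\theta$-coordinates of order $\geq 3$ vanish identically; equivalently, in the $[\zeta]_2$ chart, every point of $M_{sbm}$ has $\theta_{2+}=0$. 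So the ``$\theta_{2+}=0$'' half of the claimed coordinates is automatic and requires no work beyond quoting the parametrization.

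Next I would analyze the stationarity condition. Setting the gradient in Equation (\ref{eq:gradientloglikelihood}) (specialized to SBM, where there are no hidden units and the positive phase is just the empirical/target expectation) to zero yields $\langle \partial(-E)/\partial\xi\rangle_{q} = \langle \partial(-E)/\partial\xi\rangle_{p(\cdot;\xi)}$. Reading off $\partial E/\partial b_{x_i}$ and $\partial E/\partial U_{x_i,x_j}$ from Equation (\ref{eq:energyBM}) shows these derivatives are exactly the monomials $X_i$ and $X_iX_j$, so the stationarity equations become $E_q[X_i]=E_{p(\cdot;\xi)}[X_i]$ and $E_q[X_iX_j]=E_{p(\cdot;\xi)}[X_iX_j]$ for all $i,j$ — that is, the order-$1$ and order-$2$ $\eta$-coordinates of the learned model agree with those of $q(x)$. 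Hence the learned SBM has $[\eta^{2-}]=(\eta^1_i,\eta^2_{ij})$ identical to that of $q$, and $\theta_{2+}=0$, which is precisely $[\zeta]_{2_t}=(\eta^1_i,\eta^2_{ij},0,\dots,0)$. Uniqueness then follows because $[\zeta]_2$ is a global coordinate system on $S$ (each half being in bijection with a sub-block of $[\eta]$ resp.\ $[\theta]$ by the Legendre duality of Equations (\ref{eq:trans_eta_theta})--(\ref{eq:legedre})), so specifying $[\eta^{2-}]$ and $\theta_{2+}=0$ determines a single point of $S$, which moreover lies in $M_{sbm}$; this is also the $m$-projection of $q$ onto $M_{sbm}$ in the sense of \citet{amari92igbm}, matching the statement already used in Section~\ref{sec:geometryBM}.

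The main obstacle, and the part deserving care, is making the passage ``ML converges to a stationary point whose $\eta$-moments match $q$'' rigorous rather than heuristic: strictly, the log-likelihood of an exponential family over the open simplex is concave in $\theta$, so its unique maximizer over the affine subspace $\{\theta_{2+}=0\}$ is characterized by the moment-matching (mean-value) equations above, and one must note that this maximizer is attained in the interior (so the positivity requirement of IG is respected) provided $q(x)>0$ everywhere — which is the standing assumption on $S$. I would therefore phrase the argument as: the ML solution is the unique distribution in $M_{sbm}$ satisfying the moment constraints $[\eta^{2-}]_{p}=[\eta^{2-}]_q$, invoke concavity for existence/uniqueness, and then translate the constraints into mixed-coordinates. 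A secondary subtlety is checking that the ``stationary distribution $q(x;\xi)$'' in the proposition statement refers to the Boltzmann distribution $p(x;\xi)$ of Equation (\ref{eq:probBM}) and not to $q$ itself; with that reading the result is exactly the moment-matching fixed point, and no genuinely new computation beyond Equations (\ref{eq:energyBM})--(\ref{eq:gradientloglikelihood}) is needed.
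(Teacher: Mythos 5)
Your proposal is correct and follows essentially the same route as the paper's proof: the architectural constraint of SBM pins $\theta_{2+}=0$, the vanishing ML gradient yields the moment-matching conditions $\eta^1_i(p)=\eta^1_i(q)$ and $\eta^2_{ij}(p)=\eta^2_{ij}(q)$ from $\partial E/\partial b_{x_i}=x_i$ and $\partial E/\partial U_{x_i,x_j}=x_ix_j$, and uniqueness follows from the flatness of $M_{sbm}$ (your concavity-of-the-exponential-family argument is the statistical restatement of the paper's appeal to $e$-flatness). Your additional remarks on interior attainment under the positivity assumption and on reading $q(x;\xi)$ as the Boltzmann distribution are sensible clarifications but do not change the argument.
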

\begin{proof}
in Appendix \ref{appendix:sbmmlcloseform}.
\end{proof}

\subsection{The CIF-based Derivation of Restricted Boltzmann Machine}\label{subsec:geometryRBM}
In previous section, the general CIF uncovers why SBM uses the coordinates up to $2^{nd}$-order, i.e., preserves the $\eta$-coordinates  of the $1^{st}$-order and $2^{nd}$-order. In this section, we will investigate the cases where hidden units are introduced.
Particularly, one of the fundamental problem in neural network research is the unsupervised representation learning \citep{Bengio12representationlearning}, which attempts to characterize the underlying distribution through the discovery of a set of latent variables (or features). Many algorithmic models have been proposed, such as restricted Boltzmann machine (RBM) \citep{hinton06} and auto-encoders \citep{Rifai11contractive,Vincent2010Denoising}, for learning one level of feature extraction. Then, in deep learning models, the representation learnt at one level is used as input for learning the next level, etc.
However, some important questions remain to be clarified: Do these algorithms implicitly learn about the whole density or only some aspects? \emph{If they capture the essence of the target density, then can we formalize the link between the essential part and omitted part?} This section will try to answer these questions using CIF.
\subsubsection{Two Necessary Conditions for Representation Learning}\label{sec:cccondition}
In terms of one level feature extraction, there are two main principles that guide a good representation learning:
\begin{itemize}
  \item \emph{Compactness} of representation: minimize the redundancy between hidden variables in the representation \footnote{The concept of compactness in the neural network is of two-folds. 1) model-scale compactness: a restriction on the number of hidden units in order to give a parsimonious representation w.r.t underlying distribution; 2) structural compactness: a restriction on how hidden units are connected such that the redundancy in the hidden representation is minimized. In this paper, we mainly focus on the latter case.}.
  \item \emph{Completeness} of reconstruction: the learnt representation captures sufficient information in the input, and could completely reconstruct input distribution, in a statistical sense.
\end{itemize}

Let $S_{xh}$ be the general manifold of probability distributions over the joint space of visible units $x$ and hidden units $h$, and $S_x$ be the general manifold over visible units $x$.
Given any observation distribution $q(x)\in S_x$, our problem is to \emph{find the $p(x,h)\in S_{xh}$ with the marginal distribution $p(x)$ that best approximates $q(x)$, while $p(x,h)$ is consistent with the compactness and completeness conditions}.
Here, the K-L divergence, defined in Equation (\ref{eq:fisherinformationdivergence}), is used as the criterion of approximation.

First, we will investigate the submanifold of joint distributions $p(x,h)\in S_{xh}$ that fulfill the above two conditions. Let us denote this submanifold as $M_{cc}$. Extending Equation (\ref{eq:thetacoordinate}) to manifold $S_{xh}$, $p(x,h)$ has the $\theta$-coordinates defined by:
\begin{equation}\label{eq:thetacoordinatehidden}
    \log{p(x,h)}=\sum_{I\subseteq\{x,h\}\& I\neq NullSet}{\theta^I X_I} - \psi
\end{equation}
For the \emph{completeness} requirement, it is easy to prove that the probability of any input variable $x_i$ can be fully determined only by the given hidden representation and independent with remainng input variables $x_j(j\neq i)$, if and only if $\theta^I=0$ for any $I$ that contains two or more input variables in Equation (\ref{eq:thetacoordinatehidden}). Similarly, the \emph{compactness} corresponds to the extraction of statistically independent hidden variables given the input, i.e., $\theta^I=0$ for any $I$ that contains two or more hidden variables in Equation (\ref{eq:thetacoordinatehidden}). Then $M_{cc}$ is given by the following coordinate system:
\begin{equation}\label{eq:thetaforCC}
  [\theta]_{cc}=(\underbrace{\theta^{x_i}_1, \theta^{h_j}_1}_{1-order}, \underbrace{\theta^{x_ix_j}_2\!=0, \theta^{x_ih_j}_2, \theta^{h_ih_j}_2\!=0}_{2-order}, \underbrace{0, \dots, 0 }_{orders>2}).
\end{equation}

Then our problem is restated as to \emph{find the $p(x,h)\in M_{cc}$ with the marginal distribution $p(x)$ that best approximates $q(x)$}.

\subsubsection{The Equivalence between RBM and $M_{cc}$} \label{sec:rbmequivalentcc}
RBM is a special kind of BM that restricts the interactions in Equation (\ref{eq:energyBM}) only to those between hidden and visible units, i.e., $U_{x_i,x_j}=0, V_{h_i,h_j}=0~\forall x_i, x_j\in x; h_i,h_j \in h$. Let $\xi_{rbm}=\{W,b,d\}$ denotes the set of parameters in RBM.
Thus, the $\theta$-coordinates for RBM can be derived directly from Equation (\ref{eq:thetaforBM}):
\begin{equation}\label{eq:thetaforRBM}
  [\theta]_{RBM}=(\underbrace{\theta^{x_i}_1, \theta^{h_j}_1}_{1-order}, \underbrace{\theta^{x_ix_j}_2\!=0, \theta^{x_ih_j}_2, \theta^{h_ih_j}_2\!=0}_{2-order}, \underbrace{0, \dots, 0 }_{orders>2})
\end{equation}
Comparing Equation (\ref{eq:thetaforCC}) to (\ref{eq:thetaforRBM}), the submanifold $M_{rbm}$ defined by RBM is equivalent with $M_{cc}$ since they share exactly the same coordinate system.
This indicates that the \emph{compactness} and \emph{completeness} conditions is indeed realized by RBM. We use a simpler notation $B$ to denote $M_{rbm}$.
Next, we will show how to use CIF to interpret the training process of RBM.

\subsubsection{The CIF-based Interpretation on the Learning of RBM} \label{sec:iplearning}
A RBM produces a stationary distribution $p(x,h)\in S_{xh}$ over $\{x,h\}$. However, given the target distribution $q(x)$, only the marginal distribution of RBM over the visible units are specified by $q(x)$, leaving the distributions on hidden units vary freely. Let $H_q$ be the set of probability distributions $q(x,h)\in S_{xh}$ that have the same marginal distribution $q(x)$ and the conditional distributions $q(h_j|x)$ of each hidden unit $h_j$ is realized by the RBM's activation function with parameter $\xi_{rbm}$ (that is the logistic sigmoid activation: $f(h_j|x;\xi_{rbm})=\frac{1}{1+exp\{-\sum_{i\in \{1,\dots,n_x\}} W_{ij}x_i - d_j\}}$):
\begin{equation}\label{eq:submanifoldhq}
  H_q\! =\! \{q(x,h)\in S_{xh}|\exists\xi_{rbm},\!\sum_h{q(x,h)}=q(x),and~q(h|x;\xi_{rbm})\!=\!\!\prod_{h_j \in h}\!\!{f(h_j|x;\xi_{rbm})}\}
\end{equation}
Then our problem in Section \ref{sec:cccondition} is restated with respect to $S_{xh}$: \emph{search for a RBM in $B$ that minimizes the divergence from $H_q$ to $B$} \footnote{This restated problem directly follows from the fact that: the minimum divergence $D(H_q, B)$ in the whole manifold $S_{xh}$ is equal to the minimum divergence $D[q(x),B_{x}]$ in the visible manifold $S_x$, shown in Theorem 7 in \cite{amari92igbm}.}.

Given $p(x,h;\xi_p)$, its best approximation on $H_q$ is defined by the projection $\Gamma_H(p)$, which gives the minimum K-L divergence from $H_q$ to $p(x,h;\xi_p)$. Next proposition shows how the projection $\Gamma_H(p)$ is obtained.

\begin{proposition}\label{prop:hqtorbm}
    Given a distribution $p(x,h;\xi_p)\in B$, the projection $\Gamma_H(p)\in H_q$ that gives the minimum divergence $D(H_q, p(x,h;\xi_p))$ from $H_q$ to $p(x,h;\xi_p)$ is the $q(x,h;\xi_q) \in H_q$ that satisfies $\xi_q=\xi_p$.
\end{proposition}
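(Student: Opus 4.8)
The plan is to exploit the product structure shared by every distribution in $H_q$ and every distribution in $B=M_{rbm}$: in both families the hidden-given-visible conditional has the RBM activation form $\prod_{h_j\in h} f(h_j|x;\xi)$, so the Kullback--Leibler divergence $D\big(q(x,h),p(x,h;\xi_p)\big)$ splits, via the chain rule for relative entropy, into one term that is constant over $H_q$ and one term that is a non-negative $q(x)$-weighted average of conditional divergences, vanishing exactly when the conditionals coincide.

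First I would write a generic element of $H_q$ as $q(x,h)=q(x)\prod_{h_j\in h} f(h_j|x;\xi)$, where $q(x)$ is the fixed target marginal and $\xi=\xi_{rbm}$ is the free RBM parameter whose logistic activation realizes the conditional (Equation~(\ref{eq:submanifoldhq})); and I would write the given RBM distribution as $p(x,h;\xi_p)=p(x;\xi_p)\prod_{h_j\in h} f(h_j|x;\xi_p)$, using that the RBM energy (\ref{eq:energyBM}) with $U=V=0$ makes the hidden units conditionally independent given $x$, so that $p(h|x;\xi_p)=\prod_{h_j\in h} f(h_j|x;\xi_p)$. Substituting these two factorizations into Equation~(\ref{eq:fisherinformationdivergence}) and applying the chain rule gives
\[
 D\big(q(x,h),p(x,h;\xi_p)\big)=D\big(q(x),p(x;\xi_p)\big)+\sum_{x} q(x)\, D\big(q(h|x;\xi),\,p(h|x;\xi_p)\big).
\]

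Then I would treat the two terms separately. The first term does not involve $\xi$ at all: $q(x)$ is the fixed target and $\xi_p$ is given, so $D\big(q(x),p(x;\xi_p)\big)$ is a constant offset over $H_q$ and plays no role in the minimization. The second term is a sum, with non-negative weights $q(x)$, of relative entropies between two fully factorized Bernoulli conditionals; it is therefore non-negative and equals zero if and only if $q(h|x;\xi)=p(h|x;\xi_p)$ for every $x$ with $q(x)>0$. Hence the minimum of $D\big(q(x,h),p(x,h;\xi_p)\big)$ over $H_q$ is attained precisely by the $q(x,h)\in H_q$ whose activation conditional matches that of $\xi_p$; since the visible marginal is already pinned to $q(x)$, such a minimizing joint distribution is unique, and we may label it $q(x,h;\xi_q)$ with $\xi_q=\xi_p$. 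That distribution is $\Gamma_H(p)$, which is the claim.

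The one step that needs care --- the mild ``obstacle'' --- is the final identifiability remark: concluding from the pointwise equality $\prod_j f(h_j|x;\xi)=\prod_j f(h_j|x;\xi_p)$ (for all $x$ with $q(x)>0$) that the weight and hidden-bias components of $\xi$ agree with those of $\xi_p$, and observing that the visible-bias component never enters the activation, so taking it equal to that of $\xi_p$ is without loss; everything else is the routine chain-rule bookkeeping above.
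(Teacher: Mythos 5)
Your proposal is correct and follows essentially the same route as the paper's proof: decompose $D[q(x,h),p(x,h;\xi_p)]$ via the chain rule into $D[q(x),p(x)]$ plus a $q(x)$-weighted average of conditional divergences, note that the first term is constant over $H_q$, and conclude that the minimum is attained exactly when the hidden-given-visible conditionals coincide, i.e., $\xi_q=\xi_p$. Your added remark on identifiability (the visible bias not entering the activation) is a small refinement the paper leaves implicit, but it does not change the argument.
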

\begin{proof}
in Appendix \ref{appendix:hptorbm}.
\end{proof}

On the other hand, given $q(x,h;\xi_q)\in H_q$, the best approximation on $B$ is the projection $\Gamma_B(q)$ of $q$ to $B$. In order to obtain an explicit expression of $\Gamma_B(q)$, we introduce the following fractional mixed coordinates $[\zeta^{xh}]$ \footnote{Note that both the fractional mixed coordinates $[\zeta^{xh}]$ and $2$-mixed coordinates $[\zeta]$ are mixtures of $\eta$-coordinates and $\theta$-coordinates. In $[\zeta]$, coordinates of the same order are taken from either $[\eta]$ or $[\theta]$. However, in $[\zeta^{xh}]$, the $2^{nd}$-order coordinates consist of the $\{\eta_{x_ih_j}^2\}$ from $[\eta]$ and $\{\theta^{x_ix_j}_2, \theta^{h_ih_j}_2\}$ from $[\theta]$, that is why the term ``fractional'' is used.} for the general manifold $S_{xh}$:
\begin{equation}\label{eq:mixedcoordinatenew}
  [\zeta^{xh}]=(\underbrace{\eta_{x_i}^1, \eta_{h_j}^1}_{1-order}, \underbrace{\theta^{x_ix_j}_2, \eta_{x_ih_j}^2, \theta^{h_ih_j}_2}_{2-order}, \underbrace{\theta_{2+}}_{orders>2})
\end{equation}
The $[\zeta^{xh}]$ is a valid coordinate system, that is, the relation between $[\theta]$ and $[\zeta^{xh}]$ is bijective. This is shown in the next proposition.
\begin{proposition}\label{prop:fractionalmixbijective}
    The relation between the two coordinate systems $[\theta]$ and $[\zeta^{xh}]$ is bijective.
\end{proposition}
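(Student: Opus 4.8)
The plan is to recognize $[\zeta^{xh}]$ as a \emph{partition-based} mixed coordinate system and to run the same dually-flat argument that validates the $l$-mixed-coordinates in Section~\ref{sec:def}, only with a finer split. Partition the nonempty multi-indices over $\{x_1,\dots,x_{n_x},h_1,\dots,h_{n_h}\}$ into the set $B$ of indices on which $[\zeta^{xh}]$ uses $\eta$-coordinates --- all first-order indices $\{x_i\},\{h_j\}$ together with the visible-hidden pairs $\{x_i,h_j\}$ --- and the complementary set $A$ of indices on which it uses $\theta$-coordinates --- the pairs $\{x_i,x_j\}$, the pairs $\{h_i,h_j\}$, and every index of order at least $3$. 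Each nonempty multi-index lies in exactly one of $A,B$, so $\{A,B\}$ is a genuine partition; it simply fails to respect the order-grading used in $[\zeta]_l$, which is precisely the ``fractional'' feature. Writing $\eta_I=\partial\psi/\partial\theta^I$, the coordinate transformation in question is $F:\ \theta\ \mapsto\ \big((\eta_I(\theta))_{I\in B},\,(\theta^I)_{I\in A}\big)$, and it suffices to show $F$ is a bijection of $[\theta]$-values onto $[\zeta^{xh}]$-values.

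The steps I would carry out are as follows. First, local invertibility: ordering coordinates with the $B$-block before the $A$-block in both source and target, and noting that the $\theta^A$-part of $F$ depends on $\theta$ only through $\theta^A$, the Jacobian of $F$ is block lower-triangular,
\[
\frac{\partial[\zeta^{xh}]}{\partial[\theta]}=\begin{pmatrix}\ \partial\eta_B/\partial\theta^B & \partial\eta_B/\partial\theta^A\ \\[2pt]\ 0 & I\ \end{pmatrix},
\]
whose determinant equals $\det\big((g_{IJ}(\theta))_{I,J\in B}\big)$, a principal submatrix of $G_\theta$ (entries given by Proposition~\ref{prop:fishermatrix}). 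Since $G_\theta=(\partial^2\psi/\partial\theta^I\partial\theta^J)$ is positive definite --- $\psi$ being the strictly convex log-partition function of Eq.~(\ref{eq:psi_theta}) --- every principal minor is positive, so $F$ is a local diffeomorphism at every point of $S_{xh}$. Second, global injectivity: the $\theta^A$-block is read off directly, and on any affine slice $\{\theta^A=c\}$ the restriction $\theta^B\mapsto\psi(c,\theta^B)$ is again strictly convex, hence its gradient $\theta^B\mapsto(\eta_I)_{I\in B}$ is injective; thus $F$ separates points. Being a locally invertible, globally injective smooth map, $F$ is therefore a homeomorphism onto its image, i.e., a bijection between the two coordinate systems, which is the claim.

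The genuinely substantive ingredient here is the positive-definiteness of $G_\theta$ (equivalently, the strict convexity of $\psi$); granting that, the block-triangular Jacobian and the one-line convexity argument for injectivity are routine, and I expect the only real care to be bookkeeping --- checking that the stated partition is exhaustive and that the block structure of the Jacobian is as claimed. If in addition one wants the image of $F$ to be all of the ambient parameter space (which is \emph{not} needed for the asserted bijectivity, but is what makes $[\zeta^{xh}]$ a \emph{global} chart with prescribable coordinates), the cleanest route is to note that $F$ is a proper local diffeomorphism onto a simply connected target, hence a covering map and so a global diffeomorphism --- or simply to invoke, as in Section~\ref{sec:def}, the general mixed-coordinate theorem for dually flat manifolds \citep{amari92igbm}, for which $\{A,B\}$ is an admissible split.
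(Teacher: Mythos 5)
Your proof is correct, and it reaches the same structural reduction as the paper's --- read off the $\theta^A$-block directly, then show that on the slice where $\theta^{x_ix_j}_2,\theta^{h_ih_j}_2,\theta_{2+}$ are held fixed the remaining coordinates are pinned down by $(\eta_{x_i}^1,\eta_{h_j}^1,\eta_{x_ih_j}^2)$ --- but the mechanism you invoke for that last step is genuinely different. The paper argues by contradiction inside the information-geometric framework: it observes that the fixed-$\theta^A$ slice is an $e$-flat submanifold $R$, picks a point $Q$ whose projection onto $R$ is $P_1$, notes that the gradient of $D[Q,\cdot]$ over the free parameters is the difference of $\eta$-coordinates, and concludes from the uniqueness of the $e$-projection (Amari's Projection Theorem) that a second point $P_2$ with the same $\eta$-values would be a second projection of $Q$, which is impossible. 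You instead appeal directly to the strict convexity of the log-partition function $\psi$: restricted to the affine slice $\{\theta^A=c\}$ it remains strictly convex, so its gradient map $\theta^B\mapsto(\eta_I)_{I\in B}$ is injective, and injectivity of the full map follows at once. The two arguments rest on the same underlying fact --- uniqueness of the $e$-projection is itself a consequence of this convexity --- but yours is more elementary and self-contained, avoiding the Projection Theorem as a black box, and your block-triangular Jacobian computation (determinant equal to $\det\bigl((g_{IJ})_{I,J\in B}\bigr)>0$ by positive definiteness of $G_\theta$) additionally establishes that the coordinate change is a local diffeomorphism, which the paper does not address. Your closing remark is also well taken: surjectivity onto a product of prescribed ranges is a separate (and unneeded) question, and the partition $\{A,B\}$ is exactly the admissible split for which the general mixed-coordinate construction of dually flat manifolds applies.
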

\begin{proof}
in Appendix \ref{appendix:fractionalmixbijective}.
\end{proof}

The next proposition gives an explicit expression of the coordinates for the projection $\Gamma_B(q)$ learnt by RBM using the fractional mixed coordinates in Equation (\ref{eq:mixedcoordinatenew}).
\begin{proposition}\label{prop:projectionRBMcloseform}
    Given $q(x,h;\xi_q) \in H_q$ with fractional mixed coordinates: $$[\zeta^{xh}]_{q}=(\eta_{x_i}^1, \eta_{h_j}^1, \theta^{x_ix_j}_2, \eta_{x_ih_j}^2, \theta^{h_ih_j}_2,\theta_{2+}),$$
    the coordinates of the learnt projection $\Gamma_B(q)$ of $q(x,h;\xi_q)$ on the submanifold $B$ are uniquely given by:
\begin{equation}\label{eq:mixedcoordinatenewProjectionRBM}
  [\zeta^{xh}]_{\Gamma_B(q)}=(\eta_{x_i}^1, \eta_{h_j}^1, \theta^{x_ix_j}_2=0, \eta_{x_ih_j}^2, \theta^{h_ih_j}_2=0, \theta_{2+}=0)
\end{equation}
\end{proposition}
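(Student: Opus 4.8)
The plan is to identify $\Gamma_B(q)$ as the $m$-projection of $q$ onto the $e$-flat submanifold $B$ and then read off its coordinates in the fractional mixed system $[\zeta^{xh}]$. First I would record, from Equation~(\ref{eq:thetaforRBM}), that $B=M_{rbm}$ is exactly the submanifold of $S_{xh}$ obtained by clamping $\theta^{x_ix_j}_2=0$, $\theta^{h_ih_j}_2=0$ and $\theta_{2+}=0$, while $\theta^{x_i}_1,\theta^{h_j}_1,\theta^{x_ih_j}_2$ stay free. Hence $B$ is an exponential family whose free natural parameters are indexed by $\mathcal{I}_1=\{\{x_i\}\}\cup\{\{h_j\}\}\cup\{\{x_i,h_j\}\}$ and whose clamped natural parameters are indexed by the complementary set $\mathcal{I}_0$. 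Crucially, $\mathcal{I}_1$ is precisely the index set labelling the $\eta$-block of $[\zeta^{xh}]$, namely $(\eta_{x_i}^1,\eta_{h_j}^1,\eta_{x_ih_j}^2)$, and $\mathcal{I}_0$ labels its $\theta$-block $(\theta^{x_ix_j}_2,\theta^{h_ih_j}_2,\theta_{2+})$; so $B=\{\,[\zeta^{xh}]\text{ with }\theta\text{-block}=0\,\}$.

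Second, I would establish that the $\eta$-block and the $\theta$-block of $[\zeta^{xh}]$ are Fisher-orthogonal. This is the fractional counterpart of Proposition~\ref{prop:fishermatrix_mix}: the proof there uses only the Legendre relations~(\ref{eq:trans_eta_theta}) and the mutual-inverse identity $\sum_J g^{IJ}g_{JK}=\delta^I_K$, neither of which refers to the grading by order, so it carries over verbatim with $(\mathcal{I}_1,\mathcal{I}_0)$ in place of the order-graded split, yielding a block-diagonal $G_{\zeta^{xh}}$. Together with the bijectivity of $[\zeta^{xh}]$ (Proposition~\ref{prop:fractionalmixbijective}) this makes the prescription ``freeze the $\eta$-block, zero the $\theta$-block'' well defined: let $p^{*}$ be the unique distribution with $[\zeta^{xh}]_{p^{*}}=(\eta_{x_i}^1(q),\eta_{h_j}^1(q),0,\eta_{x_ih_j}^2(q),0,0)$, which lies in $B$ since its $\theta$-block vanishes.

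Third, I would show $p^{*}$ is the $m$-projection. Since $B$ is $e$-flat with free directions $\{\theta^I:I\in\mathcal{I}_1\}$, the first-order optimality condition for minimizing $D(q\|p)$ over $p\in B$ is $\partial D(q\|p)/\partial\theta^I=0$ for $I\in\mathcal{I}_1$; and a direct differentiation of Equation~(\ref{eq:fisherinformationdivergence}), using $\partial\log p/\partial\theta^I=X_I-\eta_I$, gives $\partial D(q\|p)/\partial\theta^I=\eta_I(p)-\eta_I(q)$. Hence the optimum must satisfy $\eta_I(p)=\eta_I(q)$ for all $I\in\mathcal{I}_1$, i.e.\ its $\eta$-block equals that of $q$, which identifies it with $p^{*}$. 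Strict convexity of $\theta\mapsto D(q\|p(\cdot;\theta))$ on $B$ (its Hessian is the positive-definite Fisher matrix of $B$) shows this critical point is the unique global minimizer, so $\Gamma_B(q)=p^{*}$, which is exactly Equation~(\ref{eq:mixedcoordinatenewProjectionRBM}). Equivalently, the orthogonality from the second step yields the Pythagorean relation $D(q\|p)=D(q\|p^{*})+D(p^{*}\|p)$ for all $p\in B$, giving the same conclusion.

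The step I expect to be the main obstacle is the second one: transporting the block-diagonal/orthogonality property from the order-based $l$-mixed-coordinates of Proposition~\ref{prop:fishermatrix_mix} to the ``fractional'' split used in $[\zeta^{xh}]$, where the cut between $\eta$- and $\theta$-coordinates falls inside the second order rather than between consecutive orders. Once that is in place, the remaining pieces — the first-order condition, the computation $\partial D(q\|p)/\partial\theta^I=\eta_I(p)-\eta_I(q)$, and the strict convexity giving uniqueness — are routine. I would also note that nothing in the argument uses $q\in H_q$: the statement holds for the projection onto $B$ of any $q(x,h;\xi_q)\in S_{xh}$.
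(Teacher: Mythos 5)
Your proposal is correct and follows essentially the same route as the paper's proof: both rest on the $e$-flatness of $B$ for uniqueness and on the computation $\partial D(q\|p)/\partial\theta^I=\eta_I(p)-\eta_I(q)$ for the free directions, which forces the $\eta$-block of the optimum to match that of $q$ while the $\theta$-block vanishes by membership in $B$; the paper just phrases the derivative in terms of the energy parameters $W_{ij},b_{x_i},d_{h_j}$ and invokes convergence of gradient descent where you invoke strict convexity of $\theta\mapsto\psi(\theta)-\sum_I\theta^I\eta_I(q)$. One remark: the step you flag as the main obstacle --- transporting the block-orthogonality of Proposition~\ref{prop:fishermatrix_mix} to the fractional split --- is not actually needed for your own argument, since the first-order condition together with the bijectivity of $[\zeta^{xh}]$ (Proposition~\ref{prop:fractionalmixbijective}) already pins down $p^{*}$; the paper deliberately avoids that orthogonality claim (it notes the fractional Fisher matrix lacks the closed form of Proposition~\ref{prop:fishermatrix_mix}), so you may simply drop it or keep it only as the optional Pythagorean aside.
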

\begin{proof}
This proof comes in three parts:
\begin{enumerate}
  \item the projection $\Gamma_B(q)$ of $q(x,h)$ on $B$ is unique;
  \item this unique projection $\Gamma_B(q)$ can be achieved by minimizing the divergence $D[q(x,h),B]$ using gradient descent method;
  \item The fractional mixed coordinates of $\Gamma_B(q)$ is exactly the one given in Equation (\ref{eq:mixedcoordinatenewProjectionRBM}).
\end{enumerate}
See Appendix \ref{appendix:projectionRBMcloseform} for the detailed proof.
\end{proof}
\begin{figure}
  \centering
  \includegraphics[width=0.4\textwidth]{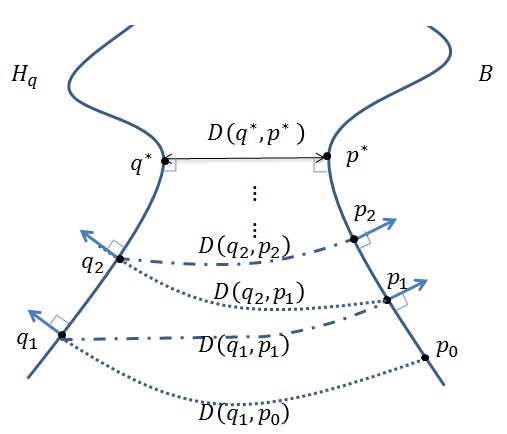}
  \caption{The iterative learning for RBM: in searching for the minimum divergence between $H_q$ and $B$, we first choose an initial RBM $p_0$ and then perform projections $\Gamma_H(p)$ and $\Gamma_B(q)$ iteratively, until the fixed points of the projections $p^*$ and $q^*$ are reached. With different initializations, the iterative projection algorithm may end up with different local minima on $H_q$ and $B$, respectively.}
  \label{fig:iterativelearning} 
\end{figure}

Back to the problem of obtaining the best approximation to the given target $q(x)$, the learning of RBM can be implemented by the following iterative projection process\footnote{\citet{amari92igbm} proposed a similar iterative algorithm framework for the fully-connected BM.
In the present paper, we reformulate this iterative algorithm for the learning of RBM and give explicit expressions of how the projections are achieved.}:

Let $p_0(x,h;\xi_p^0)$ be the initial RBM.
For $i=0,1,2,\dots$,
\begin{enumerate}
  \item Put $q_{i+1}(x,h)=\Gamma_H(p_i(x,h;\xi_p^i))$
  \item Put $p_{i+1}(x,h;\xi_p^{i+1})=\Gamma_B(q_{i+1}(x,h))$
\end{enumerate}
where $\Gamma_H(p)$ denotes the projection of $p(x,h;\xi_p)$ to $H_q$, and $\Gamma_B(q)$ denotes the projection of $q(x,h)$ to $B$.
The iteration ends when we reach the fixed points of the projections $p^*$ and $q^*$, that is $\Gamma_H(p^*)=q^*$ and $\Gamma_B(q^*)=p^*$. The iterative projection process of RBM is illustrated in Figure \ref{fig:iterativelearning}. The convergence property of this iterative algorithm is guaranteed using the following proposition:
\begin{proposition}\label{prop:monotonicdivergence}
    The monotonic relation holds in the iterative learning algorithm:
    \begin{equation}\label{eq:propmonotonic}
      D[q_{i+1},p_i] \geq D[q_{i+1},p_{i+1}] \geq D[q_{i+2},p_{i+1}], \forall i=\{0,1,2,\dots\}
    \end{equation}
    where the equality holds only for the fixed points of the projections.
\end{proposition}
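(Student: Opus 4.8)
The plan is to view the iteration as an alternating minimization of the single bivariate functional $(q,p)\mapsto D[q,p]$ over $q\in H_q$ (with $p$ fixed) and over $p\in B$ (with $q$ fixed), and to read the two inequalities off directly from the optimality of each projection. First I would record the optimality facts that the earlier propositions already provide: by the definition of a projection in (\ref{eq:fisherinformationdivergence}) together with Proposition~\ref{prop:projectionRBMcloseform}, $p_{i+1}=\Gamma_B(q_{i+1})$ is the \emph{unique} point of $B$ attaining $\min_{p'\in B}D[q_{i+1},p']$; and by Proposition~\ref{prop:hqtorbm}, $q_{i+2}=\Gamma_H(p_{i+1})$ is the \emph{unique} point of $H_q$ attaining $\min_{q'\in H_q}D[q',p_{i+1}]$ (and likewise $q_{i+1}$ is the unique minimizer of $D[q',p_i]$). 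I would also note the two feasibility remarks that make the comparisons legitimate: every $p_i$ lies in $B$ (true of the initial RBM $p_0$, and preserved because each subsequent $p_i$ is an output of $\Gamma_B$), and every $q_{i+1}$ lies in $H_q$ (each is an output of $\Gamma_H$).

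Given these, both inequalities are immediate. Since $p_i\in B$ is a competitor in the minimization defining $p_{i+1}$, we obtain $D[q_{i+1},p_i]\ge D[q_{i+1},p_{i+1}]$; since $q_{i+1}\in H_q$ is a competitor in the minimization defining $q_{i+2}$, we obtain $D[q_{i+1},p_{i+1}]\ge D[q_{i+2},p_{i+1}]$. Concatenating the two gives (\ref{eq:propmonotonic}) for every $i$, and since this is a monotone sequence bounded below by $0$ it converges, which is the convergence behavior illustrated in Figure~\ref{fig:iterativelearning}.

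For the equality clause I would exploit uniqueness of the projections. Suppose equality holds in the first inequality at step $i$; then $p_i$ also attains $\min_{p'\in B}D[q_{i+1},p']$, so uniqueness in Proposition~\ref{prop:projectionRBMcloseform} forces $p_i=p_{i+1}$, and applying $\Gamma_H$ then gives $q_{i+2}=\Gamma_H(p_{i+1})=\Gamma_H(p_i)=q_{i+1}$; hence $\Gamma_H(p_i)=q_{i+1}$ and $\Gamma_B(q_{i+1})=p_i$, i.e.\ $(p_i,q_{i+1})$ is a fixed point of the pair of projections. The same reasoning applied to the second inequality (using uniqueness in Proposition~\ref{prop:hqtorbm}) forces $q_{i+1}=q_{i+2}$ and then $p_{i+2}=p_{i+1}$, again a fixed point; conversely, at a fixed point the relevant quantities coincide pairwise and all the divergences are equal. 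I do not expect a genuine obstacle here: the substantive content — existence, uniqueness, and the closed forms of $\Gamma_H$ and $\Gamma_B$ — was already done in Propositions~\ref{prop:hqtorbm} and~\ref{prop:projectionRBMcloseform}, so this proposition is essentially the standard ``alternating minimization cannot increase the objective'' argument dressed for $H_q$ and $B$. The only points demanding care are keeping the feasibility conditions explicit and invoking \emph{uniqueness} (not merely optimality) of the projections when extracting the equality case.
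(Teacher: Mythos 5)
Your proof is correct and follows essentially the same route as the paper's: both inequalities are read off from the optimality of $\Gamma_B(q_{i+1})$ over $B$ (with $p_i\in B$ as a competitor) and of $\Gamma_H(p_{i+1})$ over $H_q$ (with $q_{i+1}\in H_q$ as a competitor). Your treatment of the equality clause via uniqueness of the projections is actually more explicit than the paper's, which asserts it without argument, but the core alternating-minimization step is identical.
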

\begin{proof}
in Appendix \ref{appendix:monotonicdivergence}.
\end{proof}

The CIF-based iterative projection procedure (IP) for RBM gives us an alternative way to investigate the learning process of RBM.
The invariance in the learning of RBM is the CIF:
in the $i$th iteration, given $q_i \in H_q$ with fractional mixed coordinates:
    $$[\zeta^{xh}]_{q_i}=(\eta_{x_i}^1, \eta_{h_j}^1, \theta^{x_ix_j}_2, \eta_{x_ih_j}^2, \theta^{h_ih_j}_2,\theta_{2+})$$
then the ordinates of the projection $p_{i}$ on $B$, i.e., $\Gamma_{B}(q_i)$, is given by Equation (\ref{eq:mixedcoordinatenewProjectionRBM}):
    $$[\zeta^{xh}]_{p_i}=(\eta_{x_i}^1, \eta_{h_j}^1, \theta^{x_ix_j}_2=0, \eta_{x_ih_j}^2, \theta^{h_ih_j}_2=0, \theta_{2+}=0)$$
Now we will show that the process of the projection $\Gamma_{B}(q_i)$ can be derived from CIF, i.e., highly confident coordinates $[\eta_{x_i}^1, \eta_{h_j}^1, \eta_{x_ih_j}^2]$ of $q_i$ are preserved while lowly confident coordinates $[\theta_{2+}]$ are set to neutral value zero.
For the fractional mix coordinates system $[\zeta^{xh}]$, the closed form of its Fisher information matrix does not have the good expression formula like Proposition \ref{prop:fishermatrix_mix} which are possessed by the mixed-coordinate system $[\zeta]$.
Next, we will show that the fractional mix-coordinates of $\Gamma_{B}(q_i)$ can be derived in three steps by jointly applying the CIF and the \emph{completeness} and \emph{compactness} conditions.
Let the corresponding 2-mixed $\zeta$-coordinates for $q_i$ be $[\zeta]_{2,q_i}=(\eta_{x_i}^1, \eta_{h_j}^1, \eta_{x_ix_j}^2, \eta_{x_ih_j}^2, \eta_{h_ih_j}^2,\theta_{2+})$. First, we apply the general CIF for parametric reduction in $[\zeta]_{2,q_i}$ by replacing lowly confident coordinates $[\theta_{2+}]$ with neutral value zeros and preserving the remaining coordinates, resulting in the tailored mix-coordinates $[\zeta]_ {2_t,q_i}=(\eta_{x_i}^1, \eta_{h_j}^1, \eta_{x_ix_j}^2, \eta_{x_ih_j}^2, \eta_{h_ih_j}^2,\theta_{2+}=0)$, as described in Section \ref{sec:cif}. Then, we transmit $[\zeta]_ {2_t,q_i}$ into the fractional coordinate system, i.e., $[\zeta^{xh}]_{q_i}=(\eta_{x_i}^1, \eta_{h_j}^1, \theta^{x_ix_j}_2, \eta_{x_ih_j}^2, \theta^{h_ih_j}_2, \theta_{2+}=0)$. Finally, the \emph{completeness} and \emph{compactness} conditions require that the $[\theta^{x_ix_j}_2, \theta^{h_ih_j}_2]$ in $[\zeta^{xh}]_{q_i}$ are also set to neutral value zeros. Hence, we can see that the coordinates of the projection $\Gamma_{B}(q_i)$ is exactly the one given by Equation (\ref{eq:mixedcoordinatenewProjectionRBM}).

\subsubsection{Comparison of The Iterative Projection and Gradient-based Methods} \label{sec:compareMLIP}
Given current parameters $\xi^i$ of RBM and samples $\underline{x}$ that generated from the underlying distribution $q(x)$, IP could be implemented in two phases:
\begin{enumerate}
  \item In the first phase, we generate samples for the projection $\Gamma_H(p_i)$ of the stationary distribution $p_i(x,h;\xi^i)$ on $H_q$. This is done by sampling $\underline{h}$ from RBM's conditional distribution $p_i(h|x;\xi^i)$ given $\underline{x}$, and hence $(\underline{x},\underline{h})\sim \Gamma_H(p_i(x,h;\xi^i))$;
  \item In the second phase, we train a new RBM with those generated samples $(\underline{x},\underline{h})$, and then update the RBM's parameters to be the newly trained ones, denoted as $\xi^{i+1}$.
\end{enumerate}
 Note that in the second phase all hidden units in RBM are visible in samples $(\underline{x},\underline{h})$. Thus this sub-learning task is similar with training a BM without hidden units, which can be implemented by traditional gradient-based methods.

Given current parameters $\xi^i$ of RBM (with stationary distribution $p_i$) and samples $\underline{x}\sim q(x)$, we can see that both ML and IP share the same sampling process, sampling $(\underline{x},\underline{h})$ in the $\Gamma_H(p_i)$ projection phase of IP and the positive phase of ML.
In terms of the quality of the sampling process, if $\underline{x}$ is sufficient and so is $(\underline{x}, \underline{h})$, both the $\Gamma_H(p_i)$ of IP and positive phase of ML can achieve an accurate estimation of $q(x,h;\xi^i)$. On the other hand, if $\underline{x}$ is insufficient (it is usually true in real-world applications), sampling biases with respect to $q(x,h;\xi^i)$ may be introduced in the sampling process so that the accurate estimation can not be guaranteed.

However, the updating rule is different:
IP realizes the parameter updating by using a sub-learning task (fitting a new RBM to the generated sample $(\underline{x},\underline{h})$), while ML adjusts the parameters directly using Equation (\ref{eq:learnrulestochasticgradient}).
Let $q_{i+1}$ denote the distribution of $(\underline{x},\underline{h})$, and $p_{i+1}$ and $p'_{i+1}$ denote the stationary distributions of RBM after the parameter updating using IP and ML respectively.
With a proper learning rate $\lambda$, the parameter updating phase of ML would lead to the decrease of divergence, that is, $D[q_{i+1},p_{i}] \geq D[q_{i+1},p'_{i+1}]$. Since $p_{i+1}$ is the projection of $q_{i+1}$ on $B$, then we have $D[q_{i+1},p'_{i+1}] \geq D[q_{i+1},p_{i+1}]$. Therefore, ML can be seen as an ``unmature projection'' of $q_{i+1}$ on $B$, and it does not guarantee that the theoretical projection $\Gamma_B(q_{i+1})$ is reached.
To achieve the same projection point $\Gamma_B(q_{i+1})$ as IP, ML needs multiple updating iterations, where each iteration moves the current distribution towards $\Gamma_B(q_{i+1})$ in the gradient direction by some oracle step size (controlled by the learning rate).

Another big difference is that IP separates the positive sampling process and the gradient estimation in two phases: $\Gamma_H$ and $\Gamma_B$, meaning that there is no positive sampling in the sub-learning of $\Gamma_B$. However, ML needs to constantly adjust the gradient direction with respect to certain learning rate immediately after each sampling process. Later experiments in Section \ref{sec:experimentrbm} indicate that this may give IP the advantage of robustness against sampling biases, especially when the gradient is too small to be distinguishable from these biases in the learning process.

\subsubsection{Discussions on deep Boltzmann machine}\label{sec:discussiondeeplearning}
For deep Boltzmann machine (DBM) \citep{Salakhutdinov2012}, several layers of RBM compose a deep architecture in order to achieve a representation at a sufficient abstraction level, where the hidden units are trained to capture the dependencies of units at the lower layers, as shown in Figure \ref{fig:deeparchitecture}. In this section, we give a discussion on some theoretical insights on the deep architectures, in terms of the CIF principle.
\begin{figure}
  \centering
  \includegraphics[width=0.8\textwidth]{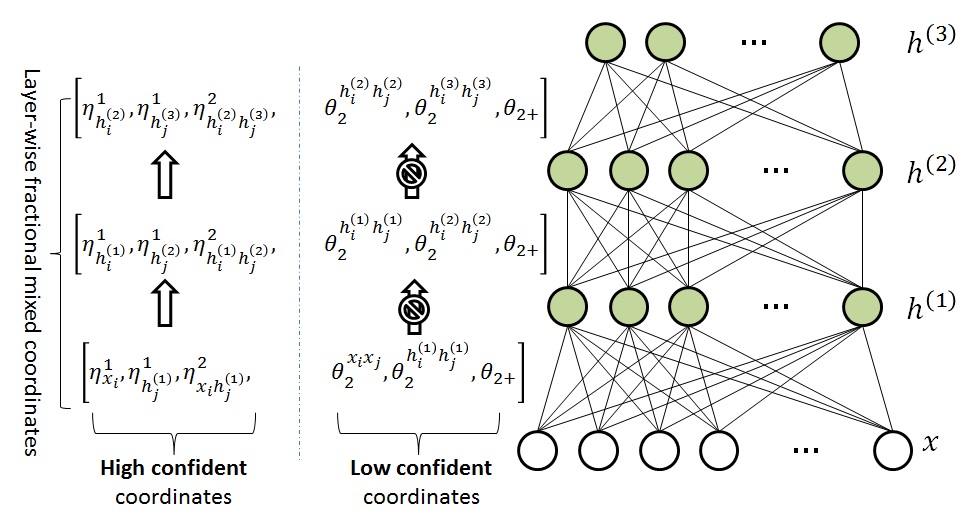}
  \caption{A multi-layer BM with visible units $x$ and hidden layers $h^{(1)}$, $h^{(2)}$ and $h^{(3)}$. The greedy layer-wise training of deep architecture is to maximally preserve the confident information layer by layer. Note that the prohibition sign indicates that the Fisher information on lowly confident coordinates is not preserved.}
  \label{fig:deeparchitecture} 
\end{figure}

In Section \ref{sec:rbmequivalentcc}, we have shown that the structure of RBM implies the \emph{compactness} and \emph{completeness} conditions, which could guide the learning of a good representation. Thus DBM can be seen as the composition of a series of representation learning stages. Then, an immediate question is: what kind of representation of the data should be generated as the output of each stage? From an information abstraction point of view, each stage of the deep architecture could build up more abstract features by using the highly confident information on parameters (or coordinates) that is transmitted from less abstract features in lower layers. Those more abstract features would potentially have a greater representation power \citep{Bengio12representationlearning}. The CIF principle describes how the information flows in those representation transformations, as illustrated in Figure \ref{fig:deeparchitecture}. We propose that each layer of DBM determines a submanifold $M$ of $S$, where $M$ could maximally preserve the highly confident information on parameters, as shown in Section \ref{sec:iplearning}. Then the whole DBM can be seen as the process of repeatedly applying CIF in each layer, achieving the tradeoff between the abstractness of representation features and the intrinsic information confidence preserved on parameters.

Once a good representation has been found at each level by layer-wise application of unsupervised greedy pre-training, it can be used to initialize and train the deep neural networks through supervised learning \citep{hinton06,bengio2010why}. Recall that the straightforward application of gradient-based methods to train all layers of a DBM simultaneously tends to fall into a poor local minima \citep{Younes98onthe,jointtraindbm2012}. Now, our next question is: why can  the layer-wise pre-training give a more reasonable parameter initialisation? Empirically, \citet{bengio2010why} shows that the unsupervised pre-training acts as a regularization on parameters in a way that the parameters are set into a region, from which better basins of attraction can be reached. Theoretically, by using the fractional mixed coordinates, it can be shown that this regularized region is actually the layer-wise restriction using CIF, i.e., the highly confident coordinates are preserved with respect to the target density and all lowly confident coordinates are set to a neutral value of zero, as illustrated in Figure \ref{fig:deeparchitecture}. Effectively, the parameter space is regularized to fall into a region where the parameters can be confidently estimated based on the given data. Under this CIF-based regularization, the pre-training can be seen as searching for a reasonable parameter setting, from which a good representation of the input data can be generated in each layer.

\section{Experimental Study} \label{sec:cd-cif}
In this section, we will empirically investigate the CIF principle in density estimation tasks on two types of Boltzmann machines, i.e., SBM and RBM. More specifically, for SBM, we will investigate how to use CIF to take effect on the learning trajectory with respect to the specific sample, and hence further confine the parameter space to the region corresponding to the most confident information contained in given data. For RBM, it is inconvenient to use a sample-specific strategy since the information of hidden variables is missed. Alternatively we investigate the potential of the iterative projection procedure proposed previously. For both SBM and RBM, two baseline learning methods, i.e., the contractive divergence (CD) \citep{Hinton02cd,hinton05CD} and maximum-likelihood (ML) \citep{ackley85BM}, are adopted.

The ML learning is described in Section \ref{sec:learningMLforBM}. The CD can be seen as an approximation of ML.
Let $q(x)$ be the underlying probability distribution from which sample $\underline{x}$ are generated independently. Then our goal is to train a BM (with stationary probability $p(x,h;\xi)$ in Equation \ref{eq:probBM}) based on $\underline{x}$ that realizes $q(x)$ as faithfully as possible.
Comparing to ML, the CD learning realizes the gradient descend of a different objective function to avoid the difficulty of computing the log-likelihood gradient in ML, shown as follows:
\vspace{-2mm}
\begin{eqnarray}\label{CD-learning}
  \Delta \xi\!\!&=&\!\!-\varepsilon \cdot \frac{\partial (D(q_0, p) - D(p_m, p))}{\partial \xi} = \varepsilon \cdot (-\langle \frac{\partial E(x,h;\xi)}{\partial \xi} \rangle_0 + \langle \frac{\partial E(x,h;\xi)}{\partial \xi} \rangle_m)
\end{eqnarray}
where $q_0$ is the sample distribution, $p_m$ is the distribution by starting the Markov chain with the data and running $m$ steps, $\langle\cdot \rangle_0$ and $\langle\cdot \rangle_m$ denote the averages with respect to the distribution $p_0$ and $p_m$, and $D(\cdot, \cdot)$ denotes the K-L divergence.

In these experiments, two kinds of binary datasets are used:
\begin{enumerate}
  \item The artificial binary dataset: we first randomly select the target distribution $q(x)$, which is chosen uniformly from the open probability simplex over the $n$ random variables. Then, the dataset with $N$ samples are generated from $q(x)$.
  \item The \emph{20 News Groups} binary dataset: \emph{20 News Groups} is a collection of approximately 20,000 newsgroup documents, partitioned evenly across 20 different newsgroups \footnote{The \emph{20 News Group} dataset is freely downloadable from http://qwone.com/$\sim$jason/20Newsgroups/}. The collection is preprocessed using porter stemmer and stop-word removal. We select top 100 terms with highest frequency in the collection. Each document is represented as a 100-dimensional binary vector, where each element indicates whether certain term occurs in current document or not.
\end{enumerate}

\subsection{Experiments with SBM}
From the perspective of IG, we can see that ML/CD learning is to update parameters in SBM so that its corresponding coordinates $[\eta^{2-}]$ are getting closer to the data distribution. This is consistent with our theoretical analysis in Section \ref{sec:cif} and Section \ref{sec:geometryBM} that SBM uses the most confident information (i.e., $[\eta^{2-}]$) for approximating an arbitrary distribution in an expected sense.
But, for the distribution with specific samples, \emph{can CIF further recognize less-confident parameters in SBM and reduce them properly}?

Our solution here is to apply CIF to take effect on the learning trajectory with respect to specific samples, and hence further confine the parameter space to the region that indicated by the most confident information contained in the samples. This experiment shows that given specific samples we need to preserve the confident parameters to certain extend, and there should exist some golden ratio that would produce best performance on average.
\subsubsection{A Sample-specific CIF-based CD Learning} \label{subsec:cifcd}
The main modification of our CIF-based CD algorithm (CD-CIF for short) is that we generate the samples for $p_m(x)$ based on those parameters with confident information, where the confident information carried by certain parameter is inherited from the sample and could be assessed using its Fisher information computed in terms of the sample.

For CD-1 (i.e., $m$=1), the firing probability for the $i$th neuron after one-step transition from the initial state $x^{(0)}=\{x_1^{(0)}, x_2^{(0)}, \dots, x_n^{(0)}\}$) is:
\vspace{-2mm}
\begin{equation}\label{firingProb}
  p_{(m=1)}(x_i^{(1)}=1|x^{(0)}) = \frac{1}{1+exp\{-\sum_{j\neq i}U_{ij}x^{(0)}_j-b_i\}}
\end{equation}

For CD-CIF, the firing probability in Equation \ref{firingProb} is modified as follows:
\vspace{-2mm}
\begin{equation}\label{firingProbCIF}
  p'_{(m=1)}(x_i^{(1)}=1|x^{(0)}) = \frac{1}{1+exp\{-\sum_{(j\neq i) \& (F(U_{ij})>\tau) }U_{ij}x^{(0)}_j-b_i\}}
\end{equation}
where $\tau$ is a pre-selected threshold, $F(U_{ij})=E_{q_0}[x_i x_j]-E_{q_0}[x_i x_j]^2$ is the Fisher information of $U_{ij}$ (see Equation \ref{eq:proposiationFisherMetrictheta}) and the expectations are estimated from the given sample $\underline{x}$. We can see that those weights whose Fisher information are less than $\tau$ are considered to be unreliable w.r.t. $\underline{x}$. In practice, we could setup $\tau$ by the ratio $r$ to specify the remaining proportion of the total Fisher information $T_{FI}$ of all parameters, i.e., $\tau = r*T_{FI}$.

In summary, CD-CIF is realized in two phases. In the first phase, we initially ``guess'' whether certain parameter could be faithfully estimated based on the finite sample. In the second phase, we approximate the gradient using the CD scheme, except for the CIF-based firing function is used.

\subsubsection{Results and Discussions on Artificial Dataset}\label{sec:experiment}
In this section, we empirically investigate our justifications for the CIF principle, especially how the sample-specific CIF-based CD learning works in the context of density estimation.
%

\begin{figure*}
  \centering
  \subfigure[]{
    \label{fig:cifSample} 
    \includegraphics[width=1.4in]{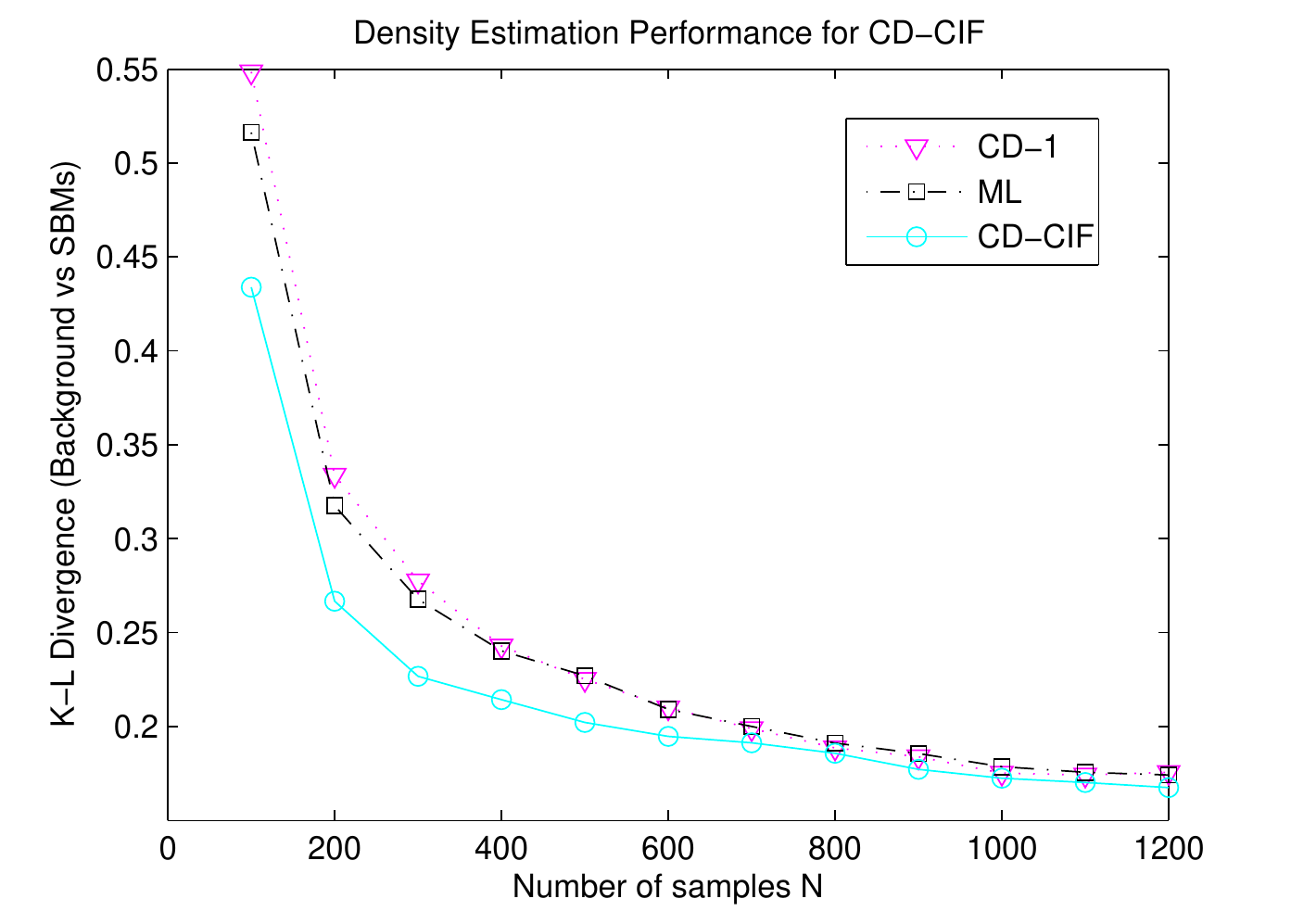}}
     \hspace{-4mm}
  \subfigure[]{
    \label{fig:cif100} 
    \includegraphics[width=1.4in]{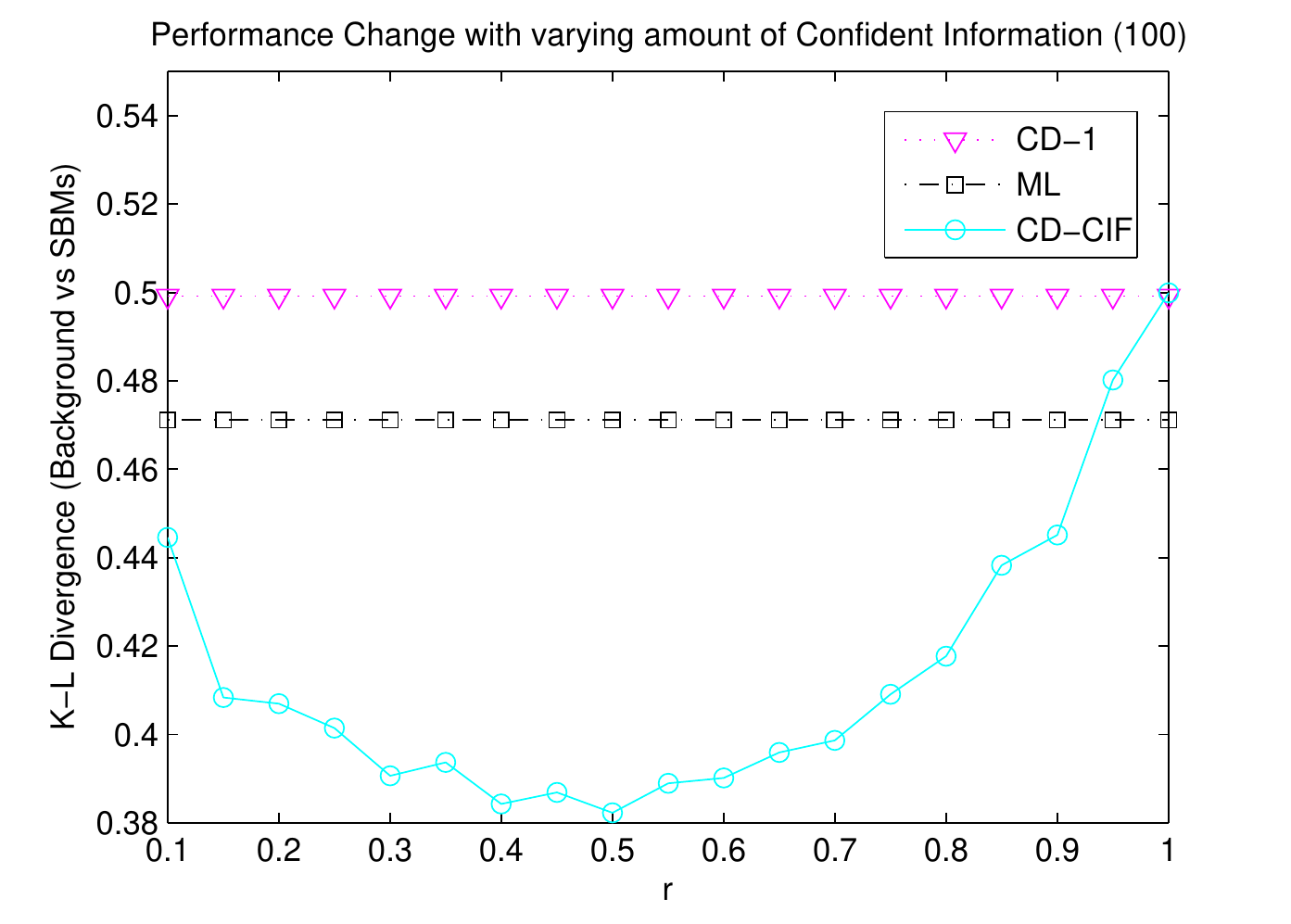}}
    \hspace{-4mm}
  \subfigure[]{
    \label{fig:cif1200} 
    \includegraphics[width=1.4in]{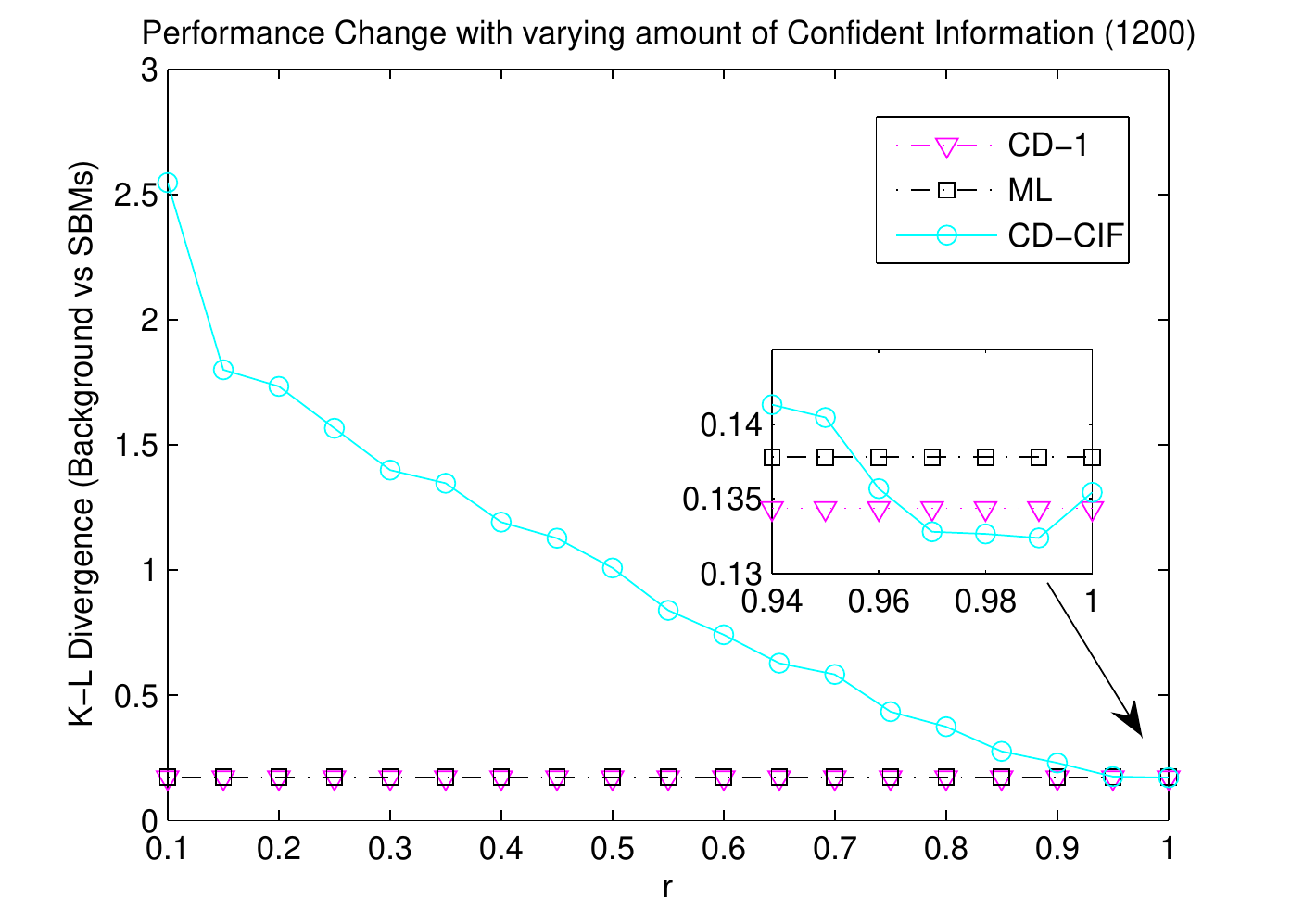}}
    \hspace{-4mm}
  \subfigure[]{
    \label{fig:ciftrajfinal} 
    \includegraphics[width=1.4in]{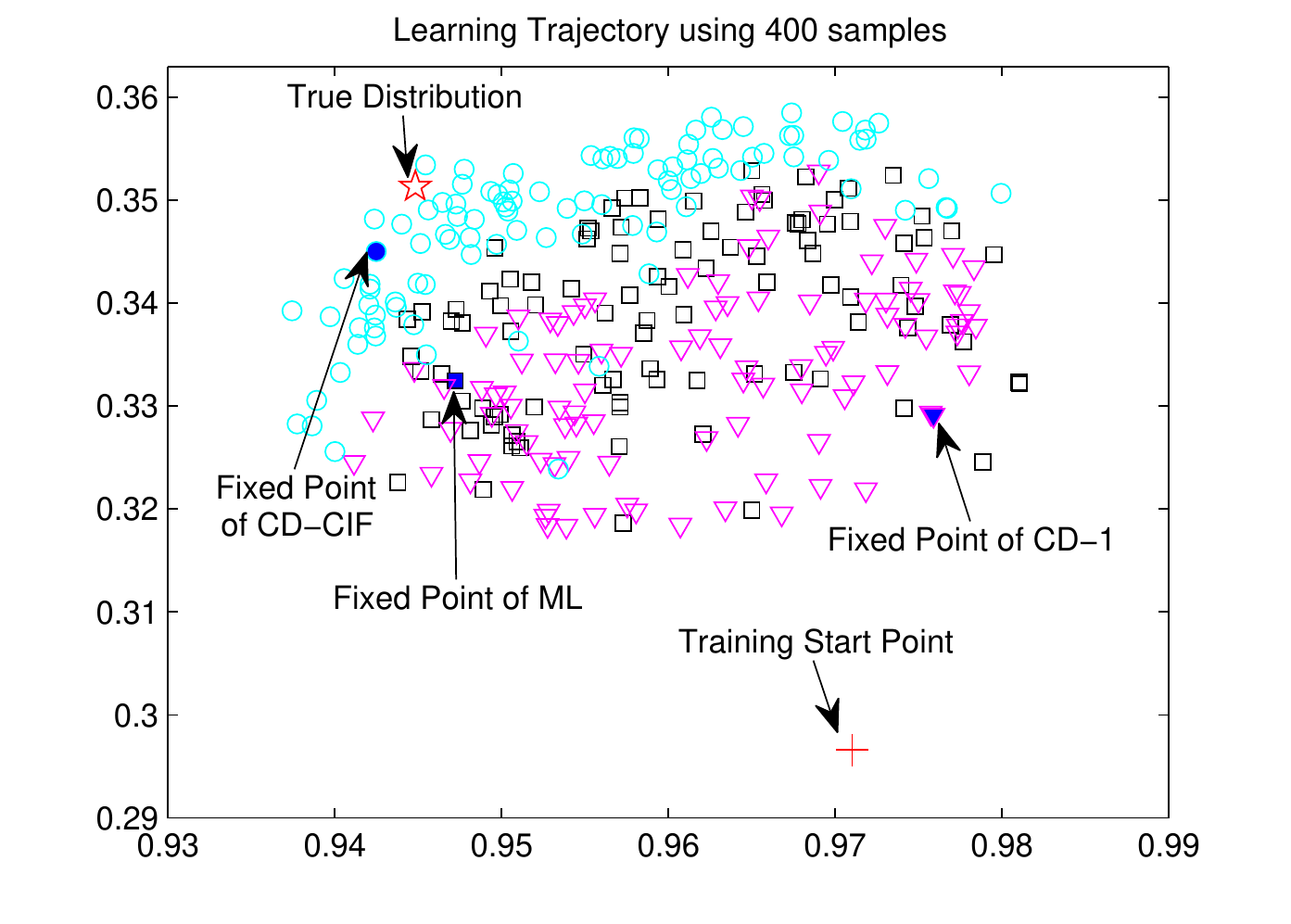}}

  \caption{\subref{fig:cifSample}: the performances of CD-CIF on different sample sizes; \subref{fig:cif100} and \subref{fig:cif1200}: the performances of CD-CIF with various values of $r$ on two typical sample sizes, i.e., 100 and 1200; \subref{fig:ciftrajfinal}: learning trajectories of last 100 steps for ML (squares), CD-1 (triangles) and CD-CIF (circles).}
  \label{fig:whole} 
\end{figure*}

\textbf{Experimental Setup and Evaluation Metric}:
For computation simplicity, the artificial dataset is set to be 10-dimensional. Three learning algorithms are investigated: ML, CD-1 and our CD-CIF. K-L divergence is used to evaluate the goodness-of-fit of the SBM trained by various algorithms. For sample size $N$, we run 100 instances (20 randomly generated distributions $\times$ 5 randomly running) and report the averaged K-L divergences.
Note that we focus on the case that the variable number is relatively small ($n=10$) in order to analytically evaluate the K-L divergence and give a detailed study on algorithms. Changing the number of variables only offers a trivial influence for experimental results since we obtained qualitatively similar observations on various variable numbers (not reported here).

\textbf{Automatically Adjusting $r$ for Different Sample Sizes}: The Fisher information is additive for i.i.d. sampling. When sample size $N$ changes, it is naturally to require that the total amount of Fisher information contained in all tailored parameters is steady. Hence we have $\alpha = (1-r)N$, where $\alpha$ indicates the amount of Fisher information and becomes a constant when the learning model and the underlying distribution family are given. It turns out that we can first identify $\alpha$ using the optimal $r$ w.r.t. several distributions generated from the underlying distribution family, and then determine the optimal $r$ for various sample sizes using: $r = 1 - \alpha / N$. In our experiments, we set $\alpha = 35$.

\textbf{Density Estimation Performance}: The averaged K-L divergences between SBM (learned by ML, CD-1 and CD-CIF with the $r$ automatically determined) and the underlying distribution are shown in Figure \ref{fig:cifSample}. In the case of relatively small samples ($N \leq 500$) in Figure \ref{fig:cifSample}, our CD-CIF method shows significant improvements over ML (from 10.3\% to 16.0\%) and CD-1(from 11.0\% to 21.0\%).
This is because we could not expect to have reliable identifications for all model parameters from insufficient samples, and hence CD-CIF gains its advantages by using parameters that could be confidently estimated.
This result is consistent with our previous theoretical insight that Fisher information gives a reasonable guidance for parametric reduction via the confidence criterion.
As the sample size increases ($N\geq 600$), CD-CIF, ML and CD-1 tend to have similar performances. Since with relatively large samples most model parameters can be reasonably estimated, and hence the effect of parameter reduction using CIF gradually becomes marginal. In Figure \ref{fig:cif100}, Figure \ref{fig:cif1200}, we show how sample size affects the interval of $r$ that achieves improvements over CD-1. For $N=100$, CD-CIF achieves significantly better performances for a wide range of $r$. While, for $N=1200$, CD-CIF can only marginally outperform baselines for a narrow range of $r$.

\textbf{Effects on Learning Trajectories}: We use the 2D visualizing technology SNE to investigate learning trajectories and dynamical behaviors of three comparative algorithms \citep{hinton05CD}. We start three methods with the same parameter initialization. Then each intermediate state is represented by a 55-dimensional vector formed by its current parameter values. From Figure \ref{fig:ciftrajfinal}, we can see that:
1) In the final 100 steps, three methods seem to end up with staying in different regions of the parameter space, and CD-CIF confines the parameter in a relatively thinner region compared to ML and CD-1;
2) The true distribution is usually located on the side of CD-CIF, indicating its potential for converging to the optimal solution.
Note that the above claims are based on general observations and Figure \ref{fig:ciftrajfinal} is shown as an illustration.
Hence we may conclude that CD-CIF regularizes the learning trajectories in a desired region of the parameter space using the sample-specific CIF principle.
\begin{figure}
  \centering
  \includegraphics[width=0.6\textwidth]{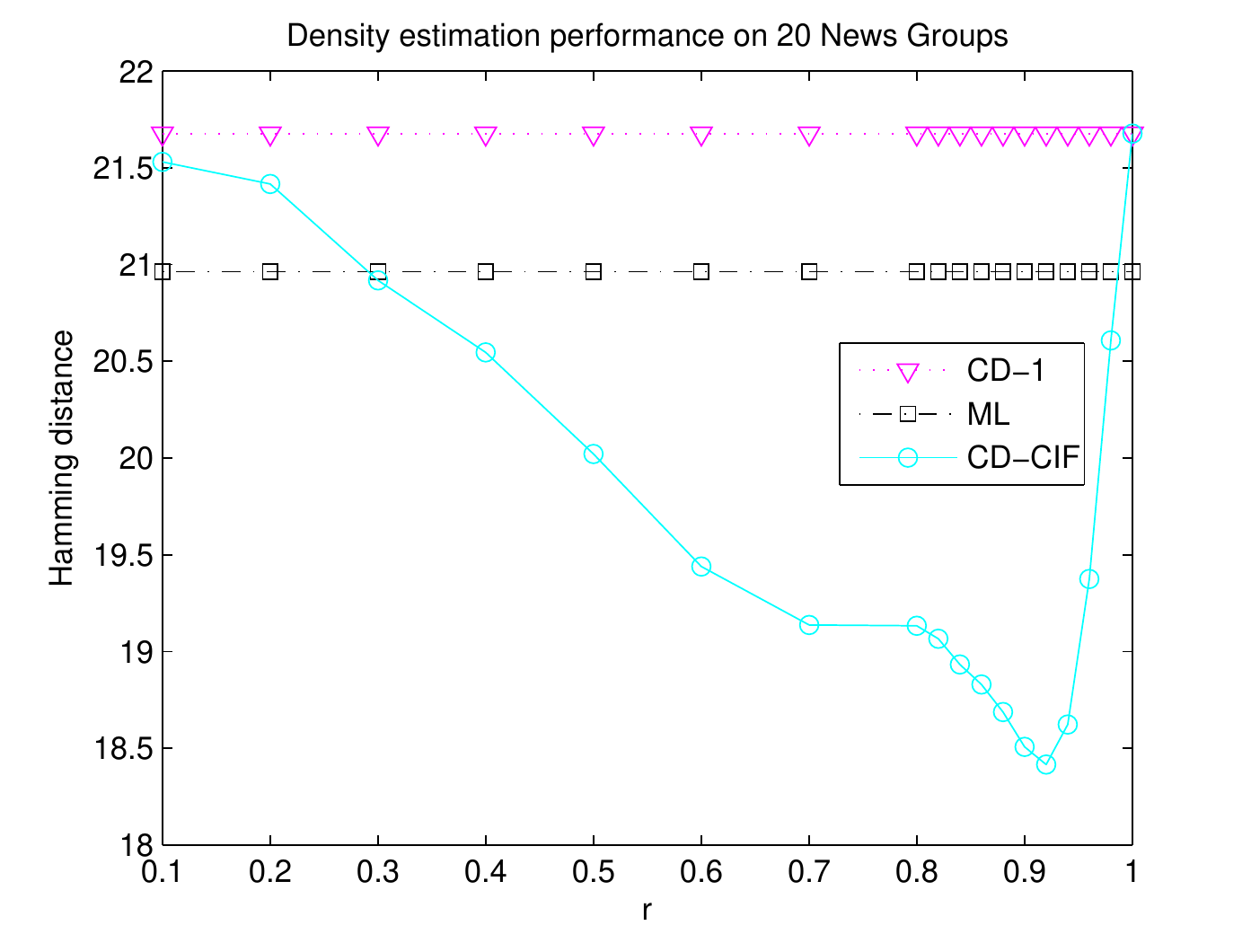}
  \caption{the performances of CD-CIF with various values of $r$ on \emph{20 news group} dataset}
  \label{fig:20newsgroup} 
\end{figure}
\subsubsection{Results and Discussions on Real Textual Dataset}\label{sec:experimentsbmtextual}
In this section, we empirically investigate how the sample-specific CIF-based CD learning works on real-world datasets in the context of density estimation. In particular, we use the SBM to learn the underlying probability density over 100 terms of the \emph{20 News Groups} binary dataset. The learning rate for CD-1, ML and CD-CIF are manually tuned in order to converge properly and all set to 0.001. Since it is infeasible to compute the K-L devergence due to the high dimensionality, the averaged Hamming distance between the samples in the dataset and those generated from the SBM is used to evaluate the goodness-of-fit of the SBM's trained by various algorithms. Let $D=\{d_1, d_2,\dots, d_N\}$ denote the dataset of $N$ documents, where each document $d_i$ is a 100-dimensional binary vector. To evaluate a SBM with parameter $\xi_{sbm}$, we first randomly generate $N$ samples from the stationary distribution $p(x;\xi_{sbm})$, denoted as $V=\{v_1, v_2,\dots, v_N\}$. Then the averaged hamming distance $D_{ham}$ is calculated as follows:
$$D_{ham}[D,V]=\frac{\sum_{d_i} (\min_{v_j} (Ham[d_i, v_j])}{N}$$
where $Ham[d_i, v_j]$ is the number of positions at which the corresponding values are different.

The result is shown in Figure \ref{fig:20newsgroup}. Our CD-CIF method shows maximal improvements over ML (12.15\%) and CD-1(15.05\%) at $r=0.92$. We can also see that CD-CIF achieves significantly better performances for a wide range of $r \in [0.5,0.96]$, which is consistent with our observations with the experiments on artificial datasets when the samples is insufficient.

\subsection{Experiments with RBM} \label{sec:experimentrbm}
The RBM is practically more interesting than SBM, since it has higher representational power.
In this section, we will compare three different learning algorithms for RBM: CD-1, ML and IP.
In \citet{hinton05CD}, CD is shown to be biased with respect to ML for almost all data distributions. In Section \ref{sec:compareMLIP}, we have compared ML and IP theoretically. In this section, an empirical study on the three algorithms is conducted.
\subsubsection{Results and Discussion on Artificial Datasets} \label{sec:resultanalysisip}

\textbf{Experimental Setup and Evaluation Metric}: For computational simplicity, the artificial dataset is of 5 dimensionality, and the number of hidden units in RBM is set to 5. Three learning algorithms are investigated: ML, CD-1 and IP. K-L divergence is used to evaluate the goodness-of-fit of the RBM's trained by various algorithms. Six different sample sizes $N$ are tested, namely 50, 100, 200, 500, 5000 and 50000. For sample size $N$, the learning rates for CD-1 and ML are set to be $\varepsilon = 0.5/N$, and we observe that they could converge properly. For the sub-learning phase $\Gamma_B$ of IP, we adopt the CD algorithm for the training of BM without hidden units, whose learning rate is also set to $\varepsilon = 0.5/N$. We need to scan the dataset multiple times in order to iteratively update parameters, and each full scan of the whole dataset is called an epoch. In CD-1 and ML, we set the maximal number of epoches to 8000. We run the IP for 40 iterations, and each iteration is a CD sub-training with the maximal number of epoches setting to 200. We adopt CD-1 as the baseline method.

\textbf{Results and Analysis}:
\begin{figure*}
  \centering
  \subfigure[]{
    \label{fig:klperformanceIP100} 
    \includegraphics[width=0.47\textwidth]{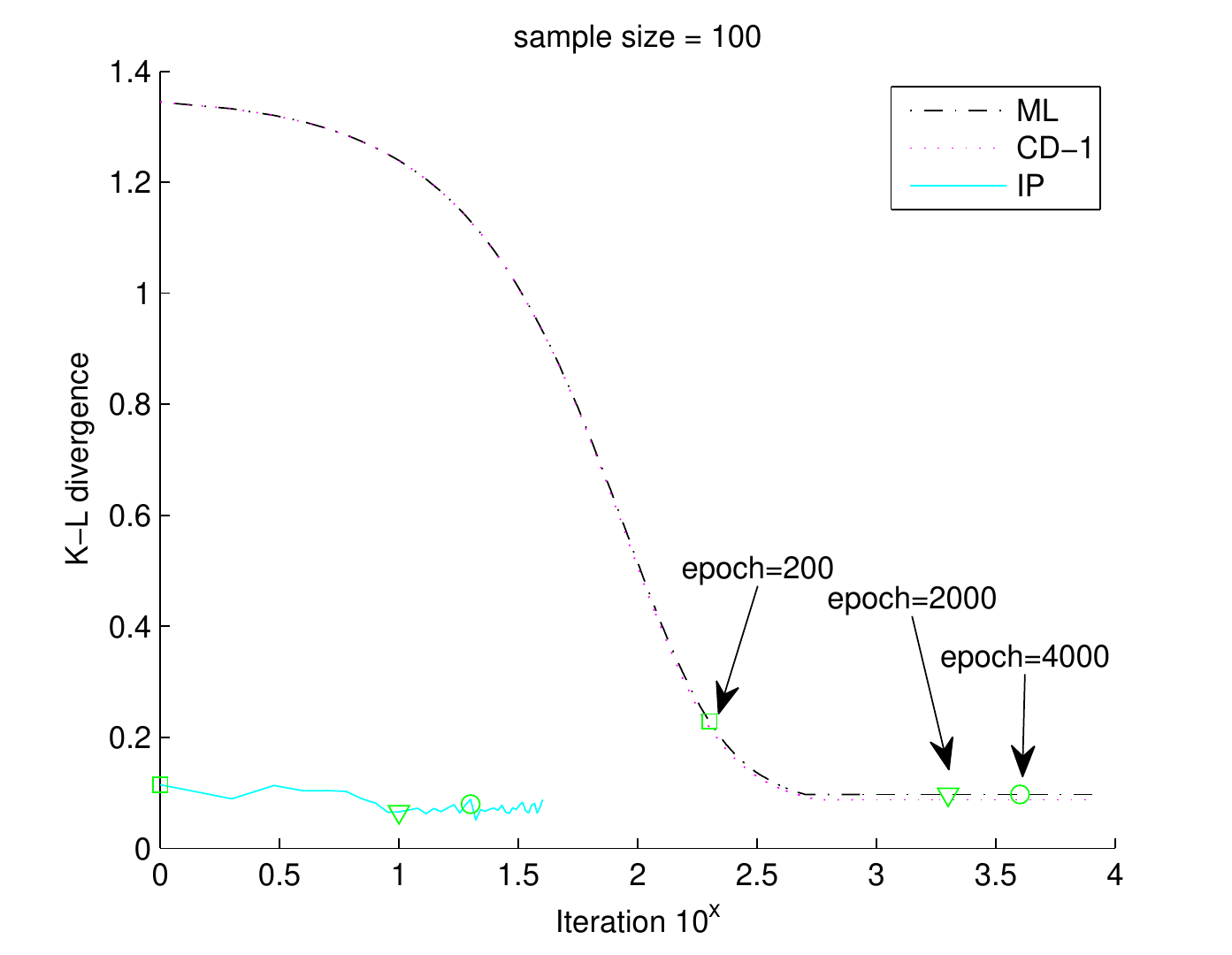}}
     \hspace{-4mm}
  \subfigure[]{
    \label{fig:klperformanceIP50000} 
    \includegraphics[width=0.47\textwidth]{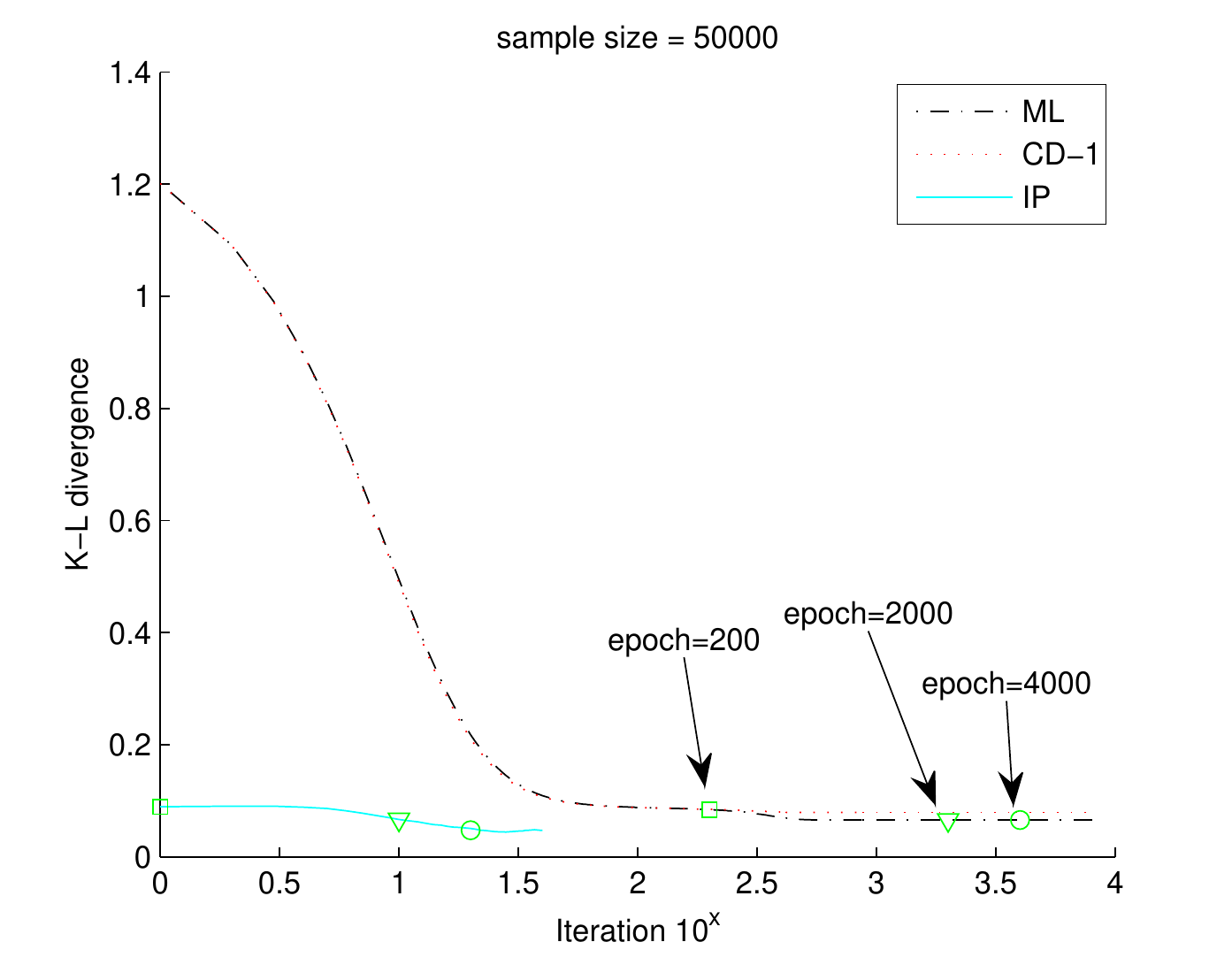}}

  \caption{\subref{fig:klperformanceIP100} and \subref{fig:klperformanceIP50000} illustrate the averaged learning curves for CD-1, ML and IP for 30 randomly chosen data distributions, with sample size 100 and 50000 respectively. The x-axis is in $\log_{10}$ scale. To compare the efficiency of different algorithms along the time-line (the time unit is an epoch, that is, the time used for the parameter updates in each full scan of the whole dataset), we plot three time stamps (epoch=100, 1000 and 4000). Note that each iteration in IP contains a sub-training task, which is trained using 200 epoch in this experiment.}
  \label{fig:whole} 
\end{figure*}
The on-average performances of the three methods on dataset of different scales are shown in Table \ref{tab:resultperformanceip} in the context of density estimation.
In order to study the behavior of IP, we plot the sequences of K-L divergence between target distribution and RBM in each iteration along the whole learning trajectory, shown in Figure \ref{fig:whole}. Comparing CD-1 with ML, we can see that the K-L divergences of both CD-1 and ML decrease in a similar way, converging at the same rate, taking the same number of iterations to converge to a given tolerance, which is consistent with the conclusion in \cite{Hinton02cd}.

From the convergence behavior of IP shown in Figure \ref{fig:whole}, we can see that the general trend is that the K-L divergence decreases steadily with small fluctuations. Since there are only few iterations in IP, it is reasonable for us to select the best performance that IP has reached in the whole learning process, called the best IP. Here the K-L divergence between the sample distribution and RBM's stationary distribution is adopted as the selection metric. Thus, in addition to the converging performance, we also show the best performance selected among all 40 iterations for IP in Table \ref{tab:resultperformanceip}. Note that the best performances for CD-1 and ML are not reported, since their converging performances are often approximately the best ones.

For small sample size (e.g., 50 and 100), we can see that the converging performance of IP is comparable with respect to CD and ML. As the sample size increases, we can see that IP gradually outperforms CD-1 and ML, and shows significant improvement on large sample size (e.g., 5000 and 50000). For the best IP, its performance is significantly better than CD-1 and ML for all sample sizes, indicating that IP has the potential to further improve its performance by using some suitable selection metric.
Comparing to ML, IP takes much shorter iterations as expected to achieve a performance threshold.
Theoretically, IP can converge to the local minimum of RBM based on our theoretical analysis in Section \ref{sec:compareMLIP}. If a proper learning rate is selected, ML can also converge to a local minimum.
However, one interesting result is that sometimes there is a big difference between the convergence points of IP and ML, even in cases that the sampling is sufficient (e.g., sample size equals 50000).

This can be explained as follows. In practice, ML needs to constantly do positive and negative sampling in each updating, which may produce much sample biases. As the gradient decreasing to a small value, the correct gradient direction may be fluctuated by the sample biases. Thus, instead of converging to the local minimum, ML might fluctuate around some sub-optimal region. Actually, the $\Gamma_H$ in IP is a sampling process that may also introduce sample biases. That is why the IP is fluctuated when the sample is insufficient, as shown in Figure \ref{fig:klperformanceIP100}. Given sufficient samples, the sampling biases in $\Gamma_H$ decreases and the fluctuation of IP declines, as shown in Figure \ref{fig:klperformanceIP50000}. For ML, though the sampling biases for the whole dataset becomes small given sufficient samples, the gradient estimation for each sample is still closely intergraded with the sample bias of certain sample, meaning that this inseparable coupling relationship results in a biased gradient estimation. The main advantage of the IP procedure over traditional gradient-based methods is the separation of the positive sampling process and the gradient estimation. We conjecture that this independent design gives IP the potential to achieve optimal solutions that CD-1 and ML can not reach.

\begin{table}
\begin{center}
\begin{tabular}{|c|c|c|c|c|}
  \hline
  Sample Size & CD-1 & ML & Converge IP (-chg\%)& Best IP (-chg\%) \\
  \hline
50  & 0.0970 & 0.0971 & 0.0978 (+0.81\%) & 0.0774 (-20.21\%) \\
100  & 0.0786 & 0.0793 & 0.0787 (+0.08\%) & 0.0637 (-18.92\%) \\
200  & 0.0672 & 0.0678 & 0.0640 (-4.75\%) & 0.0575 (-14.45\%) \\
500  & 0.0621 & 0.0621 & 0.0594 (-4.25\%) & 0.0567 (-8.61\%) \\
5000  & 0.0532 & 0.0524 & 0.0468 (-12.00\%) & 0.0437 (-17.81\%) \\
50000  & 0.0497 & 0.0475 & 0.0411 (-17.37\%) & 0.0332 (-33.08\%) \\
  \hline
\end{tabular}
\end{center}
\caption{Performance comparison of CD-1, ML and IP in the density estimation task. The change of IP with respect to the baseline method (CD-1) is reported.}
\label{tab:resultperformanceip}
\end{table}

\subsubsection{Results and Discussion on Real Textual Dataset}
\begin{figure}
  \centering
  \includegraphics[width=0.6\textwidth]{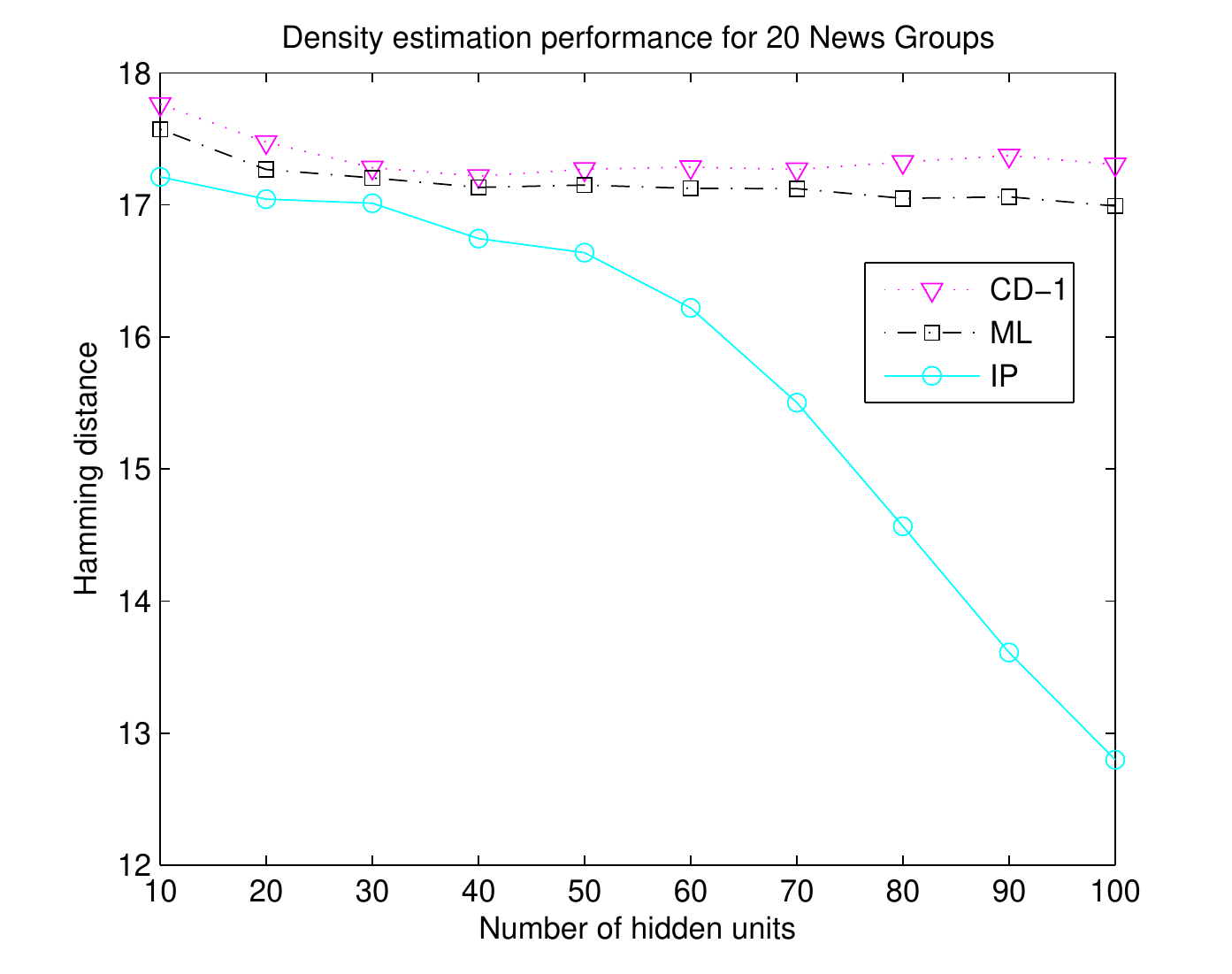}
  \caption{Performances of IP with various number of hidden units on \emph{20 News Group}}
  \label{fig:ip20newsgroup} 
\end{figure}

In this section, we empirically investigate how the IP works for RBM on real-world datasets in the context of density estimation. We use the RBM to learn the probability density over 100 terms on the \emph{20 News Groups} binary dataset. The number of hidden units $n_h$ in RBM is set to $[10, 20, \dots, 100]$. The learning rate for CD-1and ML are manually tuned in order to
converge properly and all set to 0.001. The learning rate for the CD sub-learning task in IP is also set to 0.001. We run IP for 9 iterations for all settings of $n_h$. Similar with the experiments in Section \ref{sec:experimentsbmtextual}, the averaged Hamming distance is used to evaluate the goodness-of-fit of the RBM's trained by various algorithms.
The average Hamming distances for ML, CD-1 and IP are shown in Figure \ref{fig:ip20newsgroup}. We can see that IP achieves better performances on all settings of $n_h$. And the hamming distance for IP drops dramatically as $n_h$ increases. This trend can be explained as follows. As $n_h$ grows, the sampling biases increases and the interference of sampling biases with respect to the gradient estimation becomes more and more serious. This limits the actual performance of RBMs learnt by CD-1 and ML, with respect to the growing modelling power gained by increasing $n_h$. As shown in Section \ref{sec:resultanalysisip}, the IP procedure separates the positive sampling process and the gradient estimation in two phases: $\Gamma_H$ and $\Gamma_B$. This result shows that IP has the potential to achieve optimal solutions that CD-1 and ML can not reach in real-world applications.

\section{Conclusions} \label{sec:conclusions}
The CIF principle proposed in this paper tackles the problem of dimensionality reduction in parameter space by preserving the parameters with highly confident estimates and tailoring the less confident parameters. It provides a strategy for the derivation of probabilistic models. The SBM and RBM are specific examples in this regard. They have been theoretically shown to achieve a reliable representation in parameter spaces by exactly using the general CIF principle. CIF gives us a principled and context-independent way to address the questions on what we should do for parameter reduction (regularization) and how to do it. Based on CIF, we also show that the deep neural networks consisting of several layers of RBM can be seen as the layer-wise usage of CIF, leading to some theoretical interpretations of the rationale behind deep learning models.

One interesting result shown in our experiments is that: although CD-CIF is a biased algorithm, it could significantly outperform ML when the sample is insufficient. This suggests that CIF gives us a reasonable criterion for recognizing and utilizing confident information from the underlying data while ML fails to do so.
Another interesting observation is that ML and the CIF-based IP lead to different convergence points in the training of RBM. Our experimental results indicate that IP has the advantage of robustness against sampling biases, due to the separation of the positive sampling process and the gradient estimation.

In the future, we will further develop the formal justification of CIF w.r.t various contexts (e.g., distribution families or models). We will also conduct more extensive experiments on real world applications, such as document classification and handwritten digit recognition, to further justify the properties of IP. We will also extend the IP to train deep neural networks.
\acks{This work is partially supported by the Chinese National Program on Key Basic Research Project (973 Program, grant no. 2013CB329304 and 2014CB744604), the Natural Science Foundation of China (grants no. 61070044, 61111130190, 61272265 and 61105072), and the European Union Framework 7 Marie-Curie International Research Staff Exchange Programme (grant no. 247590).}


\appendix
\section{}
\subsection{Proof of Proposition \ref{prop:fishermatrix}} \label{appendix:thetafisher}
\begin{proof}
By definition, we have:
\begin{equation}\label{eq:propFisher1}
    g_{IJ}=\frac{\partial^2 \psi(\theta)}{\partial \theta ^I \partial \theta ^J} \nonumber
\end{equation}
where $\psi(\theta)$ is defined by Equation (\ref{eq:legedre}). Hence, we have:
\begin{eqnarray}\label{eq:propFisher2}
  g_{IJ}=\frac{ \partial^2 (\sum_{I} {\theta^I \eta_I} - \phi(\eta))}{\partial \theta^I \partial \theta^J} = \frac{\partial \eta_I}{\partial \theta^J} \nonumber
\end{eqnarray}
By differentiating $\eta_I$, defined by Equation (\ref{eq:etacoordinate}), with respect to $\theta^J$, we have:
\begin{eqnarray}\label{eq:propFisher3}
  g_{IJ}&=&\frac{\partial \eta_I}{\partial \theta^J} =  \frac{\partial \sum_x X_I(x)(exp\{\sum_I{\theta^I X_I(x)} - \psi(\theta)\})}{\partial \theta^J}\nonumber \\
   &=& \sum_x {X_I(x) [X_J(x) - \eta_J] p(x;\theta)} = \eta_{I\bigcup J}-\eta_I \eta_J \nonumber
\end{eqnarray}
This completes the proof.
\end{proof}
\vspace{-5mm}
\subsection{Proof of Proposition \ref{prop:fishermatrix_eta}} \label{appendix:etafisher}
\begin{proof}
By definition, we have:
\begin{equation}\label{eq:propEtaFisher}
    g^{IJ} = \frac{\partial^2 \phi(\eta)}{\partial \eta_I \partial \eta_J} \nonumber
\end{equation}
where $\phi(\eta)$ is defined by Equation (\ref{eq:legedre}). Hence we have:
\begin{eqnarray}\label{eq:propEtaFisher2}
  g^{IJ} &=& \frac{\partial^2 (\sum_{J} {\theta^J \eta_J} - \psi(\theta))}{\partial \eta_I \partial \eta_J} = \frac{\partial \theta^I}{\partial \eta_J} \nonumber
\end{eqnarray}
Based on Equation (\ref{eq:thetacoordinate}) and (\ref{eq:etacoordinate}), the $\theta^I$ and $p_{K}$ could be calculated by solving a linear equation system of $[p]$ and $[\eta]$ respectively. Hence we have:
\begin{equation}\label{eq:propThetaP}
    \theta^I = \sum_{K \subseteq I} (-1)^{|I-K|} log(p_{K});~~p_{K} = \sum_{K \subseteq J} (-1)^{|J-K|} \eta_{J} \nonumber
\end{equation}
Therefore, the partial derivation of $\theta^I$ with respect to $\eta_J$ is:
\begin{eqnarray}\label{eq:propEtaFisher3}
    g^{IJ}=\frac{\partial \theta^I}{\partial \eta_J} &=&\sum_{K} \frac{\partial \theta^I}{\partial p_{K}} \cdot \frac{\partial p_{K}}{\partial \eta_J} = \sum_{K\subseteq I\cap J}{(-1)^{|I-K|+|J-K|} \cdot \frac{1}{p_{K}}} \nonumber
\end{eqnarray}
This completes the proof.
\end{proof}

\subsection{Proof of Proposition \ref{prop:fishermatrix_mix}} \label{appendix:mixfisher}
\begin{proof}
The Fisher information matrix of $[\zeta]$ could be partitioned into four parts: $G_\zeta= \left(
                     \begin{array}{cc}
                       A & C \\
                        D & B \\
                     \end{array}
                   \right)$.
It can be verified that in the mixed coordinate, the $\theta$-coordinate of order $k$ is orthogonal to any $\eta$-coordinate less than $k$-order, impling the corresponding element of Fisher information matrix is zero ($C=D=0$) \citep{Nakahara02informationgeometric}. Hence, $G_\zeta$ is a block diagonal matrix.

According to Cram\'{e}r-Rao bound \citep{rao45attainable}, a parameter (or a pair of parameters) has a unique asymptotically tight lower bound of the variance (or covariance) of unbiased estimate, which is given by the corresponding element of the inverse of Fisher information matrix involving this parameter (or this pair of parameters). Recall that $I_\eta$ is the index set of the parameters shared by $[\eta]$ and $[\zeta]_l$ and $J_\theta$ is the index set of the parameters shared by $[\theta]$ and $[\zeta]_l$, we have $(G_\zeta^{-1})_{I_\zeta} = (G_\eta^{-1})_{I_\eta}$ and $(G_\zeta^{-1})_{J_\zeta} = (G_\theta^{-1})_{J_\theta}$, i.e.,
$G_{\zeta}^{-1}= \left(
                     \begin{array}{cc}
                       (G_\eta^{-1})_{I_\eta} & 0 \\
                        0 & (G_\theta^{-1})_{J_\theta} \\
                     \end{array}
                   \right)$.
Since $G_\zeta$ is a block tridiagonal matrix, the proposition follows.
\end{proof}
\vspace{-6mm}
\subsection{Proof of Proposition \ref{prop:fishermatrix_mixDiagonal}} \label{appendix:etafisherDiagonal}
\begin{proof}
Assume the Fisher information matrix of $[\theta]$ be
$G_\theta=\left(
            \begin{array}{cc}
              U & X \\
              X^T & V \\
            \end{array}
          \right)
$, which is partitioned based on $I_\eta$ and $J_\theta$. Based on Proposition \ref{prop:fishermatrix_mix}, we have $A=U^{-1}$. Obviously, the diagonal elements of $U$ are all smaller than one. According to the succeeding Lemma \ref{lemma:diagOfmix}, we can see that the diagonal elements of $A$ ($i.e., U^{-1}$) are greater than $1$.

Next we need to show that the diagonal elements of $B$ are smaller than $1$. Using the Schur complement of $G_\theta$, the bottom-right block of $G_\theta^{-1}$, i.e., $(G_\theta^{-1})_{J_\theta}$, equals to $(V-X^TU^{-1}X)^{-1}$. Thus the diagonal elements of B: $B_{jj}=(V-X^TU^{-1}X)_{jj}<V_{jj}<1$. Hence we complete the proof.
\end{proof}
\vspace{-6mm}
\begin{lemma} \label{lemma:diagOfmix}
With a $l\times l$ positive definite matrix $H$, if $H_{ii}<1$, then $(H^{-1})_{ii}>1, \forall i\in \{1,2,\dots,l\}$.
\end{lemma}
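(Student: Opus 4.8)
The plan is to reduce the claim to the single scalar inequality $H_{ii}\,(H^{-1})_{ii}\ge 1$ (valid for every $i$ whenever $H$ is symmetric positive definite) and then divide by $H_{ii}$. First I would invoke the Cauchy--Schwarz inequality in the ordinary Euclidean inner product. Since $H$ is symmetric positive definite it admits a symmetric positive definite square root $H^{1/2}$, so writing $e_i$ for the $i$th standard basis vector and factoring $e_i^\top e_i = \langle H^{1/2}e_i,\,H^{-1/2}e_i\rangle$ gives
\begin{align*}
1=(e_i^\top e_i)^2 &=\bigl(\langle H^{1/2}e_i,\,H^{-1/2}e_i\rangle\bigr)^2 \\
&\le \langle H^{1/2}e_i,\,H^{1/2}e_i\rangle\cdot\langle H^{-1/2}e_i,\,H^{-1/2}e_i\rangle \\
&= (e_i^\top H e_i)\,(e_i^\top H^{-1} e_i)= H_{ii}\,(H^{-1})_{ii}.
\end{align*}
Then, using the hypothesis $H_{ii}<1$, I would conclude $(H^{-1})_{ii}\ge 1/H_{ii}>1$, which is exactly the assertion of the lemma.

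As an alternative route (useful if one prefers to avoid matrix square roots), I would instead apply a symmetric permutation that moves the $i$th coordinate to the top and partition $H$ accordingly as $H=\left(\begin{smallmatrix}H_{ii}&h^\top\\ h&\tilde H\end{smallmatrix}\right)$; the block-inverse (Schur complement) formula then gives $(H^{-1})_{ii}=\bigl(H_{ii}-h^\top\tilde H^{-1}h\bigr)^{-1}$. Positive definiteness of $H$ forces $\tilde H$ to be positive definite and invertible, so $h^\top\tilde H^{-1}h\ge 0$; hence the denominator is strictly positive and at most $H_{ii}<1$, yielding $(H^{-1})_{ii}>1$.

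I do not anticipate any real obstacle here — the statement is a standard consequence of positive definiteness. The only points worth a sentence of care are: that the positive definiteness hypothesis is what legitimizes the existence of $H^{1/2}$ (or, in the second approach, the invertibility of $\tilde H$ and the strict positivity of the scalar Schur complement), and that the strict inequality $H_{ii}<1$ transfers to a strict inequality in the conclusion. This lemma is exactly the ingredient needed in the proof of Proposition~\ref{prop:fishermatrix_mixDiagonal}, applied to the block $U$ of $G_\theta$ whose diagonal entries $g_{II}=\eta_I-\eta_I^2<1$.
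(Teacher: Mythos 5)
Your first argument is correct and is essentially the paper's own proof made explicit: the paper writes $H$ as the Gram matrix of vectors $v_i$ whose dual basis $w_i$ satisfies $\langle w_i,v_i\rangle=1$ and $(H^{-1})_{ii}=\|w_i\|^2$, then applies Cauchy--Schwarz to get $\|w_i\|\,\|v_i\|\ge 1$ --- which is precisely your computation with $v_i=H^{1/2}e_i$ and $w_i=H^{-1/2}e_i$, yielding $H_{ii}(H^{-1})_{ii}\ge 1$. Your alternative via the block/Schur-complement identity $(H^{-1})_{ii}=(H_{ii}-h^\top\tilde H^{-1}h)^{-1}$ is a genuinely different and equally valid route that avoids square roots; amusingly, the paper does not use it for this lemma but does use exactly that Schur-complement device for the companion upper bound on the diagonal of $B$ in the proof of Proposition~\ref{prop:fishermatrix_mixDiagonal}.
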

\begin{proof}
Since $H$ is positive definite, it is a Gramian matrix of $l$ linearly independent vectors $v_1,v_2,\dots,v_l$, i.e., $H_{ij}=\langle v_i,v_j\rangle$ ($\langle \cdot,\cdot \rangle$ denotes the inner product). Similarly, $H^{-1}$ is the Gramian matrix of $l$ linearly independent vectors $w_1,w_2,\dots,w_l$ and $(H^{-1})_{ij}=\langle w_i,w_j\rangle$. It is easy to verify that $\langle w_i,v_i\rangle=1, \forall i \in \{1,2,\dots,l\}$. If $H_{ii}<1$, we can see that the norm $\|v_i\|=\sqrt{H_{ii}}<1$. Since $\|w_i\| \times \|v_i\|\geq \langle w_i,v_i\rangle=1$, we have $\|w_i\|>1$. Hence, $(H^{-1})_{ii}=\langle w_i,w_i\rangle=\|w_i\|^2 >1$.
\end{proof}
\vspace{-5mm}
\subsection{Proof of Proposition \ref{prop:GeometricView}} \label{appendix:geometriccif}
\begin{proof}
Let $B_q$ be a $\varepsilon$-ball surface centered at $q(x)$ on manifold $S$, i.e., $B_q= \{\zeta \in S |\|\zeta - \zeta_q\|_2 = \varepsilon\}$, where $\|\cdot\|_2$ denotes the Euclid norm and $\zeta_q$ is the coordinates of $q(x)$. Let $q(x)+dq$ be a neighbor of $q(x)$ uniformly sampled on $B_q$ and $\zeta_{q(x)+dq}$ be its corresponding coordinates. For a small $\varepsilon$, we can calculate the expected information distance between $q(x)$ and $q(x)+dq$ as follows:
\begin{equation}\label{eq:expectdistance}
    E_{B_q}=\int  [(\zeta_{q(x)+dq}-\zeta_q)^T G_\zeta (\zeta_{q(x)+dq}-\zeta_q)]^{\frac{1}{2}} dB_q
\end{equation}
where $G_\zeta$ is the Fisher information matrix at $q(x)$.

Since Fisher information matrix $G_\zeta$ is both positive definite and symmetric, there exists a singular value decomposition $G_\zeta = U^T \Lambda U$ where $U$ is an orthogonal matrix and $\Lambda$ is a diagonal matrix with diagonal entries equal to the eigenvalues of $G_\zeta$ (all $\geq 0$).

Apply the singular value decomposition into Equation (\ref{eq:expectdistance}), the distance becomes:
\begin{equation}\label{eq:expectdistance2}
    E_{B_q}\!\!=\!\!\!\!\int \![(\zeta_{q(x)+dq}-\zeta_q)^T U^T \Lambda U (\zeta_{q(x)+dq}-\zeta_q)]^{\frac{1}{2}} dB_q
\end{equation}
Note that $U$ is an orthogonal matrix, and the transformation $U (\zeta_{q(x)+dq}-\zeta_q)$ is a norm-preserving rotation.

Now we need to show that among all tailored $k$-dimensional submanifolds of $S$, $[\zeta]_{l_t}$ is the one that preserves maximum information distance. Assume $I_{T}=\{i_1, i_2, \dots, i_k\}$ is the index of $k$ coordinates that we choose to form the tailored submanifold $T$ in the mixed-coordinates $[\zeta]$. Based on Equation (\ref{eq:expectdistance2}), the expected information distance $E_{B_q}$ for $T$ is proportional to the sum of eigenvalues of the sub-matrix $(G_\zeta)_{I_T}$, where the sum equals to the trace of $(G_\zeta)_{I_T}$.

Next we show that the sub-matrix of $G_\zeta$ specified by $[\zeta]_{l_t}$ gives maximum trace. Based on Proposition \ref{prop:fishermatrix_mixDiagonal}, the elements on the main diagonal of the sub-matrix $A$ are lower bounded by one, and those of $B$ upper bounded by one. Therefore, $[\zeta]_{l_t}$  gives maximum trace among all sub-matrices of $G_\zeta$. This completes the proof.
\end{proof}
\vspace{-5mm}
\subsection{Proof of Proposition \ref{prop:SBMCIF}} \label{appendix:SBMCIF}
\begin{proof}
Let $M_{sbm}$ be the set of all probability distributions realized by SBM. \citet{amari92igbm} proves that the mixed-coordinates of the resulting projection $P$ on $M_{sbm}$ is $[\zeta]_{P}=(\eta^1_i, \eta^2_{ij}, 0,\dots,0)$, given the 2-mixed-coordinates of $q(x)$. $M_{sbm}$ is equivalent to the submanifold tailored by CIF, i.e. $[\zeta]_{2_t}$. The corollary follows from Proposition \ref{prop:GeometricView}.
\end{proof}
\vspace{-5mm}
\subsection{Proof of Proposition \ref{prop:sbmmlcloseform}} \label{appendix:sbmmlcloseform}
\begin{proof}
Based on Equation \ref{eq:thetaforBM}, the coordinates $[\theta_{2+}]$ for SBM is zero: $\theta_{2+}=0$.
Next, we show that the stationary distribution $p(x;\xi)$ learnt by ML has the same $[\eta_{i}^1, \eta_{ij}^2]$ with $q(x)$.

For SBM, the $\frac{\partial E(x;\xi)}{\partial \xi}$ can be easily calculated from Equation (\ref{eq:energyBM}):
$$\begin{cases} \frac{\partial E(x;\xi)}{\partial U_{x_i x_j}}=x_i x_j, & for ~ U_{x_i x_j}\in \xi; \\
\frac{\partial E(x;\xi)}{\partial b_{x_i}}=x_i, & for ~ b_{x_i}\in \xi. \end{cases}$$

Thus, based on Equation \ref{eq:learnrulestochasticgradient}, the gradients for $U_{x_i x_j},b_{x_i}\in \xi$ are as follows:
\begin{equation}\label{eq:gradientloglikelihoodsbm}
\begin{cases}
  \frac{\partial \log p(\underline{x};\xi)}{\partial U_{x_i,x_j}} = \langle x_i x_j \rangle_0 - \langle x_i x_j \rangle_\infty = \eta_{ij}^2(q(x))-\eta_{ij}^2(p(x;\xi)), & for ~ U_{x_i x_j}\in \xi; \\
  \frac{\partial \log p(\underline{x};\xi)}{\partial b_{x_i}} = \langle x_i\rangle_0 - \langle x_i\rangle_\infty = \eta_{i}^1(q(x))-\eta_{i}^1(p(x;\xi)) , & for ~ b_{x_i}\in \xi.
\end{cases}\nonumber
\end{equation}
where $\langle\cdot \rangle_0$ denotes the average using the sample data and $\langle\cdot \rangle_\infty$ denotes the average with respect to the stationary distribution $p(x;\xi)$.

Since SBM defines an $e$-flat submanifold $M_{sbm}$ of $S$ \citep{amari92igbm}, then ML converges to the unique solution that gives the best approximation $p(x;\xi)\in M_{sbm}$ of $q(x)$. When ML converges, we have $\Delta \xi\rightarrow 0$ and hence $\frac{\partial \log p(\underline{x};\xi)}{\partial \xi}\rightarrow 0$. Thus, we can see that ML converges to stationary distribution $p(x;\xi)$ that preserves coordinates $[\eta_{i}^1, \eta_{ij}^2]$ of $q(x)$. This completes the proof.
\end{proof}
\vspace{-5mm}
\subsection{Proof of Proposition \ref{prop:hqtorbm}} \label{appendix:hptorbm}
\begin{proof}
Based on the definition of divergence in Equation (\ref{eq:fisherinformationdivergence}), the following relation holds:
\begin{eqnarray}\label{eq:divergenceqp}
      D[q(x,h),p(x,h)] &=& D[q(x)q(h|x),p(x)p(h|x)] \nonumber \\
      &=& E_{q(x,h)}[log\frac{q(x)}{p(x)}+log\frac{q(h|x)}{p(h|x)}] \nonumber \\
      &=& D[q(x),p(x)] + E_{q(x)}[D[q(h|x),p(h|x)]] \nonumber
\end{eqnarray}
where $E_{q(x,h)}[\cdot]$ and $E_{q(x)}[\cdot]$ are the expectations taken over $q(x,h)$ and $q(x)$ respectively.

Therefore, the minimum divergence between $p(x,h;\xi_p)$ and $H_q$ is given as:
\begin{eqnarray}\label{eq:mindivergencehb}
      D(H_q,p(x,h;\xi_p))&=& \min_{q(x,h;\xi_q)\in H_q} D[q(x,h;\xi_q),p(x,h;\xi_p)] \nonumber \\
      &=&\min_{\xi_q}\{ D[q(x),p(x)] + E_q(x)[D[q(h|x;\xi_q),p(h|x;\xi_p)]]\} \nonumber \\
      &=& D[q(x),p(x)] + \min_{\xi_q}\{ E_{q(x)}[D[q(h|x;\xi_q),p(h|x;\xi_p)]]\} \nonumber \\
      &=& D[q(x),p(x)] \nonumber
\end{eqnarray}
In the last equality, the expected divergence between $q(h|x;\xi_q)$ and $p(h|x;\xi_p)$ vanishes if and only if $\xi_q = \xi_p$. This completes the proof.\footnote{Note that a similar path of proof is also used in Theorem 7 of \citet{amari92igbm}, which is for the fully-connected BM. Here, Here, we reformulate the proof for RBM to derive the projection $\Gamma_H(p(x,h))$.}
\end{proof}
\vspace{-5mm}
\subsection{Proof of Proposition \ref{prop:fractionalmixbijective}} \label{appendix:fractionalmixbijective}
\begin{proof}
    First, we show that $[\zeta^{xh}]$ is determined given $[\theta]$. Since there is a one-to-one correspondence between coordinates $[\theta]$ and $[p]$, $[\zeta^{xh}]$ can be directly calculated from the $p$-coordinates corresponding to $[\theta]$ based on Equation (\ref{eq:etacoordinate}) and (\ref{eq:thetacoordinate}).

    Second, $[\theta]$ is determined by knowing $[\zeta^{xh}]$. The $\{\theta^{x_ix_j}_2, \theta^{h_ih_j}_2, \theta_{2+}\}$ part of $[\theta]$ are set to be equal to those in $[\zeta^{xh}]$.
    By fixing $\{\theta^{x_ix_j}_2, \theta^{h_ih_j}_2, \theta_{2+}\}$ and setting $\{\theta^{x_i}_1, \theta^{h_j}_1, \theta^{x_ih_j}_2\}$ free, we now have an $e$-flat smooth submanifold $R$.
    Assume that there exist two different distributions $P_1$ and $P_2$ with coordinates $[\theta]_1$ and $[\theta]_2$ that have the same mixed coordinates $[\zeta^{xh}]$. Thus both $P_1$ and $P_2$ belong to $R$ and share the same value of $\{\eta_{x_i}^1,\eta_{h_j}^1, \eta_{x_ih_j}^2\}$.
    Let $Q\in S_{xh}$ be a distribution whose projection on $R$ is $P_1$. Based on the Projection Theorem in \citet{Amari93}, $P_1$ is the unique closest point on $R$ to $Q$. Considering the minimization of the divergence $D[Q,P_R]$ between $P_R \in R$ and $Q$, the gradient vector of $D[Q,P_R]$ over the free parameters $\{\theta^{x_i}_1, \theta^{h_j}_1, \theta^{x_ih_j}_2\}$ at $P_1$, that is $\{\eta_{x_i}^1, \eta_{h_j}^1, \eta_{x_ih_j}^2\}_{P_R}-\{\eta_{x_i}^1, \eta_{h_j}^1, \eta_{x_ih_j}^2\}_{Q}$, equals to zero vector.
    Then, $P_2$ also has a zero-gradient vector and hence is the projection point of $Q$, since $P_2$ has the same $\{\eta_{x_i}^1,\eta_{h_j}^1, \eta_{x_ih_j}^2\}$ with $P_1$. However, since $R$ is $e$-flat \footnote{For more information about the concept of flatness, please refer to the book \citet{Amari93}.}, the projection of $Q$ on $R$ is unique, meaning that $P_1$ and $P_2$ are the same point. Therefore, there does not exist two different distributions $P_1$ and $P_2$ that have the same mixed coordinates $[\zeta^{xh}]$. This completes the proof.
\end{proof}
\vspace{-5mm}
\subsection{Proof of Proposition \ref{prop:projectionRBMcloseform}} \label{appendix:projectionRBMcloseform}
\begin{proof}
First, we prove the uniqueness of the projection $\Gamma_B(q)$. From the $[\theta]$ of RBM in Equation (\ref{eq:thetaforRBM}), $B$ is an $e$-flat smooth submanifold of $S_{xh}$. Thus the projection is unique. Note that

Second, in order to find the $p(x,h;\xi_p)\in B$ with parameter $\xi_p$ that minimizes the divergence between $q(x,h;\xi_q)$ and $B$, the gradient descent method iteratively adjusts $\xi_p$ in the negative gradient direction that the divergence $D[q,p(\xi_p)]$ decreases fastest:
\begin{equation}\label{eq:gradientdescentMinimumDivergence}
  \triangle \xi_p=-\lambda \frac{\partial D[q,p(\xi_p)]}{\partial \xi_p} \nonumber
\end{equation}
where $D[q,p(\xi_p)]$ is treated as a function of RBM's parameters $\xi_p$ and $\lambda$ is the learning rate. As shown in \citet{highorderBM}, the gradient descent method converges to the minimum of the divergence with proper choices of $\lambda$, and hence achieves the projection point $\Gamma_B(q)$.

Last, we show that the fractional mixed coordinates $[\zeta^{xh}]_{\Gamma_B(q)}$ in Equation (\ref{eq:mixedcoordinatenewProjectionRBM}) is exactly the convergence point of the learning for RBM .
We calculate the first-order derivative of $D[q,p(\xi_p)]$, w.r.t $\xi_p$, where $p(x,h;\xi_p)=\frac{1}{Z}\exp \{-E(x,h;\xi_p)\}$ is given in Equation (\ref{eq:probBM}).

For $W_{x_i,h_j}$ in $\xi_p$ (denoted as $W_{ij}$), we have:
\begin{equation}\label{eq:divergencederivative}
  \frac{\partial D[q,p(\xi_p)]}{\partial W_{ij}} = -\sum_{x,h} \frac{q(x,h)}{p(x,h;\xi_p)} \frac{\partial p(x,h;\xi_p)}{\partial W_{ij}}
\end{equation}
where the $\frac{\partial p(x,h)}{\partial W_{ij}}$ is calculated as follows:
\begin{eqnarray}\label{eq:calcuatefirstderivative}
  \frac{\partial p(x,h;\xi_p)}{\partial W_{ij}} &=& Z^{-1}\exp\{-E(x,h)\} \{\frac{\partial (-E(x,h))}{\partial W_{ij}}-\sum_{x,h} p(x,h)\frac{\partial (-E(x,h))}{\partial W_{ij}}\}
  \nonumber \\
  &=& p(x,h)\cdot \frac{\partial (-E(x,h))}{\partial W_{ij}} - p(x,h)\sum_{x,h} p(x,h)\cdot \frac{\partial (-E(x,h))}{\partial W_{ij}} \nonumber \\
  &=&  p(x,h)\cdot x_i h_j - p(x,h) \cdot \sum_{x,h}p(x,h) x_i h_j
\end{eqnarray}

Combining Equation (\ref{eq:divergencederivative}) and (\ref{eq:calcuatefirstderivative}), we have:
\begin{equation}\label{eq:firstderivativewij}
  \frac{\partial D[q,p(\xi_p)]}{\partial W_{ij}} = -\sum_{x,h}q(x,h)x_i h_j + \sum_{x,h}p(x,h) x_i h_j = \eta_{x_ih_j}^2(p) - \eta_{x_ih_j}^2(q)
\end{equation}
where $\eta_{x_ih_j}^2(p)$ and $\eta_{x_ih_j}^2(q)$ denotes the $2^{nd}$-order $\eta$-coordinates of $p$ and $q$ respectively.

Similarly, the first-order derivatives for biases $b_{x_i}$ and $d_{h_j}$ can be proved to be:
\begin{eqnarray}\label{eq:firstderivativeb}
  \frac{\partial D[q,p(\xi_p)]}{\partial b_{x_i}} = \eta^1_{x_i}(p)-\eta^1_{x_i}(q)
\end{eqnarray}
\begin{equation}\label{eq:firstderivatived}
  \!\!\!\!\!\!\!\!\!\!\!\!\frac{\partial D[q,p(\xi_p)]}{\partial d_{h_j}} = \eta^1_{h_j}(p)-\eta^1_{h_j}(q)
\end{equation}

Summarizing Equation (\ref{eq:firstderivativewij}), (\ref{eq:firstderivativeb}) and (\ref{eq:firstderivatived}), the first-order derivatives of $\xi_{p}^{I}$, where the indexing $I=\{x_i\}~or~\{h_j\}~or~\{x_i,h_j\}~(\forall~x_i\in x,~h_j\in h)$, can be calculated in the same way, that is subtracting $p$'s and $q$'s corresponding $\eta$-coordinates $\eta_I$:
\begin{equation}\label{eq:firstderivativealinone}
  \!\!\!\!\!\!\!\!\!\!\!\!\frac{\partial D[q,p(\xi_p)]}{\partial \xi_p^I} = \eta_I(p)-\eta_I(q) \nonumber
\end{equation}

When converging, we have $\frac{\partial D[q,p(\xi_p)]}{\partial \xi_p^I}\rightarrow 0$. Hence, the gradient descent method converges to the projection point $\Gamma_B(q)$ with a stationary distribution $p(x,h;\xi_p)$ that preserves coordinates $[\eta_{x_i}^1, \eta_{h_j}^1, \eta_{x_ih_j}^2]$ of $q(x,h;\xi_q)$.
This completes the proof.
\end{proof}
\vspace{-6mm}
\subsection{Proof of Proposition \ref{prop:monotonicdivergence}} \label{appendix:monotonicdivergence}
\begin{proof}
Since $p_i\in B$ and $p_{i+1}\in B$ is the projection of $q_{i+1}$, then $D[q_{i+1},p_i] \geq D[q_{i+1},p_{i+1}]$. Similarly, $q_{i+1}\in H_q$ and $q_{i+2}\in H_q$ is the projection of $p_{i+1}$, thus $D[q_{i+1},p_{i+1}] \geq D[q_{i+2},p_{i+1}]$. This completes the proof.
\end{proof}
\vspace{-9mm}
\bibliography{cif_jmlr_13}

\begin{thebibliography}{33}
\providecommand{\natexlab}[1]{#1}
\providecommand{\url}[1]{\texttt{#1}}
\expandafter\ifx\csname urlstyle\endcsname\relax
  \providecommand{\doi}[1]{doi: #1}\else
  \providecommand{\doi}{doi: \begingroup \urlstyle{rm}\Url}\fi

\bibitem[Ackley et~al.(1985)Ackley, Hinton, and Sejnowski]{ackley85BM}
D.~H. Ackley, G.~E. Hinton, and T.~J. Sejnowski.
\newblock {A learning algorithm for Boltzmann machines}.
\newblock \emph{Cognitive Science}, 9:\penalty0 147--169, 1985.

\bibitem[Albizuri et~al.(1995)Albizuri, d'Anjou, Grana, Torrealdea, and
  Hernandez]{highorderBM}
F.~X. Albizuri, A.~d'Anjou, M.~Grana, J.~Torrealdea, and M.~C. Hernandez.
\newblock The high-order boltzmann machine: learned distribution and topology.
\newblock \emph{Neural Networks, IEEE Transactions on}, 6\penalty0
  (3):\penalty0 767--770, 1995.

\bibitem[Amari and Nagaoka(1993)]{Amari93}
S.~Amari and H.~Nagaoka.
\newblock \emph{Methods of Information Geometry}.
\newblock Translations of Mathematical Monographs. Oxford University Press,
  1993.

\bibitem[Amari et~al.(1992)Amari, Kurata, and Nagaoka]{amari92igbm}
S.~Amari, K.~Kurata, and H.~Nagaoka.
\newblock Information geometry of boltzmann machines.
\newblock \emph{IEEE Transactions on Neural Networks}, 3\penalty0 (2):\penalty0
  260--271, 1992.

\bibitem[Bengio et~al.(2006)Bengio, Lamblin, Popovici, and
  Larochelle]{Bengio06greedy}
Y.~Bengio, P.~Lamblin, D.~Popovici, and H.~Larochelle.
\newblock Greedy layer-wise training of deep networks.
\newblock In \emph{NIPS'06}, pages 153--160, Vancouver, British Columbia,
  Canada, 2006.

\bibitem[Bengio et~al.(2013)Bengio, Courville, and
  Vincent]{Bengio12representationlearning}
Y.~Bengio, A.~C. Courville, and P.~Vincent.
\newblock Representation learning: A review and new perspectives.
\newblock \emph{IEEE Trans. Pattern Anal. Mach. Intell.}, 35\penalty0
  (8):\penalty0 1798--1828, 2013.

\bibitem[Carreira-Perpinan and Hinton(2005)]{hinton05CD}
M.~A. Carreira-Perpinan and G.~E. Hinton.
\newblock On contrastive divergence learning.
\newblock \emph{Artificial Intelligence and Statistics}, pages 17--24, 2005.

\bibitem[Chentsov(1982)]{chentsov80}
N.~N. Chentsov.
\newblock {Statistical Decision Rules and Optimal Inference}.
\newblock \emph{Translations of mathematical monographs}, 53:\penalty0
  477--493, 1982.

\bibitem[Collobert and Weston(2008)]{Collobert08nlp}
R.~Collobert and J.~Weston.
\newblock A unified architecture for natural language processing: Deep neural
  networks with multitask learning.
\newblock In \emph{ICML'08}, pages 160--167, 2008.

\bibitem[Dauphin and Bengio(2013)]{Bengio13big}
Y.~Dauphin and Y.~Bengio.
\newblock Big neural networks waste capacity.
\newblock \emph{arXiv CoRR}, abs/1301.3583, 2013.

\bibitem[Desjardins et~al.(2012)Desjardins, Courville, and
  Bengio]{jointtraindbm2012}
G.~Desjardins, A.~C. Courville, and Y.~Bengio.
\newblock On training deep boltzmann machines.
\newblock \emph{CoRR}, abs/1203.4416, 2012.

\bibitem[Duin and Pe\c{e}kalska(2006)]{samplecomplexity}
R.~W. Duin and E.~Pe\c{e}kalska.
\newblock Object representation, sample size, and data set complexity.
\newblock In \emph{Data Complexity in Pattern Recognition}, pages 25--58.
  Springer London, 2006.

\bibitem[Erhan et~al.(2010)Erhan, Bengio, Courville, Manzagol, Vincent, and
  Bengio]{bengio2010why}
D.~Erhan, Y.~Bengio, A.~Courville, P-A. Manzagol, P.~Vincent, and S.~Bengio.
\newblock Why does unsupervised pre-training help deep learning?
\newblock \emph{Journal of Machine Learning Research}, 11:\penalty0 625--660,
  2010.

\bibitem[Fodor(2002)]{Fodor02asurvey}
I.~Fodor.
\newblock A survey of dimension reduction techniques.
\newblock Technical report, Center for Applied Scienti¡¥c Computing, Lawrence
  Livermore National Laboratory, United States, 2002.

\bibitem[Frieden(2004)]{scienceoffisherinformation}
B.~R. Frieden.
\newblock \emph{Science from Fisher Information: A Unification}.
\newblock Cambridge University Press, 2004.

\bibitem[Gilks et~al.(1996)Gilks, Richardson, and Spiegelhalter]{gilk96mcmc}
W.~R. Gilks, S.~Richardson, and D.J. Spiegelhalter.
\newblock \emph{Markov Chain Monte Carlo in Practice}.
\newblock Chapman and Hall/CRC, 1996.

\bibitem[Hinton(2002)]{Hinton02cd}
G.~E. Hinton.
\newblock Training products of experts by minimizing contrastive divergence.
\newblock \emph{Neural Comput.}, 14\penalty0 (8):\penalty0 1771--1800, 2002.

\bibitem[Hinton and Salakhutdinov(2006)]{hinton06}
G.~E. Hinton and R.~R. Salakhutdinov.
\newblock Reducing the dimensionality of data with neural networks.
\newblock \emph{Science}, 313\penalty0 (5786):\penalty0 504--507, 2006.

\bibitem[Hou et~al.(2013)Hou, Zhao, Song, and Li]{hou2013}
Y.~Hou, X.~Zhao, D.~Song, and W.~Li.
\newblock Mining pure high-order word associations via information geometry for
  information retrieval.
\newblock \emph{ACM TOIS}, 31(3), 2013.

\bibitem[Jolliffe(2002)]{abdi10pcareview}
I.~T. Jolliffe.
\newblock \emph{Principal component analysis}.
\newblock Springer Series in Statistics, New York, US, 2002.

\bibitem[Kass(1989)]{kass89asymptotic}
R.~E. Kass.
\newblock {The geometry of asymptotic inference}.
\newblock \emph{Statistical Science}, 4\penalty0 (3):\penalty0 188--219, 1989.

\bibitem[Lee and Verleysen(2007)]{lee07nonlinear}
John~A. Lee and Michel Verleysen, editors.
\newblock \emph{Nonlinear Dimensionality Reduction}.
\newblock Springer, New York, US, 2007.

\bibitem[Nakahara and Amari(2002)]{Nakahara02informationgeometric}
H.~Nakahara and S.~Amari.
\newblock Information geometric measure for neural spikes.
\newblock \emph{Neural Computation}, 14:\penalty0 2269--2316, 2002.

\bibitem[Osindero and Hinton(2007)]{osindero07image}
S.~Osindero and G.~E. Hinton.
\newblock Modeling image patches with a directed hierarchy of markov random
  field.
\newblock In \emph{NIPS'07}, pages 1121--1128, 2007.

\bibitem[Ranzato et~al.(2006)Ranzato, Poultney, Chopra, and
  LeCun]{ranzato07autoencoder}
M.~Ranzato, C.~Poultney, S.~Chopra, and Y.~LeCun.
\newblock Efficient learning of sparse representations with an energy-based
  model.
\newblock In \emph{NIPS'06}, pages 1137--1144, 2006.

\bibitem[Rao(1945)]{rao45attainable}
C.~R. Rao.
\newblock {Information and the accuracy attainable in the estimation of
  statistical parameters}.
\newblock \emph{Bulletin of calcutta mathematics society}, 37:\penalty0 81--89,
  1945.

\bibitem[Rifai et~al.(2011)Rifai, Vincent, Muller, Glorot, and
  Bengio]{Rifai11contractive}
S.~Rifai, P.~Vincent, X.~Muller, X.~Glorot, and Y.~Bengio.
\newblock Contractive auto-encoders: Explicit invariance during feature
  extraction.
\newblock In \emph{ICML'11}, pages 833--840, 2011.

\bibitem[Salakhutdinov and Hinton(2007{\natexlab{a}})]{Salakhutdinov07kernel}
R.~Salakhutdinov and G.~E. Hinton.
\newblock Using deep belief nets to learn covariance kernels for gaussian
  processes.
\newblock In \emph{NIPS'07}, pages 1249--1256, 2007{\natexlab{a}}.

\bibitem[Salakhutdinov and Hinton(2007{\natexlab{b}})]{Salakhutdinov07sigir}
R.~Salakhutdinov and G.~E. Hinton.
\newblock Semantic hashing.
\newblock In \emph{Workshop SIGIR'07}, 2007{\natexlab{b}}.

\bibitem[Salakhutdinov and Hinton(2012)]{Salakhutdinov2012}
R.~Salakhutdinov and G.~E. Hinton.
\newblock An efficient learning procedure for deep boltzmann machines.
\newblock \emph{Neural Computing}, 24\penalty0 (8):\penalty0 1967--2006, 2012.

\bibitem[Vincent et~al.(2010)Vincent, Larochelle, Lajoie, Bengio, and
  Manzagol]{Vincent2010Denoising}
P.~Vincent, H.~Larochelle, I.~Lajoie, Y.~Bengio, and P-A. Manzagol.
\newblock Stacked denoising autoencoders: Learning useful representations in a
  deep network with a local denoising criterion.
\newblock \emph{J. Mach. Learn. Res.}, 11:\penalty0 3371--3408, 2010.

\bibitem[Wheeler(1994)]{wheeler94}
J.~A. Wheeler.
\newblock Time today.
\newblock In \emph{Physical Origins of Time Asymmetry}, pages 1--29. Cambridge
  University Press, 1994.

\bibitem[Younes(1998)]{Younes98onthe}
L.~Younes.
\newblock On the convergence of markovian stochastic algorithms with rapidly
  decreasing ergodicity rates.
\newblock In \emph{Stochastics and Stochastics Models}, pages 177--228, 1998.

\end{thebibliography}

\end{document}